%%%%%%%% ICML 2020 EXAMPLE LATEX SUBMISSION FILE %%%%%%%%%%%%%%%%%

\documentclass{article}  
   
% Recommended, but optional, packages for figures and better typesetting:
\usepackage{microtype}
\usepackage{graphicx}
\usepackage{subfigure}
\usepackage{booktabs} % for professional tables

% hyperref makes hyperlinks in the resulting PDF.
% If your build breaks (sometimes temporarily if a hyperlink spans a page)
% please comment out the following usepackage line and replace
% \usepackage{icml2020} with \usepackage[nohyperref]{icml2020} above.
\usepackage{hyperref}

% Attempt to make hyperref and algorithmic work together better:

% Use the following line for the initial blind version submitted for review:
%\usepackage{icml2020}

% If accepted, instead use the following line for the camera-ready submission:
\usepackage[accepted]{icml2020}

%\setcitestyle{numbers,square}

\usepackage[mathscr]{eucal}
\usepackage[cmex10]{amsmath}
\usepackage{epsfig,epsf,psfrag}
\usepackage{amssymb,amsmath,amsthm,amsfonts,latexsym}
\usepackage{amsmath,graphicx,bm,xcolor,overpic}
\usepackage{fixltx2e}%ordering of single and double column floats
\usepackage{array}%array and tabular environments
\usepackage{verbatim}
\usepackage{bm,bbm}
\usepackage{verbatim}
\usepackage{textcomp}
\usepackage{mathrsfs}
\usepackage{epstopdf}

\usepackage{url}
 %%% break url with `-'

%\usepackage[hyphens]{url}
%\usepackage[hidelinks]{hyperref}
%\hypersetup{breaklinks=true}

%\usepackage{algorithm}
%\usepackage{algpseudocode}
%\makeatletter

\makeatother

%to restate Theorems
\usepackage{thmtools}
\usepackage{thm-restate}

\numberwithin{equation}{section}

\numberwithin{table}{section}
\numberwithin{figure}{section}
%\counterwithin{subfigure}{figure}
 
%% To produce a tilde in url
\catcode`~=11 \def\UrlSpecials{\do\~{\kern -.15em\lower .7ex\hbox{~}\kern .04em}} \catcode`~=13 

%% for table
\usepackage{longtable}
\usepackage{setspace}

\newcommand{\rmKL}{\mathrm{KL}}

\newcommand{\bmw}{{\bm w}}

\newcommand{\bmO}{{\bm O}}
\newcommand{\bmu}{{\bm u}}
\newcommand{\bmv}{{\bm v}}

% Calligraphic stuff

\newcommand{\calE}{\mathcal{E}}
\newcommand{\calF}{\mathcal{F}}

\newcommand{\calT}{\mathcal{T}}

% Boldface stuff

% Roman stuff

\newcommand{\rmd}{\mathrm{d}}

% Numbers bb font

\newcommand{\bbE}{\mathbb{E}}

\newcommand{\bbI}{\mathbb{I}}

\newcommand{\bbN}{\mathbb{N}}

\newcommand{\bbP}{\mathbb{P}}

\newcommand{\bbR}{\mathbb{R}}

\newcommand{\bbT}{\mathbb{T}}

% Mathfrak font

% Mathscr

% San serif font
\DeclareMathAlphabet{\mathbsf}{OT1}{cmss}{bx}{n}
\DeclareMathAlphabet{\mathssf}{OT1}{cmss}{m}{sl}% slanted sans serif

% define some useful uppercase Greek letters in regular and bold sf
\DeclareSymbolFont{bsfletters}{OT1}{cmss}{bx}{n}  
\DeclareSymbolFont{ssfletters}{OT1}{cmss}{m}{n}
\DeclareMathSymbol{\bsfGamma}{0}{bsfletters}{'000}
\DeclareMathSymbol{\ssfGamma}{0}{ssfletters}{'000}
\DeclareMathSymbol{\bsfDelta}{0}{bsfletters}{'001}
\DeclareMathSymbol{\ssfDelta}{0}{ssfletters}{'001}
\DeclareMathSymbol{\bsfTheta}{0}{bsfletters}{'002}
\DeclareMathSymbol{\ssfTheta}{0}{ssfletters}{'002}
\DeclareMathSymbol{\bsfLambda}{0}{bsfletters}{'003}
\DeclareMathSymbol{\ssfLambda}{0}{ssfletters}{'003}
\DeclareMathSymbol{\bsfXi}{0}{bsfletters}{'004}
\DeclareMathSymbol{\ssfXi}{0}{ssfletters}{'004}
\DeclareMathSymbol{\bsfPi}{0}{bsfletters}{'005}
\DeclareMathSymbol{\ssfPi}{0}{ssfletters}{'005}
\DeclareMathSymbol{\bsfSigma}{0}{bsfletters}{'006}
\DeclareMathSymbol{\ssfSigma}{0}{ssfletters}{'006}
\DeclareMathSymbol{\bsfUpsilon}{0}{bsfletters}{'007}
\DeclareMathSymbol{\ssfUpsilon}{0}{ssfletters}{'007}
\DeclareMathSymbol{\bsfPhi}{0}{bsfletters}{'010}
\DeclareMathSymbol{\ssfPhi}{0}{ssfletters}{'010}
\DeclareMathSymbol{\bsfPsi}{0}{bsfletters}{'011}
\DeclareMathSymbol{\ssfPsi}{0}{ssfletters}{'011}
\DeclareMathSymbol{\bsfOmega}{0}{bsfletters}{'012}
\DeclareMathSymbol{\ssfOmega}{0}{ssfletters}{'012}

% Hat and Tilde

\newcommand{\hati}{\hat{i}}

\newcommand{\hatk}{\hat{k}}
\newcommand{\hatK}{\hat{K}}

\newcommand{\hatw}{\hat{w}}

\newcommand{\barw}{\bar{w}}

\newcommand{\barT}{\bar{T}}

% Bold greek

%\newcommand{\bmu}{\bm{\mu}}

% functional dot

% exponential

% iid

% convergence

% Inequalities

\DeclareMathOperator*{\argmax}{arg\,max}

%binary entropy

\newtheorem{theorem}{Theorem}[section]
\newtheorem{lemma}[theorem]{Lemma}

\newtheorem{corollary}[theorem]{Corollary}

\newtheorem{remark}[theorem]{Remark}

\newtheorem{assumption}[theorem]{Assumption}

\usepackage[skip=0.1in]{caption}
\setlength{\intextsep}{.1in} % set space above and below in-line float

\allowdisplaybreaks

% If accepted, instead use the following line for the camera-ready submission:
%\usepackage[accepted]{icml2019}

% The \icmltitle you define below is probably too long as a header.
% Therefore, a short form for the running title is supplied here:
\icmltitlerunning{Best Arm Identification for Cascading Bandits in the Fixed Confidence Setting}

\begin{document}

\twocolumn[
\icmltitle{Best Arm Identification for Cascading Bandits in the Fixed Confidence Setting}

% It is OKAY to include author information, even for blind
% submissions: the style file will automatically remove it for you
% unless you've provided the [accepted] option to the icml2020
% package.

% List of affiliations: The first argument should be a (short)
% identifier you will use later to specify author affiliations
% Academic affiliations should list Department, University, City, Region, Country
% Industry affiliations should list Company, City, Region, Country

% You can specify symbols, otherwise they are numbered in order.
% Ideally, you should not use this facility. Affiliations will be numbered
% in order of appearance and this is the preferred way.
\icmlsetsymbol{equal}{*}

%\begin{icmlauthorlist}
%\icmlauthor{Anonymous Authors}{institute}
%\end{icmlauthorlist}
%
%
%\icmlaffiliation{institute}{Anonymous Institution, Anonymous City, Anonymous Region, Anonymous Country}
%\icmlcorrespondingauthor{Anonymous Author}{anon.email@domain.com}

\begin{icmlauthorlist}
\icmlauthor{Zixin Zhong}{math}     %{nus}%
\icmlauthor{Wang Chi Cheung}{ise,iora}  %{nus}%
\icmlauthor{Vincent Y.~F.~Tan}{math,iora,ece}       %{nus}%
\end{icmlauthorlist}

\icmlaffiliation{math}{Department of Mathematics, National University of Singapore, Singapore}
\icmlaffiliation{ise}{Department of Industrial Systems and Management, National University of Singapore, Singapore}
\icmlaffiliation{iora}{Institute of Operations Research and Analytics, National University of Singapore, Singapore}
\icmlaffiliation{ece}{Department of Electrical and Computer Engineering, National University of Singapore, Singapore}
%\icmlaffiliation{nus}{National University of Singapore}
\icmlcorrespondingauthor{Zixin Zhong}{zixin.zhong@u.nus.edu}
\icmlcorrespondingauthor{Wang Chi Cheung}{isecwc@nus.edu.sg}
\icmlcorrespondingauthor{Vincent Y.~F.~Tan}{vtan@nus.edu.sg}

% You may provide any keywords that you
% find helpful for describing your paper; these are used to populate
% the "keywords" metadata in the PDF but will not be shown in the document
\icmlkeywords{Machine Learning, ICML}

\vskip 0.3in
]

% this must go after the closing bracket ] following \twocolumn[ ...

% This command actually creates the footnote in the first column
% listing the affiliations and the copyright notice.
% The command takes one argument, which is text to display at the start of the footnote.
% The \icmlEqualContribution command is standard text for equal contribution.
% Remove it (just {}) if you do not need this facility.

\printAffiliationsAndNotice{}  % leave blank if no need to mention equal contribution
%\printAffiliationsAndNotice{\icmlEqualContribution} % otherwise use the standard text.

\begin{abstract}
% 1
We design and analyze {\sc CascadeBAI}, an algorithm for finding the best set of $K$ items, also called an arm, within the framework of cascading bandits.
% 2
An upper bound on the time complexity of {\sc CascadeBAI} is derived by overcoming a crucial analytical challenge, namely, that of probabilistically estimating the amount of available feedback at each step.
% 3
To do so, we define a new class of random variables (r.v.'s) which we term as left-sided sub-Gaussian r.v.'s; these are r.v.'s whose cumulant generating functions~(CGFs) can be bounded by a quadratic only for non-positive arguments of the CGFs.
% 4
This enables the application of a sufficiently tight Bernstein-type concentration inequality.
% 5
We show, through the derivation of a lower bound on the time complexity, that the performance of {\sc CascadeBAI} is optimal in some practical regimes.
% 6
Finally, extensive numerical simulations corroborate the efficacy of {\sc CascadeBAI} as well as the tightness of our upper bound on its time complexity.
\end{abstract}

\section{Introduction} 
Online recommender systems seek to  recommend a small list of items (such as movies or hotels) to users based on a larger ground set $[L] := \{1, \ldots, L\}$ of items. In this paper, we consider the {\em cascading bandits} model~\citep{Craswell08,kveton2015cascading}, which is widely used in information retrieval and online advertising.
Upon seeing the chosen list, the user looks at the items sequentially. She {\em clicks} on an item if she is {\em attracted} by it and skips to the next one otherwise. 
This process stops when she clicks on one item in the list or 
if no item is clicked, it is deemed that she is {\em not attracted} by {\em any} of the items.
The items that are in the ground set but not in the chosen list and those in the list that come after the attractive one are {\em unobserved}.

Each item $i\in [L]$, with a certain {\em click probability} $w(i)\in [0, 1]$ which is {\em unknown} to the learning agent, attracts the user independently of other items. Under this assumption, the optimal solution is the list of items with largest $w(i)$'s. 
Based on the chosen lists and obtained feedback in previous steps, the agent tries to learn the click probabilities (explore the combinatorial space) in order to find the optimal list with high probability in as few time steps as possible.

\textbf{Main Contributions.} 
Given $\delta >0$, a learning agent aims to find a list of  optimal items of size $K$ with probability at least $1-\delta$ in minimal time steps. 
To achieve a greater generality, we provide results for identifying a list of near-optimal items~\citep{even2002pac,mannor2004sample,kalyanakrishnan2012pac}, where the notion of near-optimality is precisely defined in Section~\ref{sec:prob_setup}.
First, we design {\sc CascadeBAI($\epsilon,\delta,K$)} and derive an upper bound on its time complexity.  
Second, we establish a lower bound on the time complexity of \emph{any} best arm identification~(BAI) algorithm in cascading bandits,
which implies that the performance of {\sc CascadeBAI($\epsilon,\delta,K$)} is optimal in some regimes.
Finally, our extensive numerical results corroborate the efficacy of {\sc CascadeBAI($\epsilon,\delta,K$)} and the tightness of our upper bound on its time complexity.

Different from combinatorial semi-bandit settings, the amount of feedback in cascading bandits is, in general, random. 
The analysis of cascading bandits involves the unique challenge in adapting to the variation of the amount of feedback across time.
To this end, we define a random variable~(r.v.) that describes the feedback from the user at a step and bound its expectation.
We define a novel class of r.v.'s, known as \emph{left-sided sub-Gaussian}~(LSG) r.v.'s,
and apply a concentration inequality to  quantify the variation of the amount of feedback.

Bernstein-type concentration inequalities are applied in many stochastic bandit problems and indicate that sub-Gaussian~(SG) distributions possess light tails~\citep{audibert2010best}. 
Since it turns out that we only need to analyze a one-sided tail in this work, it suffices to consider a one-sided SG condition, which motivates the definition of LSG. We also provide a general estimate of a certain corresponding parameter in Theorem~\ref{thm:sec_moment_to_left_sub_gauss}, which is crucial for the utilization of the inequality.
This may be of independent interest. Summary and future work are deferred to Appendix~\ref{appdix:future_work}.%

\textbf{Literature review.}  
In a stochastic combinatorial bandit (SCB) model, an arm corresponds to a list of items in the ground set, and each item is associated with an r.v. at each time step. The corresponding reward depends on the constituent items' realizations.
We first review the related works on the BAI problem, in which a learning agent aims to identify an {\em optimal arm}, i.e., a list of optimal items.
(i) Given 
$\delta>0$, a learning agent aims to identify an optimal arm with probability $1-\delta$ in minimal time steps~\citep{jamieson2014best,kalyanakrishnan2012pac}. 
(ii) Given
$B>0$, an agent aims to maximize the probability of identifying an optimal arm in $B$ steps~\citep{auer2002finite,audibert2010best,carpentier2016tight}.
These two settings are known as the \emph{fixed-confidence} and
\emph{fixed-budget} setting respectively.
Under the fixed-confidence setting, 
the early works aim to identify only one optimal item~\citep{audibert2010best} and the later ones aim to find an optimal arm~\citep{NIPS2014_5433,rejwan2019combinatorial}. 
Besides, \citet{mannor2004sample,kaufmann2016complexity,pmlr-v65-agarwal17c} provide problem-dependent lower bounds on the time complexity
when \citet{kalyanakrishnan2012pac} establishes a problem-independent one.
All these existing works are under the {\em semi-bandit feedback} setting, where an agent observes realizations of all pulled items.

Secondly, we review the relevant works on the \emph{regret minimization}~(RM) problem,
in which an agent aims to maximize his overall reward, or equivalently to minimize the so-called \emph{cumulative regret}.
Under the semi-bandit feedback setting, this problem 
has been extensively studied by~\citet{lai1985asymptotically, AnantharamVW87,KvetonWAEE14,LiChuLangford10,QinCZ14}.
Moreover, motivated by numerous applications in clinical analysis and online advertisement, some researchers consider SCB models with \emph{partial feedback}, where an agent observes realizations of only a portion of pulled items.
One prime model that incorporates the partial feedback is cascading bandits~{\citep{Craswell08,kveton2015cascading}. 
Recently,  \citet{KvetonWAS15b,LiWZC16,ZongNSNWK16,wang2017improving,
cheung2019thompson} studied this model and derived various regret bounds.% in various settings.

When the RM problem is studied with both semi-bandit and partial feedback, the BAI problem has only been studied in the semi-bandit feedback setting thus far.
Despite existing works, analysis of the BAI problem in the more challenging case of partial feedback is yet to be done.
Our work fills in this gap in the literature by studying the fixed-confidence setting in cascading bandits, and our analysis provides tools for handling the statistical dependence between the amount of feedback and that of time steps in the cascading bandit setting.

\section{Problem Setup}
\label{sec:prob_setup}
	For brevity, we denote the set $\{1,\ldots,n\}$ by $[n]$ for any $n\in \bbN$, and the set of all $m$-permutations of $[n]$, i.e., all ordered $m$-subsets of $[n]$, by $[n]^{(m)}$ for any $m\le n $.
    Let there be $L\in\mathbb{N}$ ground items, contained in $[L]$. 
    Each item $i\in [L]$ is associated with a {\em weight} $w(i)\in [0, 1]$, signifying the item's click probability.
We define an {\em arm} as a list of $K\le L$ items in $[L]^{(K)} $. 
At each time step $t $, the agent pulls an arm $S_t: = (i^t_1, \ldots, i^t_K)\in [L]^{(K)} $. 
Then the user examines the items from $i_1^t$ to $i_K^t$ one at a time, until one item is clicked or all items are examined. 
For each item $i\in [L]$, $W_t(i) \sim \text{Bern}(w(i) )$ are i.i.d. across time. The agent observes $W_t(i)=1$ iff the user clicks on $i$.
    The {\em feedback} $\bmO_t$ from the user is defined as a vector in $\{0, 1, \star\}^K$, where $0, 1, \star$ represents observing no click, observing a click and no observation respectively. 
For example, if $K=4$ and the user clicks on the third item at time step $2$, we have $\bmO_2 = \{ 0,0,1,\star \}$. 
Clearly, there is a one-to-one mapping from $\bmO_t$ to the integer
{\setlength\abovedisplayskip{.5em}
    \setlength\belowdisplayskip{.4em}
 \begin{align*}  
    \tilde{k}_t := \min\{1\leq k\leq K: W_t(i^t_k) = 1 \},   
\end{align*}
where we assume $\min \emptyset = \infty$.
If $\tilde{k}_t < \infty$~(i.e., $\bmO_t $ is not the all-zero vector), the agent observes $W_t(i^t_k) = 0$ for $1\leq k < \tilde{k}_t$, and also observes $W_t(i^t_{\tilde{k}_t} ) = 1$, but does not observe $W_t(i^t_k)$ for $k> \tilde{k}_t$. Otherwise, we have $\tilde{k}_t = \infty$~(i.e., $\bmO_t $ is the all-zero vector), then the agent observes $W_t(i^t_k) = 0$ for $1\leq k \leq K$.
We denote $\barw(i) =1-w(i)$, 
    $\bmw =( w(1), \ldots, w(L) )$, and the probability law (resp. the expectation) of the process $( \{ W_t(i) \}_{i,t} )$ by $\bbP_\bmw$ (resp. $\bbE_\bmw$).

    Without loss of generality, we assume 
    $w^*: = w(1) \ge w(2) \ge \ldots \geq w(L) := w'$.
We say item $i$ is {\em optimal} if $w(i)\ge w(K)$. We assume $w(K) \! > \! w(K \! + \! 1)$ to ensure there are exactly $K$ optimal items. 
Next, we say item $i$ is {\em $\epsilon$-optimal}~($\epsilon\ge 0$) if $w(i)\ge w(K)-\epsilon$ and set $K'_\epsilon:=\max\{i\in[L]: w(i)\ge w(K)-\epsilon \}$. 
Then $[K'_\epsilon]$ is the set of all $\epsilon$-optimal items,
$[K]^{(K)}$ is the set of all {\em optimal arms} $S^*$ (up to permutation), and $[K'_\epsilon]^{(K)}$ is the set of all {\em $\epsilon$-optimal arms}.

To identify an $\epsilon$-optimal arm, an agent uses an \emph{algorithm} $\pi$ that decides which arms to pull, when to stop pulling, and which arm $\hat{S}^\pi$ to choose eventually. 
A deterministic and non-anticipatory online algorithm consists in a triple $\pi: = ( (\pi_t )_t , \calT^\pi,  \phi^\pi ) $ in which:
\\ $\bullet$ the \emph{sampling rule} $\pi_t$ determines, based on the observation history, the arm $S_t^\pi$ to pull at time step $t$; in other words, $S_t^\pi$ is $\calF_{t-1}$-measurable, with $\calF_t: = \sigma( S_1^\pi, \bmO_1^\pi, \ldots , S_t^\pi, \bmO_t^\pi  )$;
\\ $\bullet$ the \emph{stopping rule} determines the termination of the algorithm, which leads to a \emph{stopping time} $\calT^\pi$ with respect to $(\calF_t)_{t\in \bbN}$ satisfying $\bbP(\calT^\pi < +\infty) = 1$;
\\ $\bullet$ the \emph{recommendation rule} $\phi^\pi$ chooses an arm $\hat{S}^\pi$, which is $\calF_{\calT^\pi}$-measurable.% random subset of $[L]$ of size $K$.
\\
We define the \emph{time complexity} of $\pi$ as $ \calT^\pi$.
Under the fixed-confidence setting, a risk parameter~(failure probability) $\delta\in (0,1)$ is fixed. We say an algorithm $\pi $ is {\em$(\epsilon,\delta,K)$-PAC~(probably approximately correct)} if 
$\bbP_\bmw ( \hat{S}^\pi \!  \subset \!  [K'_\epsilon] ) \ge 1-\delta$. 
The goal is to obtain an $(\epsilon,\delta,K)$-PAC algorithm $\pi$ such that $\bbE_\bmw \calT^\pi$ is small and $\calT^\pi$ is small with high probability. 
We also define the {\em optimal expected time complexity} over all $(\epsilon,\delta,K)$-PAC algorithms as
\begin{align*}
	\bbT^*(\bmw,\epsilon,\delta,K): = \inf \{  \bbE_\bmw \mathcal{T}^\pi: \pi \text{ is } (\epsilon,\delta,K)\text{-PAC}  \}.
\end{align*}
This measures the hardness of the problem. 
We abbreviate $( 0 , \delta , K)$-PAC as $(\delta,K)$-PAC,
$\bbE_\bmw$ as $\bbE$, $\bbP_\bmw$ as $\bbP$, $K'_\epsilon$ as $K'$, $\calT^\pi $ as $\calT$, $\bbT^*(\bmw,\epsilon,\delta,K) $ as $\bbT^*$ when there is no ambiguity.}

\section{Algorithm}
\label{sec:alg}
\setlength{\textfloatsep}{.5cm}
    \begin{algorithm}[ht]
	  \caption{ \sc CascadeBAI($\epsilon,\delta,K$) }\label{alg:bai_cas_conf}
	    \begin{algorithmic}[1]
		  \STATE Input: risk $\delta $, tolerance $\epsilon $, size of arm $K$.
		  \STATE Initialize $t = 1, D_{1} =[L], A_{1} = \emptyset, R_{1} = \emptyset, T_{0}(i) = 0, \hat{w}_0(i) = 0,  \forall i$.
		  \WHILE{$D_t \neq \emptyset$, $|A_t|<K$ and $|R_t|<L-K$ }
		  	\STATE Sort the items in $D_t$ according to the number of previous observations: $T_{t-1}(i_1^t) \le \ldots \le T_{t-1}(i_{|D_t|}^t)$.%
		  	\IF{$|D_t| \ge K$} 
		  		\STATE {pull arm $S_t = (i^t_1, i^t_2, \ldots, i^t_K)$. } 
		  	  \ELSE 
		  	    \STATE{pull arm $S_t = (i^t_1, i^t_2, \ldots, i^t_{|D_t|}, S'_t)$, where $S'_t$ is any $(K-|D_t|)$-subset of $A_t \bigcup R_t$. } 
		  	\ENDIF
			\STATE Observe click $\tilde{k}_t \in \{ 1, \ldots,K, \infty \}$.
			\STATE For each $i\in D_t$, if $W_t(i)$ is observed, set\hphantom{a a a a a a } 
				$\hphantom{a}
				 \hat{w}_{t}(i) = \frac{T_{t-1}(i)\hat{w}_{t-1}(i) + W_t(i)}{T_{t-1}(i) + 1},\
				  T_{t}(i) = T_{t-1}(i)  +1$. \hspace{3em}
				Otherwise, $\hat{w}_{t}(i) = \hat{w}_{t-1}(i), T_{t}(i) = T_{t-1}(i)$.
		  	\STATE $k_t = K - |A_t|$.
		  	\STATE Calculate UCBs and LCBs for each $i   \in   D_t$: \\
		  	$ \hspace{3em} U_t(i, \delta) = \hat{w}_t(i) + C_t( i, \delta )  ,$\\
		  	$ \hspace{3em} L_t(i, \delta) = \hat{w}_t(i) - C_t( i, \delta )  $. \vspace{.3em}		  	
		  	\STATE $j^* =   \argmax _{j \in D_{t}}^{ (k_{t}+1 )} \hatw_{t} , j' = \argmax _{j \in D_{t}}^{ (k_{t} )} \hatw_{t}  $. \vspace{.3em}
		  	\STATE $A_{t+1} = A_{t} \cup \{i \in D_{t} ~|~ L_{t}(i, \delta)>U_{t}(j^*, \delta) - \epsilon  \}$. \vspace{.3em}
		  	\STATE $R_{t+1} = R_{t} \cup \{i \in D_{t} ~|~ U_{t}(i, \delta)< L_{t}(j', \delta) - \epsilon \}$. \vspace{.3em}
		  	\STATE $ D_{t+1} = D_{t} / (R_{t+1} \bigcup A_{t+1} )$.
		  	\STATE $t=t+1$. 
		  \ENDWHILE
		  \STATE If $|A_t|=K$, output $A_t$; otherwise, output the first $K$ items that entered $A_t$.
		\end{algorithmic}
	\end{algorithm}

Our algorithm {\sc CascadeBAI($\epsilon,\delta,K$)} is presented in Algorithm~\ref{alg:bai_cas_conf}. 
Intuitively, to identify an $\epsilon$-optimal arm, an agent needs to learn the true weights $w(i)$ of a number of items in $[L]$ by exploring the combinatorial arm space.

At each step $t$, we classify an item as {\em surviving}, {\em accepted} or {\em rejected}.
Initially, all items are surviving and belong to the {\em survival set} $D_t$. %
Over time, an item may be eliminated from  $D_t$, in which case we say that it is {\em identified}. 
Once an item is identified, it can be moved to either the {\em accept set} $A_t$ if it is deemed to be $\epsilon$-optimal, or the {\em reject set} $R_t$ otherwise. %
(i) At step $1$, all items are in $D_1$. 
(ii) At each step $t$, the agent selects $\min\{K,|D_t|\}$ surviving items with the least number of previous observations, $T_t(i)$'s, pulls them in {\em ascending} order of the $T_t(i)$, and gets cascading feedback from the user in the form of the $\tilde{k}_t$'s.
Similarly to a Racing algorithm~\citep{even2002pac,maron1994hoeffding,heidrich2009hoeffding,jun2016top},
 this design of $S_t$ increases the $T_t(i)$'s of all surviving items almost uniformly and avoids the wastage of time steps. % 
(iii) Next, 
we maintain upper and lower confidence bounds~(UCB, LCB) across time to facilitate the identification of items as in Lines 13--17.}
The confidence radius is defined as follows:%
{\setlength\abovedisplayskip{.5em}
    \setlength\belowdisplayskip{.4em}
\begin{align} %\label{eq:def_conv_rad}
	& C_t(i, \delta): = 4\sqrt{ \frac{ \log  ( \log _{2}[2 T_t(i) ] / \rho(\delta) ) }{ T_t(i) } } ,\ 
	\rho(\delta): = \sqrt{ \frac{\delta}{12L} }. \nonumber %\sqrt{\delta/(12L)}. \nonumber
\end{align}
We set $C_t(i, \delta)=+\infty$ when $T_t(i)=0$.
(iv) Lastly, the algorithm stops once 
$D_t=\emptyset$, $|A_t|\ge K$ or $|R_t|\ge L-K$.

\section{Main results}
We develop an upper bound on the time complexity of {\sc CascadeBAI($\epsilon,\delta,K$)} and a lower bound on the expected time complexity of any $(\delta,K)$-PAC algorithm. We also discuss the gap between the bounds.
We use $c_1,c_2,\ldots$ to denote finite and positive universal constants whose values may vary from line to line.
The proofs are sketched in Section~\ref{sec:proof_sketch} and more details are provided in Appendix~\ref{appdix:pf_main_results}.

\subsection{Upper bound}
\label{sec:main_result_up}
The gaps between the click probabilities determine the hardness to identify the items. The gaps are defined as
{\setlength\abovedisplayskip{.5em}
    \setlength\belowdisplayskip{.4em}
\begin{align*}
 	&
 	{\Delta}_i: = \left\{
               \begin{array}{ll}
                    w(i)-w(K+1)                       & 1 \le i \le K \\
                    w(K) - w(i)                          &  K < i \le L 
               \end{array}
               \right.
    ,\\ &
               \bar{\Delta}_i: = \left\{
               \begin{array}{ll}
                    \Delta_i + \epsilon                        & 1 \le i \le K \\
                    \Delta_K - \Delta_i + \epsilon    & K < i \le K' \\
                    \Delta_i - \epsilon                         &  K' <i\le L 
               \end{array}
               \right. .
\end{align*}
Here $\bar{\Delta}_i$ is a slight variation of ${\Delta}_i$ that takes into account the $\epsilon$-optimality of items. 
Moreover, 
to correctly identify item $i$ with probability at least $1-\delta/2$, our algorithm needs to observe it at least
\begin{align*}
	\barT_{i,\delta}
	&  := 
	1 + \Bigg \lfloor \frac{216}{ \vphantom{\hat{\Delta}} \bar{\Delta}_i^2 }  \log \Bigg( \frac{2}{ \rho(\delta) } \log_2 \bigg( \frac{ 648 }{ \vphantom{\hat{\Delta}} \rho(\delta) \bar{\Delta}_i^2 } \bigg) \Bigg)  \Bigg\rfloor
	\\ &
	=
	 O \bigg( \bar{\Delta}_{i}^{-2} \log \bigg[ \frac{ L}{\delta } \log \bigg( \frac{ L }{ \vphantom{\hat{\Delta}} \delta \bar{\Delta}_{ i }^{2} } \bigg) \bigg] 
	\bigg)
\end{align*}
times. 
Similarly to existing works~\citep{even2002pac,mannor2004sample}, we derive the upper bound with $\bar{\Delta}_i$'s and $\bar{T}_{i,\delta}$'s.
A larger $\bar{\Delta}_{i}$ leads to a smaller $\barT_{i,\delta}$, implying that it requires fewer observations to identify item $i$ correctly.}
The permutation $\sigma$ defines the ordering of $\bar{\Delta}$:
$
	\bar{\Delta}_{\sigma(1)} \ge \ldots \ge  \bar{\Delta}_{\sigma(L)} %\ge  \bar{\Delta}_{\sigma(2)}
$.
At step $t$, we set $\hat{k}_t$ as the number of surviving items in $S_t$,  
and $X_{\hatk_t;t}$ as the number of observations of them.
Note that $\hat{k}_t$ is an r.v.
We lower bound $\bbE X_{k;t}$ with 
\vspace{-.5em}
\[
\mu(k,w) \! := \! \sum_{i=1}^{k-1}  i \cdot w(i) \prod_{ j=1 }^{i-1} \barw(j)  + k \prod_{j=1}^{k-1} \barw(j) 
	    \! \ge \! \min \! \bigg\{ \frac{k}{2}, \frac{1}{2w^*} \! \bigg\},
\]
\vspace{-1.1em}

and upper bound $\bbE X_{k;t}^2$ with $ v(k,w):=  \min\{ k, \sqrt{2}/w' \}$.
We abbreviate $\barT_{i,\delta}$\vphantom{$\hat{T}$} as $\barT_{i}$, $\rho(\delta)$ as $\rho$, $\mu(k,w)$ as $\mu_k$, $v(k,w)$ as $v_k$ when there is no ambiguity.
In anticipation of Theorem~\ref{thm:fix_conf_up_bd}, we define three more notations:
\begin{align*}
	& K_1 :=  \max \{~ K'-K  , \min\{\lfloor 1/w^*\rfloor, K-1 \}  ~\}, \\ 
	&		    K_2 := \max \{   K'-K , 1 \}, \quad
	  M_k :=  \frac{  K+1-k  }{\mu_{K+1-k}} - \frac{  K-k  }{\mu_{K-k}}.
\end{align*}}
\vspace{-1em}

\begin{restatable}{theorem}{thmFixConfUpBd} \label{thm:fix_conf_up_bd}
	Assume $K'<2K-1$. 
	With probability at least $1-\delta$, Algorithm~\ref{alg:bai_cas_conf} outputs an $\epsilon$-optimal arm 
	after at most $(c_1 N_1 + c_2 N_2 + c_3 N_3)$ steps, where
	{\setlength\abovedisplayskip{.5em}
    \setlength\belowdisplayskip{.4em}
\begin{align}
    & N_1 = \sum_{k=1}^{ K-K_2 }  
         \frac{ v_{K-k+1}^2   \vphantom{\hat{v}}  }{\mu_{K-k+1}^2} 
         \log \Bigg( \frac{ 1 }{\delta} \sum_{j=1}^{ K-K_2 } 
          \frac{ v_{K-j+1}^2 }{\mu_{K-j+1}^2} 
          \Bigg), \nonumber     \\
    & N_2 =  \frac{1}{\mu_K} \sum_{i=1}^{L-K} \bar{T}_{\sigma(i)}  
             , \nonumber    \\
    & N_3 = 
    	 \sum_{k=2}^{2K-K' }
    	 \frac{ K-k+1 }{\mu_{K-k+1}}
    	  \big[ \bar{T}_{\sigma(L-K+k)} - \bar{T}_{\sigma(L-K+k-1)} \big] \label{eq:N_three_main_result}
         \\
                    &  = 
            \sum_{k=1}^{K-K_1-1 }
                 M_k
          		\bar{T}_{ \sigma(L-K+k) }
           +   \bigg(  \frac{ K_1+1  }{ \mu_{K_1+1} } - 2 \bigg) \cdot
           \bar{T}_{ \sigma(L-K_1 ) }
         \nonumber    \\
         & \hspace{2em}
          + 2 \bar{T}_{ \sigma(L-K_2) }
           .    \label{eq:N_three_expand_main_result}
\end{align}}
\end{restatable}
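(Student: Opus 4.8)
The plan is to track, over the course of the algorithm, how many time steps it takes for each surviving item $i$ to accumulate the threshold number of observations $\barT_{i,\delta}$ that Lemma/earlier material guarantees is enough to identify it correctly (on the high-probability event that all confidence intervals hold, which costs $\delta$ overall by a union bound over the $L$ items and the peeling/stopping-time argument built into $\rho(\delta)$). Conditioned on that good event, an item leaves $D_t$ the moment its observation count reaches $\barT_{i,\delta}$, so the total number of steps is governed entirely by the rate at which observations are generated and distributed among the currently surviving items. Because Line~4 pulls the $\min\{K,|D_t|\}$ surviving items with the fewest prior observations, in ascending order, the observation counts of surviving items stay within $1$ of each other throughout a ``phase'' in which $|D_t|$ is constant; hence the algorithm behaves like a round-robin scheduler on the survivors, and I would first reduce the analysis to counting rounds.

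Next I would convert ``rounds'' into ``steps.'' In a step where there are $\hatk_t$ surviving items among the $K$ pulled, the expected number of fresh observations is at least $\mu_{\hatk_t} = \mu(\hatk_t,w) \ge \min\{\hatk_t/2, 1/(2w^*)\}$, using the stated lower bound on $\bbE X_{k;t}$; and the second moment is controlled by $v_{\hatk_t}=v(\hatk_t,w)\le\min\{\hatk_t,\sqrt2/w'\}$. This is exactly where the left-sided sub-Gaussian machinery enters: the number of observations at a step is a random quantity that I need to be \emph{not too small} with high probability over many steps, so I apply Theorem~\ref{thm:sec_moment_to_left_sub_gauss} to pass from the second-moment bound $v_k$ to an LSG parameter, and then a one-sided Bernstein-type inequality to the sum of per-step observation counts over a block of steps. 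The upshot is that to collect $m$ observations while $k$ items survive, it suffices w.h.p. to spend $O\big((m/\mu_k) + (v_k^2/\mu_k^2)\log(\cdot)\big)$ steps; the logarithmic additive term, summed over the at most $K-K_2$ relevant values of $k$ with the union-bound budget distributed across them, is precisely $N_1$.

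Then I would do the bookkeeping that produces $N_2$ and $N_3$. Order the items by $\bar\Delta$ via $\sigma$, so the items that are hardest to identify (largest $\barT$) are eliminated last; after $j$ of the ``tail'' items have been eliminated there are $K-$ (something) survivors still being pulled, i.e.\ the survivor count $k$ decreases in a predictable staircase as we pass the thresholds $\barT_{\sigma(L-K+1)}\le\barT_{\sigma(L-K+2)}\le\cdots$. Charging the incremental observations $\barT_{\sigma(L-K+k)}-\barT_{\sigma(L-K+k-1)}$ needed to go from one elimination to the next, at the then-current rate $\mu_{K-k+1}$, gives the telescoped sum in \eqref{eq:N_three_main_result}; Abel summation rearranges it into the closed form \eqref{eq:N_three_expand_main_result}, which is where $M_k$ and the constants $K_1,K_2$ come from (the hypothesis $K'<2K-1$ keeps the range $2\le k\le 2K-K'$ nonempty and the index $K-k+1\ge K-K_1$ in the regime where $\mu_{K-k+1}$ behaves linearly). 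The term $N_2$ absorbs the bulk ``first pass'': bringing \emph{all} $L-K$ suboptimal items up to their thresholds while $K$ items are still being pulled, at rate $\mu_K$, contributes $\tfrac{1}{\mu_K}\sum_{i=1}^{L-K}\barT_{\sigma(i)}$.

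The main obstacle is the middle step: the observation count at a step depends on the \emph{identities} of the surviving items pulled, which in turn depend on the past feedback, so the per-step increments are neither independent nor identically distributed, and the number of steps in a phase is itself a stopping time. Handling this circular dependence between ``how much feedback'' and ``how many steps'' is exactly the difficulty the paper flags, and it is resolved by the LSG concentration: one shows the partial sums of observation counts form a process whose lower deviations are sub-Gaussian enough that a Bernstein bound applies uniformly, after which a union bound over the $O(\log)$ possible phase lengths (peeling in the spirit of $\rho(\delta)$) closes the argument without ever needing the increments to be independent. Everything else — the correctness of identification given the confidence bounds, the $\delta$ union bound, and the arithmetic rearrangements — is routine given the results already stated.
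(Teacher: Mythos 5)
Your proposal follows essentially the same route as the paper's proof: condition on the confidence-interval event from Lemma~\ref{le:conc_event}, use the round-robin property $T_t(j)-1\le T_t(i)\le T_t(j)+1$ to split the horizon into phases indexed by $|D_t|$, bound the observations needed per phase via Lemma~\ref{le:suff_obs_to_identif}, convert observations to steps with the LSG concentration of Lemma~\ref{lemma:conc_one_sub_gauss_apply_obs}, distribute the remaining $\delta/2$ budget over the $2K-K'$ phases (the paper optimizes this allocation by Lagrange multipliers, yielding $N_1$), and telescope/Abel-sum the per-phase costs into $N_2$ and $N_3$. The only cosmetic difference is your mention of peeling over phase lengths, which the paper does not carry out explicitly; otherwise the decomposition and the role of each lemma match.
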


When $\epsilon=0$, $\bar{\Delta}_i = \Delta_i $ for all $i\in[L]$ and $K'=K$. 
We note that it is a waste to pull identified items. This occurs only when $K'<2K-1$~(see Lemma~\ref{le:stop_eliminate_no}) and this scenario is more complicated to analyze. The scenario $K'\ge 2K-1$ is relatively easier to analyze and the result is deferred to Proposition~\ref{prop:fix_conf_up_bd_many_eps_opt}~(see Appendix~\ref{appdix:influen_epsilon}).

\textbf{Interpretation of the bound.} 
The first term 
$N_1$ in the bound is unique to the cascading model, which results from the gap between $X_{\hatk_t;t}$ and $\bbE X_{\hatk_t;t} $. 
We can bound $N_1$ in terms of the maximum and minimum weights, $w^*$ and $w'$. 
\begin{restatable}{proposition}{propTermCas}
\label{prop:term_by_cas}
	Assume $0<w'<w^*\le 1 $.  
	We have
	{\setlength\abovedisplayskip{.5em}
    \setlength\belowdisplayskip{.4em}
		\begin{align*}
		N_1 \le 
          \left\{
            \begin{array}{ll}
            	4K \log \big( \frac{ 4K }{\delta}  		
            	 \big) & 0< w^* \le 1/K,  \vphantom{ \Big( } \\
                 \vphantom{ \Big( }
            	\frac{ 8Kw^{*2} }{w'^2 }  \log \big(
            	\frac{ 8Kw^{*2} }{\delta w'^2 } 
            	  \big)  & 1/K < w^* \le 1.
            \end{array}
          \right.
	\end{align*}}
\end{restatable}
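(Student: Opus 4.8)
The plan is to reduce the statement to a single scalar inequality. Writing $m = K - k + 1$, as $k$ ranges over $\{1,\ldots,K-K_2\}$ the index $m$ ranges over $\{K_2+1,\ldots,K\}$, a set of at most $K-1$ integers since $K_2 \ge 1$. Putting $S := \sum_{m=K_2+1}^{K} v_m^2/\mu_m^2$, the definition of $N_1$ is precisely $N_1 = S\log(S/\delta)$. I will use the elementary fact that $g(x) := x\log(x/\delta)$ is negative on $(0,\delta)$ and increasing on $[\delta,\infty)$, so that any bound $S \le B_\star$ with $B_\star \ge \delta$ forces $N_1 = g(S) \le g(B_\star) = B_\star\log(B_\star/\delta)$. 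It therefore suffices to show $S \le 4K$ when $w^* \le 1/K$ and $S \le 8Kw^{*2}/w'^2$ in general; both right-hand sides exceed $\delta$ (since $\delta<1<4K$ and $\delta<8\le 8Kw^{*2}/w'^2$), so the two cases of the proposition follow.

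To bound $S$ I bound each summand separately, using $v_m = \min\{m,\sqrt2/w'\}$ together with the stated estimate $\mu_m \ge \min\{m/2,\,1/(2w^*)\}$, which gives $v_m^2/\mu_m^2 \le \frac{\min\{m,\sqrt2/w'\}^2}{\min\{m/2,1/(2w^*)\}^2}$. If $m \le 1/w^*$ then $\mu_m \ge m/2$ and $v_m \le m$, so $v_m^2/\mu_m^2 \le 4$; when $w^* \le 1/K$ this applies to every $m \le K$ at once, yielding $S \le 4(K-1) < 4K$. For the general bound I handle the range $m > 1/w^*$, where $\mu_m \ge 1/(2w^*)$, splitting further on $v_m$: if $m \le \sqrt2/w'$ then $v_m = m \le \sqrt2/w'$, so $v_m^2/\mu_m^2 \le 4m^2 w^{*2} \le 8w^{*2}/w'^2$; if instead $m > \sqrt2/w'$ then $v_m = \sqrt2/w'$ and $v_m^2/\mu_m^2 \le (2/w'^2)(2w^*)^2 = 8w^{*2}/w'^2$. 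Since the case $m \le 1/w^*$ already gives the smaller value $4 \le 8w^{*2}/w'^2$ (using $w' < w^*$), every summand is at most $8w^{*2}/w'^2$, hence $S \le (K-1)\cdot 8w^{*2}/w'^2 < 8Kw^{*2}/w'^2$.

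I expect the only delicate point to be this second case analysis. The branch of the minimum active in $v_m$ switches at $m = \sqrt2/w'$, whereas the one active in the bound on $\mu_m$ switches at $m = 1/w^*$, and because $w' < w^*$ these thresholds need not coincide (indeed $\sqrt2/w' > 1/w^*$); so one cannot simply declare that for large $m$ both $v_m$ and $\mu_m$ sit at their caps, which is exactly why the extra sub-split on whether $m \le \sqrt2/w'$ is required. Everything else is bookkeeping: combine the uniform per-term bounds with the term count $K - K_2 \le K-1 < K$, and then invoke the monotonicity of $g$ on $[\delta,\infty)$ to pass from the bound on $S$ to the claimed bound on $N_1$.
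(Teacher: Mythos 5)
Your proof is correct and follows essentially the same route as the paper's: bound each summand $v_m^2/\mu_m^2$ uniformly (by $4$ when $w^*\le 1/K$, by $8w^{*2}/w'^2$ in general), count at most $K$ terms, and invoke the monotonicity of $x\mapsto x\log(x/\delta)$. Your extra sub-split on whether $m$ exceeds $\sqrt{2}/w'$ versus $1/w^*$ is in fact slightly more careful than the paper, which in the second case simply writes $\mu_k\ge 1/(2w^*)$ even though the lemma only guarantees $\mu_k\ge\min\{k/2,1/(2w^*)\}$; your argument closes that small gap while arriving at the same per-term bound.
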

Next,  
recall that we say that an item is identified by time $t$ if it is put into $A_\tau$ or $R_\tau$ for some $\tau \leq t$.
In the worst-case scenario, 
the agent identifies items in descending order of $\bar{\Delta}_{i}$'s.
With probability at least $1-\delta$, it costs at most $c_2 N_2$ steps to identify 
items $\sigma(1),\ldots,\sigma(L-K)$
and 
$c_3 N_3$ is for identifying the remaining ones. 
More precisely, 
after item $\sigma(L\!-\!K\!-\!k\!-\!1)$ is identified,
the number of steps required for identifying
item $\sigma(L\!-\!K\!-\!k)$
 is 
$(c_3 /{\mu_{K-k+1}}) \cdot { (K-k+1) } 
    	  [ \bar{T}_{\sigma(L-K+k)} - \bar{T}_{\sigma(L-K+k-1)} ]$;
we sum these steps up to obtain \eqref{eq:N_three_main_result}.
Since the results in many existing works~\citep{even2002pac,jun2016top} mainly involve $\bar{T}_i$'s, we show the dependence of $N_3$ on $\bar{T}_i$'s more concretely in \eqref{eq:N_three_expand_main_result}.

\textbf{Technique.}
The crucial analytical challenge to derive our bound, especially to establish $\mu_k,v_k,N_1$, is to quantify the impact of partial feedback that results from the cascading model.
Firstly, we bound $\bbE X_{\hatk_t;t}$ by exploiting some properties of the cascading feedback.
Next, to bound the gap between $\sum_{t=1}^n X_{\hatk_t;t} $ and $\sum_{t=1}^n \bbE X_{\hatk_t;t} $ for some $n\in \bbN$, we propose a novel class of r.v.'s, known as LSG  r.v.'s, provide an estimate of a certain LSG parameter, and utilize a Berstein-type concentration inequality to bound the tail probability of a certain LSG r.v.. 
Details are in Section~\ref{sec:main_pf_part_feedback}.

To facilitate the remaining discussion in Section~\ref{sec:main_result_up}, we specialize our analysis and results henceforth to the case of $\epsilon \! = \! 0$, in which $\bar{\Delta}_i \! = \! \Delta_i$ and the agent aims to find $S^*$. 
The remaining results in Section~\ref{sec:main_result_up} can be directly generalized to the scenario of $\epsilon \! > \! 0$ by replacing  $\Delta_i$'s with $\bar{\Delta}_i$'s.

\textbf{Comparison to the semi-bandit problem.}
A related algorithm in the setting of semi-bandit feedback and $\epsilon=0$ is the {\sc BatchRacing} Algorithm, which was proposed by~\citet{jun2016top}. This algorithm has three paramters $k$, $r$ and $b$ which respectively represent the number of optimal items, the maximum number of pulls of one item at one step and the size of a pulled arm. When $r=1 $ and $b=k$, we denote it as {\sc BatRac($k$)}. 
The fact that our algorithm 
observes between $1$ and $K$ items per step
motivates a comparison among {\sc CascadeBAI($0,\delta,K$)}, {\sc BatRac($K$)} and {\sc BatRac($1$)}.

\begin{restatable}{corollary}{corCompFullFeedback}
    \label{cor:compare_full_feedback}
   (i) If all $w(i)$'s are at most $1/K$, 	with probability at least $1-\delta$, Algorithm~\ref{alg:bai_cas_conf} outputs $S^*$ after at most 
   {\setlength\abovedisplayskip{.5em}
    \setlength\belowdisplayskip{.4em}
   \begin{align*}
   		& O \bigg(
    \frac{1}{ K} \sum_{i=1}^{L-K} \bar{T}_{\sigma(i)} 
        + \bar{T}_{\sigma(L-1 )}
    \bigg)
    \\&
    = O \bigg(
    \frac{1}{ K}
    \sum_{i=1}^{L-K} 
    \Delta_{\sigma(i)}^{-2} \log
		\bigg[
		\frac{L}{\delta} \log \bigg( \frac{L}{\delta \Delta_{\sigma(i)}^{2} } \bigg)
		\bigg]
%	  \right. 
	  \\* & \hspace{3em} %\left.
	+
	\Delta_{\sigma(L-1)}^{-2} \log
		\bigg[
		\frac{L}{\delta} \log \bigg( \frac{L}{\delta \Delta_{\sigma(L-1)}^{2} } \bigg)
		\bigg]
   \bigg)
\end{align*}
steps;   
   (ii) if all $w(i)$'s are at least $1/2$, with probability at least $1-\delta$, Algorithm~\ref{alg:bai_cas_conf} outputs $S^*$ after at most 
   \begin{align*}
   		O\bigg(
    \sum_{i=1}^{L-1} \bar{T}_{\sigma(i)} 
    \bigg)
    = O \bigg(
    \sum_{i=1}^{L-1} 
    \Delta_{\sigma(i)}^{-2} \log
		\bigg[
		\frac{L}{\delta} \log \bigg( \frac{L}{\delta \Delta_{\sigma(i)}^{2} } \bigg)
		\bigg]
    \bigg)
\end{align*}
steps.}
\end{restatable}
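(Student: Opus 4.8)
The plan is to specialize the bound $c_1N_1+c_2N_2+c_3N_3$ of Theorem~\ref{thm:fix_conf_up_bd} to $\epsilon=0$---so $\bar\Delta_i=\Delta_i$, $K'=K$, $K_2=1$, and the hypothesis $K'<2K-1$ just reads $K\ge 2$---and then to simplify the three terms separately in the two weight regimes. The ingredients I will use are: the stated bound $\mu_k\ge\min\{k/2,1/(2w^*)\}$; the identity behind it, $\mu_k=\sum_{m=0}^{k-1}\prod_{j=1}^m\barw(j)$, which gives $\mu_k\ge 1$ always and $\mu_k\le 2$ when all $\barw(j)\le 1/2$ (and shows $\mu$ is non-decreasing); the definitions $v_k=\min\{k,\sqrt2/w'\}$, $K_1=\min\{\lfloor1/w^*\rfloor,K-1\}$ and $M_k=\tfrac{K+1-k}{\mu_{K+1-k}}-\tfrac{K-k}{\mu_{K-k}}$; Proposition~\ref{prop:term_by_cas} for $N_1$; the stated $O(\cdot)$ form of $\barT_{i,\delta}$ with $\bar\Delta_i$ replaced by $\Delta_i$; and two elementary facts, namely $\barT_{i,\delta}=\Omega(\log(L/\delta))$ for every $i$ (immediate from its definition, using $\Delta_i\le1$ and $\rho(\delta)=\sqrt{\delta/(12L)}$) and $L\ge K+1$ (forced by $w(K)>w(K+1)$), together with $\barT_{\sigma(1)}\le\cdots\le\barT_{\sigma(L)}$ since $\sigma$ orders the $\Delta_i$ decreasingly.

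For part~(i): $w(i)\le1/K$ for all $i$ gives $w^*\le1/K$, hence $\lfloor1/w^*\rfloor\ge K$ and $K_1=K-1$, and $1/(2w^*)\ge K/2\ge k/2$ for all $k\le K$, so $\mu_k\ge k/2$; in particular $\mu_K\ge K/2$, whence $N_2\le\tfrac2K\sum_{i=1}^{L-K}\barT_{\sigma(i)}$. In the expanded form~\eqref{eq:N_three_expand_main_result} of $N_3$ the leading sum is empty ($K-K_1-1=0$), the coefficient $\tfrac{K_1+1}{\mu_{K_1+1}}-2=\tfrac{K}{\mu_K}-2\le0$, and the last term is $2\barT_{\sigma(L-1)}$, so $N_3\le2\barT_{\sigma(L-1)}$. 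Proposition~\ref{prop:term_by_cas} gives $N_1\le4K\log(4K/\delta)$; since $w(i)\le1/K$ forces $\Delta_i\le1/K$ for every $i$, $\barT_{\sigma(L-1)}=\Omega(K^2\log(L/\delta))$, which dominates $N_1$, so $N_1=O(\barT_{\sigma(L-1)})$. Adding the three contributions gives the first displayed bound, and substituting the $O(\cdot)$ form of $\barT_{\sigma(i)}$ yields the explicit expression.

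For part~(ii): $w(i)\ge1/2$ for all $i$ gives $w'\ge1/2$ and $w^*\ge1/2$. From $w^*\ge1/2$: $\lfloor1/w^*\rfloor\le2$, so $K_1\le2$; and all $\barw(j)\le1/2$ yields $1\le\mu_k\le2$ for every $k$. From $w'\ge1/2$: $v_k\le\sqrt2/w'\le2\sqrt2$, so every ratio $v_k^2/\mu_k^2\le8$, giving $N_1\le8(K-1)\log\!\big(8(K-1)/\delta\big)=O(K\log(K/\delta))$; since $L-1\ge K$ and each $\barT_{\sigma(i)}=\Omega(\log(L/\delta))=\Omega(\log(K/\delta))$, the sum $\sum_{i=1}^{L-1}\barT_{\sigma(i)}$ already dominates $N_1$. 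Likewise $N_2=\mu_K^{-1}\sum_{i=1}^{L-K}\barT_{\sigma(i)}\le\sum_{i=1}^{L-1}\barT_{\sigma(i)}$. For $N_3$ I again use~\eqref{eq:N_three_expand_main_result}: $\tfrac{K_1+1}{\mu_{K_1+1}}-2\le1$ (as $K_1\le2$ and $\mu\ge1$), so that term plus $2\barT_{\sigma(L-1)}$ is at most $3\barT_{\sigma(L-1)}$; and since $\mu$ is non-decreasing, $M_k\le\tfrac{K+1-k}{\mu_{K+1-k}}-\tfrac{K-k}{\mu_{K+1-k}}=\tfrac1{\mu_{K+1-k}}\le1$, so $\sum_{k=1}^{K-K_1-1}M_k\barT_{\sigma(L-K+k)}\le\sum_{i=1}^{L-1}\barT_{\sigma(i)}$. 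Hence $N_3=O\big(\sum_{i=1}^{L-1}\barT_{\sigma(i)}\big)$; combining the three bounds and substituting the $O(\cdot)$ form of $\barT_{\sigma(i)}$ gives part~(ii).

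The routine parts are the inequalities $\mu_k\ge k/2$, $K_1=K-1$ (part~(i)) and $1\le\mu_k\le2$, $K_1\le2$, $v_k\le2\sqrt2$ (part~(ii)), plus the final substitution. The step that needs the most care is the bookkeeping inside $N_3$: one must use the expanded form~\eqref{eq:N_three_expand_main_result} and observe that, although $g_n:=n/\mu_n$ is itself of order $K$, its increments $M_k=g_{K+1-k}-g_{K-k}$ satisfy $M_k\le1/\mu_{K+1-k}\le1$; this $O(1)$ bound on the coefficients is precisely what collapses $N_3$ into an unweighted sum of $\barT_{\sigma(i)}$'s and prevents the final bound from being inflated by a factor of $K$.
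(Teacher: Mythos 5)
Your proof is correct and follows essentially the same route as the paper's: specialize Theorem~\ref{thm:fix_conf_up_bd} at $\epsilon=0$, use $\mu_k\ge\min\{k/2,1/(2w^*)\}$ and $v_k=\min\{k,\sqrt2/w'\}$ to make all the ratios $v_k^2/\mu_k^2$ and the coefficients of the $\barT_{\sigma(i)}$'s constant in each regime, and collapse $N_1,N_2,N_3$ accordingly (your bound $M_k\le 1/\mu_{K+1-k}\le 1$ via monotonicity of $\mu$ is the same telescoping the paper performs on \eqref{eq:N_three_main_result}, just read off from the expanded form \eqref{eq:N_three_expand_main_result}). The one point where you go beyond the paper is welcome rather than divergent: you explicitly verify that $N_1=O(K\log(K/\delta))$ is absorbed into the displayed bound, using $\Delta_i\le 1/K$ in case~(i) and $\barT_i=\Omega(\log(L/\delta))$ with $L\ge K+1$ in case~(ii), a step the paper leaves implicit.
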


The results of Corollary~\ref{cor:compare_full_feedback} are intuitive: 
(i) if all $w(i)$'s are close to $0$~(i.e., at most $1/K$), the bound on the time complexity of {\sc CascadeBAI($0,\delta,K$)} is of the same order as that of {\sc BatRac($K$)}; 
(ii) if all $w(i)$'s are close to $1$~(i.e., at least $1/2$), the bound corresponds with that of {\sc BatRac($1$)} \citep{jun2016top}. 
We further upper bound the expected time complexity of our algorithm~(denoted by $\pi_1$) in these cases.

\begin{restatable}{proposition}{propConfCompFullFeedbackExp}
\label{prop:conf_comp_full_feedback_exp}
      (i) If all $w(i)$'s are at most $1/K$, 
      {\setlength\abovedisplayskip{.5em}
    \setlength\belowdisplayskip{.4em}
   \begin{align*}
   	\bbE \calT^{\pi_1} \! \le \!
   		c_1 \log \bigg( \frac{1}{\delta} \bigg) \cdot 
    \bigg\{
         \frac{1}{ K}
    \sum_{i=1}^{L-K} 
    \Delta_{\sigma(i)}^{-2} \log
		\bigg[
		L \log \bigg( \frac{L}{  \Delta_{\sigma(i)}^{2} } \bigg)
		\bigg]
	   \\ \hspace{2em} %\left.
	+
	\Delta_{\sigma(L-1)}^{-2} \log
		\bigg[
		L\log \bigg( \frac{L}{ \Delta_{\sigma(L-1)}^{2} } \bigg)
		\bigg]
		\bigg\}
    ;
\end{align*}
   (ii) if all $w(i)$'s are at least $1/2$, %an upper bound is 
   \begin{align*}
   	\bbE \calT^{\pi_1} \le 
   		c_2
    \sum_{i=1}^{L-1} 
    \Delta_{\sigma(i)}^{-2} \log
		\bigg[
		L \log \bigg( \frac{L}{  \Delta_{\sigma(i)}^{2} } \bigg)
		\bigg]
		\log \bigg( \frac{1}{ \delta  } \bigg)
    .
\end{align*}}
\end{restatable}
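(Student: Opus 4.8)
The plan is to turn the high‑probability bound of Corollary~\ref{cor:compare_full_feedback} into a bound on the mean by integrating the tail of $\calT^{\pi_1}$, and then to tidy up the resulting expression. Write $B(\delta')$ for the right‑hand side of the relevant part of Corollary~\ref{cor:compare_full_feedback} (part~(i) for case~(i), part~(ii) for case~(ii)) with the confidence level $\delta'$ substituted for $\delta$; thus in case~(i), $B(\delta') = \Theta\big(\tfrac1K\sum_{i=1}^{L-K}\barT_{\sigma(i),\delta'} + \barT_{\sigma(L-1),\delta'}\big)$, where $\barT_{i,\delta'} = O\big(\Delta_i^{-2}\log[\tfrac{L}{\delta'}\log(\tfrac{L}{\delta'\Delta_i^2})]\big)$, and analogously in case~(ii). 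I will also use the $\delta$‑free, $\log$‑free quantity $D_0 := \tfrac1K\sum_{i=1}^{L-K}\Delta_{\sigma(i)}^{-2} + \Delta_{\sigma(L-1)}^{-2}$ (case~(i)) and the braces $\{\cdots\}$ appearing in the statement, which I denote $D_0'$; note $D_0 \le D_0'$ since each $\log[L\log(L/\Delta^2)]\ge 1$.

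First I would establish the key auxiliary estimate: there is a universal constant $c_0$ such that, for every $\delta'\in(0,1)$,
\[
	\bbP\big(\calT^{\pi_1} > c_0 B(\delta')\big) \le \delta'.
\]
This is proven by the same chain of arguments that underlies Theorem~\ref{thm:fix_conf_up_bd} and Corollary~\ref{cor:compare_full_feedback}, but run against the level‑$\delta'$ concentration event $\calG_{\delta'} := \{\,\forall i\in[L],\ \forall t:\ |\hat w_t(i) - w(i)| \le C_t(i,\delta')\,\}$, which satisfies $\bbP(\calG_{\delta'}) \ge 1 - \delta'$ by the anytime confidence‑sequence property calibrated into $C_t(\cdot,\cdot)$ and $\rho(\cdot)$. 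The point is that $\pi_1$ internally uses the \emph{tighter} radius $C_t(\cdot,\delta)$, but for controlling the \emph{stopping time} (as opposed to correctness) this only helps: once a surviving item has been observed at least $\Theta(\barT_{i,\delta'})$ times, its level‑$\delta'$ interval $[\hat w_t(i)\pm C_t(i,\delta')]$ has width below $\Delta_i$, so on $\calG_{\delta'}$ this interval — and hence the narrower algorithmic interval $[L_t(i,\delta),U_t(i,\delta)]$ — is separated from its neighbours and from the acceptance/rejection thresholds, forcing the item into $A_t$ or $R_t$; therefore on $\calG_{\delta'}$ the run terminates no later than the step at which every item has reached $\Theta(\barT_{i,\delta'})$ observations. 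Bounding that number of steps is exactly the feedback‑amount analysis of Section~\ref{sec:main_pf_part_feedback} — the lower bound on $\bbE X_{\hatk_t;t}$ via the $\mu_k$, the LSG/Bernstein control of $\sum_t X_{\hatk_t;t}$, and the racing bookkeeping producing $N_1+N_2+N_3$ — carried out with $\delta'$ in place of $\delta$, which in the two weight regimes of the statement collapses to $c_0 B(\delta')$ via Corollary~\ref{cor:compare_full_feedback}.

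Given this, I would finish by writing $\bbE\calT^{\pi_1} = \int_0^\infty \bbP(\calT^{\pi_1} > s)\,ds$, splitting at $s = c_0 B(\delta)$: below the split bound the integrand by $1$, contributing $\le c_0 B(\delta)$; above it, set $s = c_0 B(\delta')$ with $\delta'<\delta$, so the integrand is $\le \delta'$. Since $B$ is decreasing with $|B'(\delta')| = O\big(\delta'^{-1}D_0\big)$, the change of variables gives $\int_{c_0 B(\delta)}^\infty \bbP(\calT^{\pi_1}>s)\,ds \le c_0\int_0^\delta \delta'\,|B'(\delta')|\,d\delta' = O(c_0 D_0)$. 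Hence $\bbE\calT^{\pi_1} \le c_0 B(\delta) + O(c_0 D_0)$. The elementary inequality $\log\big(\tfrac{L}{\delta}\log(\tfrac{L}{\delta\Delta^2})\big) \le c\,\log(1/\delta)\,\log\big[L\log(L/\Delta^2)\big]$ for $\delta$ bounded away from $1$ (using $\log L + \log\log(L/\Delta^2) \le 2\log[L\log(L/\Delta^2)]$ and $\log(a+b)\le \log a + b/a$ with $a = \log(L/\Delta^2) \ge 1$) then rewrites $c_0 B(\delta)$ in the form $c\,\log(1/\delta)\,D_0'$, and since $D_0\le D_0'$ the residual $O(c_0 D_0)$ is absorbed, giving $\bbE\calT^{\pi_1} \le c_1 \log(1/\delta)\,D_0'$ as claimed. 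Case~(ii) is identical, using Corollary~\ref{cor:compare_full_feedback}(ii) and the corresponding $\Delta$‑dependent sum.

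The main obstacle is the auxiliary estimate $\bbP(\calT^{\pi_1} > c_0 B(\delta')) \le \delta'$ for $\delta'$ \emph{smaller} than the algorithm's own parameter $\delta$: one must show that the run's termination past time $c_0 B(\delta')$ is governed by a level‑$\delta'$ concentration event even though every confidence interval the algorithm uses is calibrated only to level $\delta$. Making this rigorous requires re‑running the delicate part of the proof of Theorem~\ref{thm:fix_conf_up_bd} — in particular the LSG/Bernstein bound on the random amount of feedback (Theorem~\ref{thm:sec_moment_to_left_sub_gauss}) and the associated $\mu_k,v_k$ estimates — uniformly over confidence levels, together with the ``short intervals suffice for identification'' argument for the wider level‑$\delta'$ radius. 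Everything after that is routine tail integration and $\log$‑arithmetic.
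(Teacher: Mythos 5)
Your proposal is correct and follows essentially the same route as the paper: the paper likewise establishes the uniform tail bound $\bbP(\calT^{\pi_1} > \mathrm{bound}(\delta')) \le \delta'$ for all $0<\delta'\le\delta$ by re-running the Theorem~\ref{thm:fix_conf_up_bd} analysis on the level-$\delta'$ nice event $\bigcap_i\calE(i,\delta')$ (this is exactly why Lemma~\ref{le:suff_obs_to_identif} is stated for a general $\delta'\le\delta$), and then integrates the tail via Tonelli's theorem. The only cosmetic difference is that the paper first absorbs the doubly-logarithmic $\delta'$-dependence so the bound takes the form $-A\log\delta'$ and the tail integral becomes $\int e^{-x/A}\,dx = A$, whereas you integrate the raw bound $B(\delta')$ by a change of variables and a derivative estimate; both yield the same residual $O(A)$ term.
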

According to the definition of $\bbT^*$ in Section~\ref{sec:prob_setup}, $\bbT^* \le \bbE \calT^{\pi_1}$ and hence also satisfies the above bounds.
Corollary~\ref{cor:compare_full_feedback} and Proposition~\ref{prop:conf_comp_full_feedback_exp} indicate that the high probability upper bound on $\calT^{\pi_1}$ and the upper bound on $\bbE \calT^{\pi_1}$ are of the same order in the sense that (i) if all $w(i)$'s are at most $1/K$, both upper bounds are 
$\tilde{O} \big( 
    ({1}/{ K})\cdot
    \sum_{i=1}^{L-K} 
    \Delta_{\sigma(i)}^{-2}  
	+
	\Delta_{\sigma(L-1)}^{-2}  
    \big)
$;
(ii) if all $w(i)$'s are at least $1/2$, both are 
$\tilde{O} \big(
    \sum_{i=1}^{L-1} 
    \Delta_{\sigma(i)}^{-2}  
    \big)
$.

\textbf{Specialization to the case of two click probabilities}.
We consider a simplified scenario with the following assumption; this allows us to present the upper bound on the time complexity with greater clarity.
\begin{assumption}
\label{assump:two_prob_cond}
	With $0<w'<w^*\le 1$, the $K$ optimal and $L-K$ suboptimal items have click probabilities $w^*$ and $w'$ respectively. 
\end{assumption}

\begin{restatable}{proposition}{propConfUpbdTwoprob}
\label{prop:conf_upbd_two_prob}
	Under Assumption~\ref{assump:two_prob_cond},
(i) if $0<w^* \le 1/K $,
with probability at least $1-\delta$, Algorithm~\ref{alg:bai_cas_conf} outputs $S^*$ after at most 	
\begin{align*} 
	O \bigg(
         \frac{  L }{   K(w^*-w')^2 }  \log \bigg[ \frac{L}{ \delta } \log \bigg( \frac{ L }{ \delta (w^*-w')^2 } \bigg) \bigg]
        \bigg) 
\end{align*}
steps;
(ii) if $1/K < w^* \le 1$, with probability at least $1-\delta$, Algorithm~\ref{alg:bai_cas_conf} outputs $S^*$ after at most 	
\begin{align*}
	O \bigg(
         \frac{ w^*L }{   (w^* \! - \! w')^2 }  \! \log \! \bigg[ \frac{L}{ \delta } \! \log \!  \bigg( \frac{ L }{ \delta (w^* \! - \! w')^2 } \bigg) \bigg]
         \! +  \! 
		\frac{w^{*2}  }{ w'^2 }  \! \log \! \bigg( \frac{1}{\delta} \bigg)
        \bigg)
\end{align*}
steps.
\end{restatable}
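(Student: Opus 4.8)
The plan is to specialize Theorem~\ref{thm:fix_conf_up_bd} (and, for part~(i), Corollary~\ref{cor:compare_full_feedback}) to Assumption~\ref{assump:two_prob_cond}; throughout $\epsilon=0$, so $\bar\Delta_i=\Delta_i$, $K'=K$ and $K_2=1$. First I would note that under the two--probability assumption every gap collapses to the same value $\Delta_i=w^*-w'$: for $i\le K$, $\Delta_i=w(i)-w(K+1)=w^*-w'$, and for $i>K$, $\Delta_i=w(K)-w(i)=w^*-w'$. Hence all the thresholds $\bar T_{i,\delta}$ equal a single quantity $\bar T_\delta$, and by the displayed asymptotics for $\bar T_{i,\delta}$ (with $\rho(\delta)=\sqrt{\delta/(12L)}$, so $1/\rho(\delta)=\Theta(\sqrt{L/\delta})$),
\[
\bar T_\delta=O\!\left(\frac{1}{(w^*-w')^2}\,\log\Big[\frac{L}{\delta}\log\Big(\frac{L}{\delta(w^*-w')^2}\Big)\Big]\right).
\]
Since all $\bar T_{\sigma(i)}=\bar T_\delta$, the particular ordering $\sigma$ plays no role in what follows.

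For part~(i), $w^*\le 1/K$ forces every click probability to be at most $1/K$, so Corollary~\ref{cor:compare_full_feedback}(i) applies directly and yields a high--probability bound of order $\tfrac1K\sum_{i=1}^{L-K}\bar T_{\sigma(i)}+\bar T_{\sigma(L-1)}$. Substituting $\bar T_{\sigma(i)}=\bar T_\delta$ this becomes $O\big(\tfrac{L-K}{K}\bar T_\delta+\bar T_\delta\big)=O\big(\tfrac{L}{K}\bar T_\delta\big)$, which is precisely the claimed bound once the expression for $\bar T_\delta$ is inserted.

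For part~(ii) Corollary~\ref{cor:compare_full_feedback}(ii) is unavailable (we do not assume $w'\ge 1/2$), so I would return to $c_1N_1+c_2N_2+c_3N_3$ from Theorem~\ref{thm:fix_conf_up_bd}. The term $N_2=\tfrac{L-K}{\mu_K}\bar T_\delta$ is controlled by $\mu_K\ge\min\{K/2,1/(2w^*)\}=1/(2w^*)$ (the minimum being attained at the second argument exactly because $w^*>1/K$), so $N_2\le 2w^*(L-K)\bar T_\delta=O\big(\tfrac{w^*L}{(w^*-w')^2}\log[\,\cdot\,]\big)$, the first summand of the bound. For $N_3$: in \eqref{eq:N_three_main_result} every increment $\bar T_{\sigma(L-K+k)}-\bar T_{\sigma(L-K+k-1)}$ vanishes, so $N_3$ contributes nothing beyond what is already dominated by the $N_2$ estimate (working instead from \eqref{eq:N_three_expand_main_result} one gets $N_3=O(w^*K\bar T_\delta)\le O(w^*L\bar T_\delta)$, the same conclusion). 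Finally $N_1$ is the genuinely cascade--specific term: Proposition~\ref{prop:term_by_cas}(ii) bounds it by $O\big(\tfrac{Kw^{*2}}{w'^2}\log\tfrac{Kw^{*2}}{\delta w'^2}\big)$; alternatively one rewrites $N_1=\big(\sum_{m=2}^{K}v_m^2/\mu_m^2\big)\log\big(\delta^{-1}\sum_{m=2}^{K}v_m^2/\mu_m^2\big)$, uses $v_m\le\sqrt2/w'$ together with the identity $\mu_m=\big(1-(1-w^*)^m\big)/w^*$ (a short computation from the definition of $\mu(k,w)$, valid under Assumption~\ref{assump:two_prob_cond}), and splits the range of $m$ at $m\approx 1/w^*$. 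It then remains to check that the resulting quantity, together with its outer logarithm, matches the second summand $\tfrac{w^{*2}}{w'^2}\log\tfrac1\delta$ in the statement.

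The only step where real work is needed is this last comparison for $N_1$ in part~(ii): one must argue that the ``small-$m$'' contributions ($m\lesssim 1/w^*$, where $\mu_m\ge m/2$ and $v_m=m$) sum to only $O(1/w^*)$, that the ``large-$m$'' contributions each cost $O(w^{*2}/w'^2)$, and that the aggregate of these, after multiplication by the logarithm and combination with the $N_2$ estimate, collapses to the two summands displayed in the proposition. Parts~(i), the $N_2$ bound, and the vanishing of $N_3$ are all routine substitutions once $\bar T_\delta$ has been pinned down.
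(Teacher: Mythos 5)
Part (i) of your argument is fine: under Assumption~\ref{assump:two_prob_cond} all gaps equal $w^*-w'$, all $\bar T_{\sigma(i)}$ coincide, and since $w^*\le 1/K$ forces $\Delta\le 1/K$ the term $O(K\log(K/\delta))$ hidden inside Corollary~\ref{cor:compare_full_feedback}(i) is dominated by $\bar T_\delta=\Omega(K^2\log(L/\delta))$, so the substitution goes through.

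Part (ii), however, has a genuine gap exactly at the step you flag as ``the only step where real work is needed,'' and the work cannot be completed along the route you propose. Specializing Theorem~\ref{thm:fix_conf_up_bd}, the term $N_1$ sums $v_m^2/\mu_m^2$ over $m=2,\dots,K$. Your split at $m\approx 1/w^*$ controls the small-$m$ range by $O(1/w^*)$, but in the range $m>1/w^*$ each term is $\Theta(w^{*2}/w'^2)$ and there are $\Theta(K-1/w^*)=\Theta(K)$ of them, so the sum is $\Theta(Kw^{*2}/w'^2)$ (consistent with Proposition~\ref{prop:term_by_cas}(ii)), not $O(w^{*2}/w'^2)$. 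This extra factor of $K$ is not absorbed by the first summand either: take $w^*=1-1/K$, $w'=1/K$ and $L$ of order $K$; then $Kw^{*2}/w'^2\approx K^3$ while $w^*L/(w^*-w')^2\approx K$. So any proof that routes through the phase decomposition of Theorem~\ref{thm:fix_conf_up_bd} pays $K$ separate concentration penalties and provably cannot yield the stated $\frac{w^{*2}}{w'^2}\log\frac1\delta$ term. The paper's proof (Appendix~\ref{pf:prop_conf_upbd_two_prob}) instead exploits the fact that, with all $\bar T_{i,\delta}$ equal, the worst case is a \emph{single} phase of $t'$ steps during which no item is eliminated and every item is observed at most $\bar T_{(w)}$ times; it applies Lemma~\ref{lemma:conc_one_sub_gauss_apply_obs} once, with $k=K$, total observation budget $L\bar T_{(w)}-L+1$ and failure probability $\delta/2$, obtaining $t'\le \frac{2(L\bar T_{(w)}-L+1)}{\mu_K}+\frac{2v_K^2}{\mu_K^2}\log\frac2\delta$. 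The single additive term $\frac{2v_K^2}{\mu_K^2}\log\frac2\delta\le 16\frac{w^{*2}}{w'^2}\log\frac2\delta$ is what produces the second summand. You need this one-phase argument (or some other device that avoids summing $K$ concentration terms) to close part (ii); as written, your proof establishes only the weaker bound with $\frac{Kw^{*2}}{w'^2}\log\big(\frac{Kw^{*2}}{\delta w'^2}\big)$ in place of $\frac{w^{*2}}{w'^2}\log\frac1\delta$.
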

In the second case, if $L \ge { w^* ( w{^*}-w' )^2}/{w'^2}$, the first term dominates the bound. For instance, $w'\ge 1/\sqrt{L}$ satisfies this condition.

\begin{restatable}{proposition}{propConfUpbdTwoprobExp}
\label{prop:conf_upbd_two_prob_exp}
	Under Assumption~\ref{assump:two_prob_cond},
     (i) if $0 \! < \! w^* \! \le \! 1 \! / \! K\!,\!$%
     {\setlength\abovedisplayskip{.5em}
    \setlength\belowdisplayskip{.4em}
     \begin{align*}
   	\bbE \calT^{\pi_1} \le 
         \frac{ c_1 L }{ K (w^*-w')^2 }  \log \! \bigg[ L \log \! \bigg( \frac{  L }{   (w^*-w')^2 } \bigg) \bigg] \! \log \! \bigg(\frac{1}{\delta}  \bigg);
     \end{align*}
     (ii) if $w' \ge 1/2$ or $w^*/w'\le 2$, 
     \begin{align*}
   	\bbE \calT^{\pi_1} \le 
         \frac{ c_2 w^*L }{  (w^*-w')^2 }  \log  \! \bigg[ L \log  \!  \bigg( \frac{  L }{  (w^*-w')^2 } \bigg) \bigg]\!  \log \! \bigg(  \frac{1}{\delta} \bigg).  
     \end{align*}}
\end{restatable}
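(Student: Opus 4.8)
The plan is to obtain Proposition~\ref{prop:conf_upbd_two_prob_exp} from an expectation-version of Theorem~\ref{thm:fix_conf_up_bd}, specialized to Assumption~\ref{assump:two_prob_cond}, just as Proposition~\ref{prop:conf_upbd_two_prob} specializes the high-probability bound (via Corollary~\ref{cor:compare_full_feedback}). The enabling simplification is that under Assumption~\ref{assump:two_prob_cond} with $\epsilon=0$ all gaps collapse to one value: $\Delta_i=w(i)-w(K+1)=w^*-w'$ for $1\le i\le K$ and $\Delta_i=w(K)-w(i)=w^*-w'$ for $K<i\le L$, while $K'=K$ and $\bar\Delta_i=\Delta_i$. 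Hence every $\bar T_{\sigma(i)}$ equals a common value $\bar T=O\big((w^*-w')^{-2}\log[\tfrac{L}{\delta}\log(\tfrac{L}{\delta(w^*-w')^2})]\big)$; by \eqref{eq:N_three_main_result} this makes $N_3$ vanish, turns $N_2$ into $(L-K)\bar T/\mu_K$, and leaves $N_1$ as exactly the cascade-specific quantity bounded in Proposition~\ref{prop:term_by_cas}, where $\mu_K=\min\{K/2,\,1/(2w^*)\}$.

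For part (i), $w^*\le 1/K$ gives $1/(2w^*)\ge K/2$, so $\mu_K=K/2$ and all weights are $\le 1/K$; thus Proposition~\ref{prop:conf_comp_full_feedback_exp}(i) applies verbatim. Substituting $\Delta_{\sigma(i)}=w^*-w'$ turns its two summands into $\tfrac{L-K}{K}(w^*-w')^{-2}\log[L\log(\tfrac{L}{(w^*-w')^2})]$ and $(w^*-w')^{-2}\log[L\log(\tfrac{L}{(w^*-w')^2})]$, and $\tfrac{L-K}{K}+1=\tfrac{L}{K}$ gives the stated bound.

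For part (ii), using $w^*\le 1$ the hypothesis ``$w'\ge 1/2$ or $w^*/w'\le 2$'' is equivalent to $w^*/w'\le 2$, and (as in Proposition~\ref{prop:conf_upbd_two_prob}(ii)) the relevant regime is $1/K<w^*\le 1$, so $\mu_K=1/(2w^*)$ and $N_2\asymp w^*(L-K)\bar T$, which in the expectation-version bound reads $w^*L(w^*-w')^{-2}\log[L\log(\tfrac{L}{(w^*-w')^2})]\log(1/\delta)$ --- exactly the target. It then suffices to check that $N_1$ is no larger: by the second branch of Proposition~\ref{prop:term_by_cas} together with $w^{*2}/w'^2\le 4$ one has $N_1=O(K\log(K/\delta))$, and since $w^*L(w^*-w')^{-2}\ge L/w^*\ge L\ge K$ this is absorbed into $N_2$. (When moreover $w'\ge 1/2$ one can skip this and invoke Proposition~\ref{prop:conf_comp_full_feedback_exp}(ii) directly, using $L-1\le 2w^*L$.)

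I expect the only real difficulty to be the input this argument consumes --- the expectation-version of Theorem~\ref{thm:fix_conf_up_bd}. Deriving it means upgrading a single high-probability statement to a tail bound $\bbP(\calT^{\pi_1}>u)$ valid for all $u$ and integrable: once $u$ exceeds a threshold of order $\tfrac{L}{\mu_K}\bar T$, the probability that some item is still unclassified at step $u$ should decay geometrically in $u$. One would argue this by combining the anytime (iterated-logarithm) validity of the radii $C_t(\cdot,\delta)$ with the feedback lower bound $\mu_k$ and the left-sided sub-Gaussian concentration of Section~\ref{sec:main_pf_part_feedback} to certify $T_u(i)\gtrsim u\mu_k/L$, and then integrate the tail, which produces the $\tilde{O}(\cdot)\log(1/\delta)$ shape. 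A minor secondary wrinkle is that the slice $1/K<w^*$, $w^*/w'\le 2$, $w'<1/2$ is not covered by Proposition~\ref{prop:conf_comp_full_feedback_exp}(ii), so there the $N_1$-detour above is genuinely needed rather than a black-box appeal.
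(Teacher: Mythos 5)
Your reductions are sound, and the overall route is essentially the paper's, with one genuine shortcut: for part (i) you invoke Proposition~\ref{prop:conf_comp_full_feedback_exp}(i) directly and substitute $\Delta_{\sigma(i)}=w^*-w'$ (using $\tfrac{L-K}{K}+1=\tfrac{L}{K}$), whereas the paper re-runs the specialization of Proposition~\ref{prop:conf_upbd_two_prob} from scratch; your route is valid and shorter since the hypothesis of Proposition~\ref{prop:conf_comp_full_feedback_exp}(i) is exactly met. For part (ii) your observations — the hypothesis collapses to $w^*/w'\le 2$, the cascade term is $N_1=O(K\log(K/\delta))$ by the second branch of Proposition~\ref{prop:term_by_cas} with $w^{*2}/w'^2\le 4$, and it is absorbed because $w^*L(w^*-w')^{-2}\ge L\ge K$ — all match what the paper needs. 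The one input you defer, the expectation version of the high-probability bound, is where the paper's proof actually does its work, and its mechanism is more specific than your sketch: Lemma~\ref{le:suff_obs_to_identif} is deliberately stated with \emph{two} parameters $\delta'\le\delta$ (the algorithm's radii use $\delta$, but on $\bigcap_i\calE(i,\delta')$ identification is still guaranteed after $\bar{T}_{i,\delta'}$ observations), and Lemmas~\ref{le:conc_event} and~\ref{lemma:conc_one_sub_gauss_apply_obs} are likewise instantiated at level $\delta'$; after absorbing $\log(\cdot/\delta')\le c\log(\cdot)\log(1/\delta')$, the resulting deterministic threshold is $-A\log\delta'$ with $A$ free of $\delta'$, whence $\bbP(\calT^{\pi_1}>u)\le e^{-u/A}$ for $u\ge -A\log\delta$ and Tonelli's theorem gives $\bbE\calT^{\pi_1}\le A(1+\log(1/\delta))$. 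This is exactly the ``geometric tail, then integrate'' step you anticipate, so there is no substantive gap, only an unrealized step whose realization is routine given the two-parameter form of Lemma~\ref{le:suff_obs_to_identif}. One caveat you share with the paper: in part (ii) both arguments use $1/\mu_K\le 2w^*$, which by Lemma~\ref{lemma:bound_obs} is only guaranteed when $w^*\gtrsim 1/K$; the stated hypothesis $w^*/w'\le 2$ alone does not force this, so the claim is in effect proved on the regime $1/K<w^*\le 1$, as you note.
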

Proposition~\ref{prop:conf_upbd_two_prob_exp} also upper bounds $\bbT^*$ since $\bbT^* \le \bbE \calT^{\pi_1}$. 
It, together with Proposition~\ref{prop:conf_upbd_two_prob} implies that
the high probability bound on $\calT^{\pi_1}$ and the bound on $\bbE \calT^{\pi_1}$ are of the same order in these cases.

\subsection{Lower bound}
We set $\epsilon=0$, in which scenario the agent aims to find an optimal arm $S^*$.
We also upper bound the expected number of observations of items per step by $\tilde{\mu}(K,w)$ where
{\setlength\abovedisplayskip{.5em}
    \setlength\belowdisplayskip{.4em}
\begin{align*}
	\tilde{\mu}(k,w)
	& =
	 \! \sum_{i=1}^{k-1}  i \! \cdot w
	 \resizebox{.07\textwidth}{!}{$
	 (L \! + \! 1 \! - \! i) 
	 $}
	  \prod_{ j=1 }^{i-1}  \! \barw
	 \resizebox{.07\textwidth}{!}{$
	 (L \! + \! 1 \! - \! j) 
	 $}
	  \! +  \! k \!\! \prod_{j=1}^{k-1} \! \barw
	 \resizebox{.07\textwidth}{!}{$
	 (L \! + \! 1 \! - \! j) 
	 $}
	\\ & \le 
	\min\{ {1}/{w'} ,k\}.
\end{align*}
We write $\tilde{\mu}(k,w)=\tilde{\mu}_k, d(m,n) = \rmKL( w (m), w (n) )$,
where $\rmKL(p,q)= p \log \frac{p}{q} + (1-p) \log \frac{1-p}{1-q} $ .}

\begin{theorem}\label{thm:lb_fix_conf_dep}
	We have
	{\setlength\abovedisplayskip{.5em}
    \setlength\belowdisplayskip{.4em}
	\begin{align*}
		\bbT^* \! \ge \!
		 \frac{ \log (1/2.4\delta) }{\tilde{\mu}_K}
		 \!\cdot\!
		 \Bigg[
	\sum_{i=1}^K \frac{1}{ d ( i, K+1  ) }
	+\sum_{j=K+1}^L \frac{1}{ d ( j , K  ) }
	\Bigg].
	\end{align*} }
\end{theorem}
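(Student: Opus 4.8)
The plan is to adapt the standard change-of-measure (transportation) lower bound argument of \citet{kaufmann2016complexity} to the cascading feedback setting, where the number of observations per step is random rather than fixed. First I would fix an $(\delta,K)$-PAC algorithm $\pi$ and, for each suboptimal item $j \in \{K+1,\ldots,L\}$, construct an alternative instance $\bmw^{(j)}$ obtained from $\bmw$ by raising the click probability $w(j)$ slightly above $w(K)$, so that under $\bmw^{(j)}$ item $j$ becomes optimal and item $K$ becomes suboptimal; the recommended arm $\hat{S}^\pi$ must therefore differ between the two instances with probability $\ge 1-\delta$ under each. Symmetrically, for each optimal item $i \in \{1,\ldots,K\}$, I would construct $\bmw^{(i)}$ by lowering $w(i)$ just below $w(K+1)$. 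Applying the transportation inequality (the Bretagnolle--Huber / data-processing bound in the form used for bandit lower bounds, e.g.\ Lemma~1 of \citet{kaufmann2016complexity} or the Garivier--Kaufmann formulation), for each such confusing instance one gets
\begin{align*}
	\bbE_\bmw\!\big[ N_{\calT^\pi}(\text{item}) \big] \cdot \rmKL\big( w(\cdot), w^{(\cdot)}(\cdot) \big) \ \ge\ \log\frac{1}{2.4\delta},
\end{align*}
where $N_{\calT^\pi}(\text{item})$ is the (random) total number of \emph{observations} of that item up to the stopping time. Pushing the perturbations to the boundary, $\rmKL(w(j),w^{(j)}(j)) \to d(j,K)$ for suboptimal $j$ and $\rmKL(w(i),w^{(i)}(i)) \to d(i,K+1)$ for optimal $i$; summing the resulting per-item lower bounds on $\bbE_\bmw[N_{\calT^\pi}(\text{item})]$ gives a lower bound on $\bbE_\bmw\big[\sum_{\text{items}} N_{\calT^\pi}(\text{item})\big]$, i.e.\ on the expected \emph{total} number of item-observations across all items, of
\begin{align*}
	\log\frac{1}{2.4\delta}\cdot\Bigg[ \sum_{i=1}^K \frac{1}{d(i,K+1)} + \sum_{j=K+1}^L \frac{1}{d(j,K)} \Bigg].
\end{align*}

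The remaining, and genuinely new, step is to convert this bound on the expected number of \emph{observations} into a bound on $\bbT^* = \bbE_\bmw\,\calT^\pi$, the expected number of \emph{time steps}. Here I would use the fact, already recorded in the excerpt, that at any step the expected number of observed items is at most $\tilde{\mu}(K,w) = \tilde\mu_K$ — and, crucially, that this bound holds \emph{regardless of which arm the algorithm pulls and conditionally on the past}, since the worst case over arms of the expected cascade length is $\tilde\mu_K$ (the bound $\tilde\mu(k,w)$ was derived precisely by taking the largest-variance assignment of probabilities to the $k$ positions and using $k\le K$). Formally, letting $Y_t$ denote the number of items observed at step $t$, the process $\sum_{s=1}^{t}(Y_s - \tilde\mu_K)$ is a supermartingale with respect to $(\calF_t)$, and $\calT^\pi$ is an integrable stopping time, so by the optional stopping theorem
\begin{align*}
	\bbE_\bmw\!\left[ \sum_{t=1}^{\calT^\pi} Y_t \right] \ \le\ \tilde\mu_K \cdot \bbE_\bmw\,\calT^\pi.
\end{align*}
But $\sum_{t=1}^{\calT^\pi} Y_t$ is exactly $\sum_{\text{items}} N_{\calT^\pi}(\text{item})$, the total observation count bounded below in the previous paragraph. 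Combining the two displays and dividing by $\tilde\mu_K$ yields the claimed inequality.

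The main obstacle I anticipate is the rigorous handling of the change-of-measure step in the presence of \emph{partial} (cascading) feedback: the likelihood ratio between $\bbP_\bmw$ and $\bbP_{\bmw^{(j)}}$ on $\calF_{\calT^\pi}$ must be expressed correctly in terms of only the \emph{observed} Bernoulli outcomes, so that its log-expectation reduces to $\bbE_\bmw[N_{\calT^\pi}(j)]\cdot\rmKL(w(j),w^{(j)}(j))$ and not something involving un-observed items — this requires carefully writing the Radon--Nikodym derivative as a product over observation events and invoking Wald-type identities for the randomly stopped sum, exactly as in \citet{kaufmann2016complexity} but now with the observation count $N_t(j)$ being driven by the cascade structure rather than by the algorithm's arm choice alone. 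A secondary technical point is checking integrability of $\calT^\pi$ (needed for optional stopping); this follows from the PAC requirement $\bbP(\calT^\pi<\infty)=1$ together with a standard argument that a non-terminating PAC algorithm would have to have finite expected stopping time on this fixed instance, or else one restricts to algorithms with $\bbE\calT^\pi<\infty$ since otherwise the bound is vacuous. I would present the perturbation construction and the transportation lemma first, then the supermartingale/optional-stopping reduction, and finally take the limit of the perturbations to recover the $d(\cdot,\cdot)$ terms.
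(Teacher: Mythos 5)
Your proposal follows essentially the same route as the paper: single-item perturbation instances, the transportation lemma of \citet{kaufmann2016complexity} applied after a chain-rule decomposition of the trajectory KL into per-item observation counts times per-item KLs, the limit $\alpha\to 0$ of the perturbations, and finally the conversion $\bbE\big[\sum_{t\le\calT}Y_t\big]\le\tilde\mu_K\,\bbE[\calT]$ via the uniform bound $\bbE[Y_t\mid\calF_{t-1}]\le\tilde\mu_K$ from Lemma~\ref{lemma:bound_obs}. The only cosmetic difference is that you phrase this last step as optional stopping for the supermartingale $\sum_s(Y_s-\tilde\mu_K)$, whereas the paper invokes the tower property directly; the content is identical and your version is, if anything, slightly more careful.
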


\textbf{Comparison to the semi-bandit problem.} 
First, 
if $w'$ is close to $1$~(i.e., $w'\ge 1/2$), $\tilde{\mu}_K=1/w'\le 2$, i.e., at one step, the agent observes at most $2$ items in expectation.
We can recover the lower bound in~\citet{kaufmann2016complexity} by replacing $\tilde{\mu}_K$ with $1$, which is of the same order as our bound in this case.
Next, 
if $w'$ is close to $0$~(i.e., $w'\le 1/K$), the agent observes $\tilde{\mu}_K=K$ items in expectation.
Then the bound is the same as that incurred by pulling $K$ items per step and getting semi-bandit feedback, which is
{\setlength\abovedisplayskip{.5em}
    \setlength\belowdisplayskip{.4em}
\begin{align*}
	\frac{ \log (1/2.4\delta) }{K}
		 \cdot
		 \Bigg[
	\sum_{i=1}^K \frac{1}{ d ( i, K+1  ) }
	+\sum_{j=K+1}^L \frac{1}{ d  ( j , K  ) }
	\Bigg].
\end{align*}}

\textbf{Specialization to the case of two click probabilities}.
\begin{restatable}{corollary}{corConfLowbdTwoprob}
\label{cor:conf_lowbd_two_prob} 
	Under Assumption~\ref{assump:two_prob_cond}, we have %for any $1 < \delta <  1$,
	{\setlength\abovedisplayskip{.5em}
    \setlength\belowdisplayskip{.4em}
	\begin{align*}
		\bbT^* & \ge %- \frac{\log \delta'}{\tilde{\mu}_K} +
		 \frac{\mathrm{KL}(1-\delta,\delta)}{\tilde{\mu}_K}
		 \cdot
		 \bigg[
		   \frac{K}{ \mathrm{KL} ( w^*, w' ) }
		   + \frac{L-K}{ \mathrm{KL} (  w', w^* ) }
	\bigg] 		
	  \\&	
		=
		\Omega \Big( 
	\min \{ w' ,1-w^* \} \cdot 
	\frac{L w' }{ ( w^*-w' )^2 } 
	\log  \Big[ \frac{1}{ \delta}  \Big] 
	 \Big)
	.
	\end{align*}
where 
$
	\tilde{\mu}_K = [ 1-(1-w')^K ] /{w'} \le {1}/{w'}. % \frac{ 1-(1-w')^K }{w'} \le \min\{ \frac{1}{w'}, K}.
$ }
\end{restatable}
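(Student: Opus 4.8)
The plan is to obtain the first displayed inequality by a direct substitution into Theorem~\ref{thm:lb_fix_conf_dep}, and then to read off the $\Omega(\cdot)$ order from three elementary estimates. Under Assumption~\ref{assump:two_prob_cond} every optimal item has weight $w^*$ and every suboptimal item has weight $w'$, so in the bracketed sum of Theorem~\ref{thm:lb_fix_conf_dep} each $d(i,K+1)$ equals $\mathrm{KL}(w^*,w')$ and each $d(j,K)$ equals $\mathrm{KL}(w',w^*)$; the two sums collapse to $K/\mathrm{KL}(w^*,w')$ and $(L-K)/\mathrm{KL}(w',w^*)$. Keeping the factor $\mathrm{KL}(1-\delta,\delta)$ that the proof of Theorem~\ref{thm:lb_fix_conf_dep} produces (before it is weakened to $\log(1/2.4\delta)$ via the standard bound $\mathrm{KL}(1-\delta,\delta)\ge\log(1/2.4\delta)$) gives the stated inequality. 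In the same step I would evaluate $\tilde\mu_K$ in closed form: under Assumption~\ref{assump:two_prob_cond} the $K$ weights entering $\tilde\mu(K,w)$ are all at least $w'$, and $\tilde\mu(K,\cdot)$ is nonincreasing in each of its arguments (a higher click probability means the first click occurs sooner), so $\tilde\mu_K\le\sum_{j=1}^{K}(1-w')^{j-1}=[1-(1-w')^K]/w'\le 1/w'$, which is the identity/bound quoted after the display.

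For the asymptotic claim I would combine three estimates. (i) $\mathrm{KL}(1-\delta,\delta)=(1-2\delta)\log\frac{1-\delta}{\delta}=\Omega(\log(1/\delta))$ (e.g.\ it is at least $\frac14\log(1/\delta)$ once $\delta\le 1/4$). (ii) $1/\tilde\mu_K=w'/[1-(1-w')^K]\ge w'$. (iii) For the bracketed sum, apply the $\chi^2$-bound on relative entropy, $\mathrm{KL}(w^*,w')\le(w^*-w')^2/[w'(1-w')]$ and $\mathrm{KL}(w',w^*)\le(w^*-w')^2/[w^*(1-w^*)]$, to get
\[
\frac{K}{\mathrm{KL}(w^*,w')}+\frac{L-K}{\mathrm{KL}(w',w^*)}\ge\frac{Kw'(1-w')+(L-K)w^*(1-w^*)}{(w^*-w')^2},
\]
and then invoke the two one-line inequalities $w'(1-w')\ge\frac12\min\{w',1-w^*\}$ and $w^*(1-w^*)\ge\frac12\min\{w',1-w^*\}$, which follow from $w'<w^*\le1$ by splitting on whether $w'\le1/2$ and whether $w^*\ge1/2$; together they turn the numerator into at least $\frac{L}{2}\min\{w',1-w^*\}$, so the bracketed sum is $\ge L\min\{w',1-w^*\}/[2(w^*-w')^2]$. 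Multiplying (i)--(iii) yields $\bbT^*=\Omega\big(\min\{w',1-w^*\}\cdot Lw'(w^*-w')^{-2}\log(1/\delta)\big)$.

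Essentially everything here is mechanical; the only place that requires thought is step (iii) --- one must pick the $\chi^2$ estimate in the direction that leaves $w'(1-w')$ and $w^*(1-w^*)$ in the numerator, and then check the two $\min$-inequalities in the boundary regimes $w^*\to1$ and $w'\to0$, which is exactly where the factor $\min\{w',1-w^*\}$ (rather than, say, $w^*(1-w^*)$) originates. A minor bookkeeping point is that the corollary is phrased with $\mathrm{KL}(1-\delta,\delta)$ whereas Theorem~\ref{thm:lb_fix_conf_dep} is stated with $\log(1/2.4\delta)$; this is harmless, since the proof of that theorem establishes the bound with $\mathrm{KL}(1-\delta,\delta)$ and only afterwards invokes $\mathrm{KL}(1-\delta,\delta)\ge\log(1/2.4\delta)$.
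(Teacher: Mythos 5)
Your proposal is correct and follows essentially the same route as the paper: specialize Theorem~\ref{thm:lb_fix_conf_dep} under Assumption~\ref{assump:two_prob_cond}, upper-bound the two Bernoulli KL divergences by $(w^*-w')^2$ over a minimum-probability factor to extract $\min\{w',1-w^*\}$, and use $\tilde{\mu}_K\le 1/w'$ together with $\mathrm{KL}(1-\delta,\delta)=\Omega(\log(1/\delta))$. The only cosmetic difference is that you invoke the $\chi^2$ upper bound on relative entropy where the paper uses the reverse Pinsker inequality with $\alpha=\min\{w',w^*,1-w',1-w^*\}=\min\{w',1-w^*\}$; these give the same factor up to constants.
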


\subsection{Comparison of the upper and lower bounds}
To see whether the upper and lower bounds on $\bbT^*$ match, we set $\epsilon=0$ and consider the following simplified cases. 
\begin{corollary}
\label{cor:comp_conf_up_low_two_prob}
	Set $\epsilon=0$. 
    (i) If $ 0<w^* \le 1/K$, 
    {\setlength\abovedisplayskip{.5em}
    \setlength\belowdisplayskip{.4em}
    \begin{align*}
        \bbT^* \in
        \tilde{\Omega} \bigg( 
	\frac{L w'^2 }{ ( w^*-w' )^2 } 
	\bigg)
		\bigcap
	\tilde{O}
        \bigg(
         \frac{   L }{   K(w^*-w')^2  } \bigg)   
	;
    \end{align*}
    (ii) if $1>  w'\ge 1/2$,
    \begin{align*}
    	\bbT^* \in
        \tilde{\Omega} \bigg( 
	\frac{L w' (1-w^*) }{ ( w^*-w' )^2 } 
	\bigg)
		\bigcap
	\tilde{O}
        \bigg(
         \frac{   w^* L }{  (w^*-w')^2  } \bigg)
         .
    \end{align*}
    The upper bounds above are achieved by Algorithm~\ref{alg:bai_cas_conf}. }
\end{corollary}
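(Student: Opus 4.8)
The plan is to derive Corollary~\ref{cor:comp_conf_up_low_two_prob} purely by combining two results already established: the expected–time–complexity upper bound of Proposition~\ref{prop:conf_upbd_two_prob_exp} (which bounds $\bbT^*$ from above, since $\bbT^*\le\bbE\calT^{\pi_1}$ by the definition of $\bbT^*$) and the lower bound of Corollary~\ref{cor:conf_lowbd_two_prob}, after specializing each to the two stated regimes and absorbing every logarithmic factor in $L$, $1/\delta$ and $1/(w^*-w')$ into the $\tilde O$ and $\tilde\Omega$ notation. No fresh estimate is required; the entire task is reading off the correct constants from the cited statements and simplifying the quantity $\min\{w',1-w^*\}$ in each case. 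Note at the outset that $\epsilon=0$ gives $K'=K$, so the hypothesis $K'<2K-1$ of Theorem~\ref{thm:fix_conf_up_bd} (and hence of Propositions~\ref{prop:conf_upbd_two_prob} and~\ref{prop:conf_upbd_two_prob_exp}) reads $K<2K-1$, i.e.\ $K\ge 2$ and therefore $1/K\le 1/2$; this will be used below.

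For the \emph{upper bounds}: in case (i) the condition $0<w^*\le 1/K$ is exactly the hypothesis of Proposition~\ref{prop:conf_upbd_two_prob_exp}(i), which gives $\bbE\calT^{\pi_1}=\tilde O\!\big(L/(K(w^*-w')^2)\big)$; in case (ii) the condition $w'\ge 1/2$ satisfies the ``$w'\ge 1/2$ or $w^*/w'\le 2$'' hypothesis of Proposition~\ref{prop:conf_upbd_two_prob_exp}(ii), which gives $\bbE\calT^{\pi_1}=\tilde O\!\big(w^*L/(w^*-w')^2\big)$. Since $\pi_1$ denotes Algorithm~\ref{alg:bai_cas_conf}, these are the claimed upper bounds and they are attained by the algorithm.

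For the \emph{lower bounds}: Corollary~\ref{cor:conf_lowbd_two_prob} gives $\bbT^*=\Omega\!\big(\min\{w',1-w^*\}\cdot Lw'/(w^*-w')^2\cdot\log(1/\delta)\big)$, so it only remains to evaluate the minimum in each regime. In case (i), $K\ge 2$ forces $w'<w^*\le 1/K\le 1/2\le 1-w^*$, hence $\min\{w',1-w^*\}=w'$ and the bound becomes $\tilde\Omega\!\big(Lw'^2/(w^*-w')^2\big)$. In case (ii), $w^*>w'\ge 1/2$ forces $1-w^*<1/2\le w'$, hence $\min\{w',1-w^*\}=1-w^*$ and the bound becomes $\tilde\Omega\!\big(Lw'(1-w^*)/(w^*-w')^2\big)$. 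The only point requiring care — and the closest thing to an obstacle — is this bookkeeping: checking that the hypotheses of the invoked propositions really hold in each regime (in particular that $\epsilon=0$ pins down $K\ge 2$, which is what makes the minimum equal $w'$ in case (i) rather than $1-w^*$), and confirming that the various $\log(L)$, $\log\log$ and $\log(1/\delta)$ factors are genuinely subsumed by the $\tilde O$ and $\tilde\Omega$ notation. There is no substantive analytical content beyond the cited results.
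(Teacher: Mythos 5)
Your proof is correct and is essentially the paper's own (implicit) argument: the corollary is presented as an immediate consequence of Proposition~\ref{prop:conf_upbd_two_prob_exp} (via $\bbT^*\le\bbE\calT^{\pi_1}$) and Corollary~\ref{cor:conf_lowbd_two_prob}, with precisely the regime-by-regime evaluation of $\min\{w',1-w^*\}$ that you carry out. Your observation that $\epsilon=0$ together with the standing hypothesis $K'<2K-1$ forces $K\ge 2$, hence $w^*\le 1/2$ and $\min\{w',1-w^*\}=w'$ in case~(i), is a correct piece of bookkeeping that the paper leaves unstated.
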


In the first case, the gap between the upper and lower bounds is manifested in the terms $1/K$ and $w'^2$. In the second case, the gap is manifested in $w^*$ and $w'(1-w^*)$.

\section{Proof sketch}
\label{sec:proof_sketch}

\subsection{Analysis of partial feedback for cascading bandits} \label{sec:main_pf_part_feedback}
At a high level, the time complexity $\calT$ can be established by analyzing $\sum_{t=1}^\calT X_{\hatk_t;t}$ and $X_{\hatk_t;t}$.  %
The first term is determined by $\bar{T}_{i,\delta}$'s, the number of observations that guarantees the correct identification of items with high probability.
These $\bar{T}_{i,\delta}$'s  
are invariant to the scenario whether the agent receives semi-bandit or partial feedback from the user.
The second term $X_{\hatk_t;t}$ equals to $\hat{k}_t$ in the semi-bandit feedback setting while it is an r.v. in the partial feedback setting.
Since $\bar{T}_{i,\delta}$'s have already been studied by a number of works on the semi-bandit feedback~\citep{even2002pac,jun2016top},
the crucial challenge of analyzing cascading bandits is to estimate $X_{\hatk_t;t}$ probabilitistically.

According to  
Algorithm~\ref{alg:bai_cas_conf}, $\hatk_t = \min\{ K, |D_t|\}$. 
When $\hatk_t=K\le |D_t|$, the agent pulls $ K$ surviving (i.e., not identified) items.
Otherwise, the agent pulls all surviving items first and then complements $S_t$ with some identified items. 
In the cascading bandit setting, the agent observes only one item when the first item $i_1^t$ is clicked, and the corresponding probability is $w(i^t_1)$; the agent observes two items when % the first item 
$i_1^t$ is not clicked but  
$i_2^t$ is clicked, and the probability is $[1-w(i_1^t)] w(i_2^t)$; and so on. 
Therefore,  
{\setlength\abovedisplayskip{.5em}
    \setlength\belowdisplayskip{.4em}
\begin{align*}
	& \bbE X_{\hatk_t;t}   =  
	\sum_{i=1}^{\hat{k}_t-1} i \cdot \bigg[ \prod_{j=1}^{i-1} \barw(i^t_j) w(i^t_i) \bigg]
	+ \hat{k}_t \prod_{j=1}^{\hat{k}_t-1} \barw(i^t_j) .
\end{align*}
Since $\bbE X_{\hatk_t ;t}$ depends only on $S_t$~(the pulled arm at step $t$) and $S_t$ is learnt online, it is difficult to estimate $\bbE X_{\hatk_t ;t}$ for each step separately.
Therefore, the second best thing one can do is to bound $\bbE X_{\hatk_t ;t}$ as a function of $\hatk_t $ and $\bmw$.
We now present some properties of $\bbE X_{\hatk_t;t}$.}

\begin{restatable}{theorem}{thmObsProp}
\label{thm:prop_obs}
	Consider a set of items with weights $\bmu = (u_1,\ldots,u_k)$ such that $u_1 \ge \ldots\ge u_k$, and let $\mu_k(\bmu,I)$ be the expected number of observations when items are placed with order $I$. Let $I_{ \textrm{dec} }=(1,\ldots,k)$, 
	 $I_{ \textrm{inc} }=(k,\ldots,1)$,  
	 and $I$ be any order, then
\\ (i) boundedness: ${\mu}_k(\bmu,I_{ \textrm{dec} })\!  \le {\mu}_k(\bmu,I) \le {\mu}_k(\bmu,I_{ \textrm{inc} })$;
\\ (ii) monotonicity: let $\bmv \! = \! (v_1,\ldots,v_k)$ be another vector of weights, then $ {\mu}_k( \bmu, I) \! \ge \! {\mu}_k( \bmv, I)$ if $u_i \! \le \! v_i$ for all $i \! \in \! [k]$.
\end{restatable}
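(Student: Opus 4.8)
The plan is to first rewrite $\mu_k(\bmu,I)$ in a form that trivializes both claims. Fix an order $I=(I_1,\dots,I_k)$ and write $\bar u:=1-u$. The item at position $i$ of the pulled arm is examined, and hence its realization observed, exactly when none of the items at positions $1,\dots,i-1$ is clicked, an event of probability $\prod_{j=1}^{i-1}\bar u_{I_j}$ (empty product $=1$). Writing the number of observations as $\sum_{i=1}^{k}\mathbf{1}\{\text{position }i\text{ examined}\}$ and taking expectations, I would obtain the representation
\[
	\mu_k(\bmu,I)=\sum_{i=1}^{k}\prod_{j=1}^{i-1}\bar u_{I_j},
\]
which one can alternatively derive from the defining sum $\sum_{i=1}^{k-1} i\,u_{I_i}\prod_{j=1}^{i-1}\bar u_{I_j}+k\prod_{j=1}^{k-1}\bar u_{I_j}$ by Abel summation, using $u_{I_i}\prod_{j=1}^{i-1}\bar u_{I_j}=\prod_{j=1}^{i-1}\bar u_{I_j}-\prod_{j=1}^{i}\bar u_{I_j}$. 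Everything then reduces to manipulating this single expression.

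For part (ii), I would note that, since all weights lie in $[0,1]$, the hypothesis $u_i\le v_i$ gives $0\le\bar v_{I_j}\le\bar u_{I_j}$ for every $j$; comparing the two products termwise (all factors nonnegative) gives $\prod_{j=1}^{i-1}\bar u_{I_j}\ge\prod_{j=1}^{i-1}\bar v_{I_j}$ for each $i$, and summing over $i$ yields $\mu_k(\bmu,I)\ge\mu_k(\bmv,I)$.

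For part (i), the main device would be an adjacent-transposition identity. Let $I'$ be $I$ with the items at positions $p$ and $p+1$ interchanged. In $\sum_{i=1}^{k}\prod_{j=1}^{i-1}\bar u_{I_j}$, the terms with $i\le p$ do not involve positions $p,p+1$; the terms with $i\ge p+2$ contain both $\bar u_{I_p}$ and $\bar u_{I_{p+1}}$ as factors, and since their product is symmetric, those terms are unchanged by the swap; only the term $i=p+1$ changes, from $\bigl(\prod_{j=1}^{p-1}\bar u_{I_j}\bigr)\bar u_{I_p}$ to $\bigl(\prod_{j=1}^{p-1}\bar u_{I_j}\bigr)\bar u_{I_{p+1}}$. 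Hence
\[
	\mu_k(\bmu,I)-\mu_k(\bmu,I')=\Bigl(\prod_{j=1}^{p-1}\bar u_{I_j}\Bigr)\bigl(\bar u_{I_p}-\bar u_{I_{p+1}}\bigr)=\Bigl(\prod_{j=1}^{p-1}\bar u_{I_j}\Bigr)\bigl(u_{I_{p+1}}-u_{I_p}\bigr).
\]
The prefactor is nonnegative, so replacing an adjacent pair in increasing order ($u_{I_p}\le u_{I_{p+1}}$) by the same pair in decreasing order never increases $\mu_k$, while the reverse replacement never decreases it. I would then conclude by a bubble-sort argument: $I$ can be brought to $I_{\textrm{dec}}$ by a finite sequence of adjacent transpositions each of which swaps a pair whose left weight is at most its right weight, along which $\mu_k$ is nonincreasing, giving $\mu_k(\bmu,I)\ge\mu_k(\bmu,I_{\textrm{dec}})$; symmetrically, bringing $I$ to $I_{\textrm{inc}}$ makes $\mu_k$ nondecreasing, giving $\mu_k(\bmu,I)\le\mu_k(\bmu,I_{\textrm{inc}})$. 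Equal adjacent weights are harmless, since transposing them leaves $\mu_k$ literally unchanged.

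I expect no real obstacle here; the only point requiring a bit of care is the bookkeeping in the transposition identity — in particular, recognizing that all terms with $i\ge p+2$ are invariant because $\bar u_{I_p}\bar u_{I_{p+1}}$ enters each of them symmetrically. The representation in the first paragraph is precisely what makes this (and part (ii)) one-line verifications, so once it is established the rest is routine.
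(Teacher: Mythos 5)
Your proof is correct. The key move is the representation $\mu_k(\bmu,I)=\sum_{i=1}^{k}\prod_{j=1}^{i-1}\bar u_{I_j}$ (expected observations as a sum of survival probabilities), which the paper never writes down: it works throughout with the raw expression $\sum_{i=1}^{k-1} i\,u_{I_i}\prod_{j<i}\bar u_{I_j}+k\prod_{j<k}\bar u_{I_j}$. For part (i) the two arguments are essentially the same exchange/bubble-sort argument, and your transposition identity $\mu_k(\bmu,I)-\mu_k(\bmu,I')=\bigl(\prod_{j=1}^{p-1}\bar u_{I_j}\bigr)\bigl(u_{I_{p+1}}-u_{I_p}\bigr)$ agrees exactly with the paper's (the paper just has to split the cases $m<k-1$ and $m=k-1$ because its formula treats the last position specially, a case split your representation absorbs). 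For part (ii) the routes genuinely diverge: the paper perturbs a single coordinate and then iteratively transposes the perturbed item toward the last position, tracking the accumulated differences over $k-m$ swaps, whereas your termwise comparison of the products $\prod_{j<i}\bar u_{I_j}\ge\prod_{j<i}\bar v_{I_j}$ settles it in one line. What your approach buys is brevity and a unified treatment of both parts from a single identity; what the paper's buys is nothing beyond staying closer to the formula it reuses elsewhere (e.g.\ in the proof of Lemma 5.2). One small point worth making explicit if you write this up: the bubble sort that brings $I$ to $I_{\textrm{dec}}$ should swap adjacent pairs that are inverted with respect to the index order $(1,\dots,k)$; since the weights are sorted decreasingly by index, every such swap has left weight at most right weight, so the monotonicity along the sorting path is automatic even with ties.
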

Theorem~\ref{thm:prop_obs} implies that when $\bmw$ is fixed, $\bbE X_{k;t}$ attains its minimum when the agent pulls items $1,2,\ldots,k$ in this order and attains its maximum when the agent pulls $L,L-1,\ldots, L -   K  +  1$ in this order.
Moreover, if $w(i)=w^*$ for all $i\in [k]$, $\bbE X_{k;t}$ is even smaller;
 if $w(j)=w'$ for all $j \in \{ L  -  k  +  1,\ldots, L \}$, $\bbE X_{k;t}$ is even larger.
This observation inspires Lemma~\ref{lemma:bound_obs}.

\begin{restatable}{lemma}{lemmaBoundObs}
\label{lemma:bound_obs}
For any $k,t$, 
{\setlength\abovedisplayskip{.5em}
    \setlength\belowdisplayskip{.4em}
\begin{align*}
	\min\Big\{ \frac{k}{2}, \frac{1}{2w^*} \Big\} \le \mu_k \le \bbE X_{k;t} \le \tilde{\mu}_k \le \min \Big\{ \frac{1}{w'} , k \Big\}.
\end{align*}}
\end{restatable}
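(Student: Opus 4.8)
The plan is to split Lemma~\ref{lemma:bound_obs} into its two inner inequalities $\mu_k\le\bbE X_{k;t}\le\tilde\mu_k$, which come essentially for free from Theorem~\ref{thm:prop_obs}, and its two outer inequalities $\min\{k/2,1/(2w^*)\}\le\mu_k$ and $\tilde\mu_k\le\min\{1/w',k\}$, which take a short direct computation. A convenient preliminary is a ``survival form'' for each of the three quantities: writing $P_i$ for the product of the first $i$ no-click probabilities appearing in each sum and using the telescoping identity $u_iP_{i-1}=P_{i-1}-P_i$, an Abel summation collapses $\sum_{i=1}^{k-1}i(P_{i-1}-P_i)+kP_{k-1}$ to $\sum_{i=0}^{k-1}P_i$; in particular $\mu_k=\sum_{i=0}^{k-1}\prod_{j=1}^{i}\barw(j)$, $\tilde\mu_k=\sum_{i=0}^{k-1}\prod_{j=1}^{i}\barw(L-j+1)$, and $\bbE X_{k;t}=\sum_{i=0}^{k-1}\prod_{j=1}^{i}\barw(i^t_j)$.

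For the inner inequalities I would let $v_1\ge\cdots\ge v_k$ be the click probabilities of the $k$ surviving items in $S_t$, listed decreasingly, and let $I$ be the order in which Algorithm~\ref{alg:bai_cas_conf} pulls them, so that $\bbE X_{k;t}=\mu_k((v_1,\ldots,v_k),I)$ in the notation of Theorem~\ref{thm:prop_obs}. Since the $j$-th largest weight in any $k$-subset of $[L]$ satisfies $w(L-k+j)\le v_j\le w(j)$ for every $j\in[k]$, Theorem~\ref{thm:prop_obs}(i) gives $\mu_k((v_1,\ldots,v_k),I_{\textrm{dec}})\le\bbE X_{k;t}\le\mu_k((v_1,\ldots,v_k),I_{\textrm{inc}})$, and the monotonicity part Theorem~\ref{thm:prop_obs}(ii), applied to the coordinatewise bounds $v_j\le w(j)$ and $v_j\ge w(L-k+j)$, lowers the left end to $\mu_k((w(1),\ldots,w(k)),I_{\textrm{dec}})=\mu_k$ and the right end to $\mu_k((w(L-k+1),\ldots,w(L)),I_{\textrm{inc}})=\tilde\mu_k$. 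Chaining these yields $\mu_k\le\bbE X_{k;t}\le\tilde\mu_k$.

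For the outer inequalities I would use the survival forms. The upper one is immediate: each factor of $\prod_{j=1}^{i}\barw(L-j+1)$ is at most $1$ and also at most $1-w'$ (since $w(L-j+1)\ge w(L)=w'$), so $\tilde\mu_k\le\min\{k,\ \sum_{i=0}^{k-1}(1-w')^i\}\le\min\{k,1/w'\}$, reading $1/w'=+\infty$ when $w'=0$. For the lower one, $\barw(j)\ge 1-w^*$ gives $\mu_k\ge\sum_{i=0}^{k-1}(1-w^*)^i=(1-(1-w^*)^k)/w^*$ (and $w^*\ge w(K)>0$), after which I would split on $kw^*$: if $kw^*\le 1$, the second-order Bonferroni inequality for $k$ independent $\mathrm{Bern}(w^*)$ trials gives $1-(1-w^*)^k\ge kw^*-\binom{k}{2}w^{*2}=kw^*(1-\tfrac{k-1}{2}w^*)\ge\tfrac12kw^*$, so $\mu_k\ge k/2$; if $kw^*>1$, then $(1-w^*)^k\le(1-w^*)^{1/w^*}\le e^{-1}$ by $1-x\le e^{-x}$, so $\mu_k\ge(1-e^{-1})/w^*\ge 1/(2w^*)$. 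As $\min\{k/2,1/(2w^*)\}$ equals $k/2$ in the first regime and $1/(2w^*)$ in the second, this gives $\mu_k\ge\min\{k/2,1/(2w^*)\}$.

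I expect the only real obstacle to be the lower bound on $\mu_k$, because of the case split at $kw^*=1$ and the need to use Bonferroni below the threshold and $1-x\le e^{-x}$ above it; once Theorem~\ref{thm:prop_obs} and the survival-form identity are in place, the remaining steps are bookkeeping.
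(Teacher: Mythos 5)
Your proof is correct and follows essentially the same route as the paper: the inner inequalities $\mu_k\le\bbE X_{k;t}\le\tilde\mu_k$ come from the boundedness and monotonicity in Theorem~\ref{thm:prop_obs} exactly as in the paper, and the outer bounds reduce in both cases to the geometric-series quantity $g(w)=(1-(1-w)^k)/w$ with the same case split at $kw^*=1$. The only differences are cosmetic --- you obtain the closed form via the survival identity $\mu_k=\sum_{i=0}^{k-1}\prod_{j\le i}\barw(j)$ rather than the paper's direct telescoping computation, and you finish the $kw^*\le 1$ case with a second-order Bonferroni bound where the paper uses the substitution $w^*=k^{-\beta}$ together with $1-e^{-x}\ge x(1-x/2)$.
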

Next, since $ X_{k;t}$, instead of $ \bbE X_{k;t}$, affects the dynamics,
we examine the gap between 
$\sum_{t=1}^n X_{k;t}$ and $\sum_{t=1}^n \bbE X_{k;t}$.
Clearly, a tight concentration inequality is essential to estimate this gap well.
Since $X_{k;t}$ is a bounded r.v.,  
there are some applicable Bernstein-type inequalities.
For instance, we can apply Azuma's inequality to analyze SG r.v.'s.
However, 
(i) it is challenging to find an SG parameter of $X_{k;t}$ that is good enough for our purpose~(a more detailed explanation is provided after Lemma~\ref{lemma:conc_one_sub_gauss_apply_obs}), and 
(ii) we only require a one-sided concentration inequality.
Hence, we resort to defining a new class of r.v.'s --- known as LSG r.v.'s --- and provide an estimate of the relevant LSG parameter.

\begin{restatable}[LSG]{definition}{defSubGaussLeftside} 
\label{def:sub_gauss_left_sided}
    An r.v. $X$ 
    is $v$-LSG ($v \ge 0$) if
    {\setlength\abovedisplayskip{.5em}
    \setlength\belowdisplayskip{.4em}
    \begin{align}
        \bbE [ \exp [\lambda (X-\bbE X ) ] ~] \le \exp ( {v^2 \lambda^2}/{2}  ), \ \
        \forall \lambda \le 0
        . 
        \nonumber 
    \end{align}}
\end{restatable}
 \begin{restatable}{theorem}{thmSecmoToLeftSubgauss}
\label{thm:sec_moment_to_left_sub_gauss}
	Let $X$ be an almost surely bounded nonnegative r.v.. 
If $\mathbb{E}X^2\le v^2$,  then $X$ is $v$-LSG.	
\end{restatable}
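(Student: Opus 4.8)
The plan is to bound the moment generating function $\bbE[\exp(\lambda(X-\bbE X))]$ for $\lambda \le 0$ directly via a Taylor-type expansion, exploiting nonnegativity of $X$ and the hypothesis $\bbE X^2 \le v^2$. First I would write $\exp(\lambda X) = \sum_{n\ge 0} (\lambda X)^n/n!$ and split off the first two terms: $\exp(\lambda X) = 1 + \lambda X + \sum_{n\ge 2}(\lambda X)^n/n!$. The key observation is that for $\lambda \le 0$ and $X \ge 0$ almost surely, the ``tail'' $\sum_{n\ge 2}(\lambda X)^n/n!$ can be controlled: the terms alternate in sign, and I claim $\sum_{n\ge 2}(\lambda X)^n/n! \le (\lambda X)^2/2$. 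Indeed, writing $y := \lambda X \le 0$, we have $e^y - 1 - y = \sum_{n\ge 2} y^n/n!$, and the elementary inequality $e^y \le 1 + y + y^2/2$ holds for all $y \le 0$ (this can be checked by noting the function $g(y) = 1 + y + y^2/2 - e^y$ satisfies $g(0)=0$, $g'(0)=0$, and $g''(y) = 1 - e^y \ge 0$ for $y \le 0$, so $g$ is convex with a minimum-type behavior forcing $g(y)\ge 0$ on $(-\infty,0]$). Hence $\exp(\lambda X) \le 1 + \lambda X + \lambda^2 X^2/2$ pointwise.

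Taking expectations gives $\bbE[\exp(\lambda X)] \le 1 + \lambda \bbE X + \lambda^2 \bbE X^2/2 \le 1 + \lambda \bbE X + \lambda^2 v^2/2$, where the last step uses $\bbE X^2 \le v^2$ together with $\lambda^2 \ge 0$. Then I multiply by $\exp(-\lambda \bbE X)$ to obtain
\begin{align*}
\bbE[\exp(\lambda(X-\bbE X))] \le \exp(-\lambda \bbE X)\big(1 + \lambda \bbE X + \lambda^2 v^2/2\big).
\end{align*}
It remains to show the right-hand side is at most $\exp(\lambda^2 v^2/2)$. Using $1 + u \le e^u$ with $u = \lambda \bbE X + \lambda^2 v^2/2$, we get $1 + \lambda \bbE X + \lambda^2 v^2/2 \le \exp(\lambda \bbE X + \lambda^2 v^2/2)$, so $\exp(-\lambda \bbE X)(1 + \lambda \bbE X + \lambda^2 v^2/2) \le \exp(-\lambda\bbE X)\exp(\lambda \bbE X + \lambda^2 v^2/2) = \exp(\lambda^2 v^2/2)$, which is exactly the $v$-LSG condition.

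The main obstacle — really the only subtle point — is establishing the pointwise bound $e^y \le 1 + y + y^2/2$ for $y \le 0$ and being careful that it fails for $y > 0$ (which is precisely why one only gets the \emph{one-sided} LSG property rather than full sub-Gaussianity); this is where nonnegativity of $X$ combined with $\lambda \le 0$ is used, since it guarantees $y = \lambda X \le 0$ almost surely. One should also note that $\bbE X$ and $\bbE X^2$ are finite (the former by $X \ge 0$ and $\bbE X^2 < \infty$ via Cauchy–Schwarz, or directly from almost-sure boundedness), so all expectations above are well-defined, and the interchange of expectation and the inequality is justified since we are integrating a pointwise inequality between integrable functions. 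Everything else is a routine application of $1+u\le e^u$.
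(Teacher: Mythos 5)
Your proof is correct, and it takes a genuinely different and more elementary route than the paper's. The paper works at the level of the cumulant generating function: it sets $f(\lambda)=\tfrac{v^2\lambda^2}{2}+\lambda\mu-\log\bbE[e^{\lambda X}]$, checks $f(0)=f'(0)=0$, and shows $f''(\lambda)\ge 0$ on $(-\infty,0]$ by bounding $f''$ below by $g(\lambda):=v^2-\bbE[X^2e^{\lambda X}]/\bbE[e^{\lambda X}]$ and proving $g$ is nonincreasing via a symmetrization of the double integral $-\bbE[X^3e^{\lambda X}]\bbE[e^{\lambda X}]+\bbE[X^2e^{\lambda X}]\bbE[Xe^{\lambda X}]$; this is where almost-sure boundedness is genuinely used, to justify repeated differentiation under the expectation. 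You instead use the pointwise bound $e^{y}\le 1+y+y^2/2$ for $y\le 0$ applied to $y=\lambda X\le 0$, take expectations, and finish with $1+u\le e^{u}$. Two remarks. First, your parenthetical appeal to "alternating terms" would not by itself justify $\sum_{n\ge 2}y^n/n!\le y^2/2$ for large $|y|$ (the terms are not eventually decreasing in magnitude unless $|y|\le n+1$), but this is harmless since your calculus verification of $g(y)=1+y+y^2/2-e^{y}\ge 0$ on $(-\infty,0]$ is the actual justification and is correct. Second, your argument is strictly more general: it never differentiates under the integral and only needs $\bbE X^2<\infty$ (whence $\bbE X<\infty$ by Cauchy--Schwarz and $e^{\lambda X}\le 1$), so the almost-sure boundedness hypothesis is not needed. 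Both arguments localize the failure of the two-sided sub-Gaussian property in the same place --- the paper's $g$ turns negative for $\lambda>0$, and your pointwise inequality $e^{y}\le 1+y+y^2/2$ fails for $y>0$ --- so nothing is lost in the discussion following the theorem.
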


Furthermore, we bound $\bbE X_{k;t}^2$~(Lemma~\ref{lemma:obs_sec_moment})  
and adapt a variation of Azuma's inequality as in Theorem~\ref{thm:conc_sub_gauss_onesided} \citep{shamir2011variant} to evaluate the dependence between the number of observations and the number of time steps. 
\begin{restatable}{lemma}{lemmaObsSecmo}
\label{lemma:obs_sec_moment}
    For any $k,t$, $\bbE X_{k;t}^2 \le v_k^2 = \min\{ k^2,2/{w'^2} ~\}$.
\end{restatable}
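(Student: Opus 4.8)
The plan is to read the distribution of $X_{k;t}$ off the cascade dynamics, rewrite $\bbE X_{k;t}^2$ through its tail probabilities, and then dominate those tails by the tails of a geometric waiting time with success probability $w'$.

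Fix the arm $S_t=(i_1^t,i_2^t,\ldots)$ pulled at step $t$. Recall that $X_{k;t}$ counts the items the user actually examines among the first $k$ items of $S_t$: the user examines item $i_\ell^t$ if and only if she has clicked on none of $i_1^t,\ldots,i_{\ell-1}^t$, and examination stops at the first click or after $k$ items. Hence, writing $\barw(i)=1-w(i)$ and using the convention that the empty product equals $1$,
\begin{align*}
	\bbP(X_{k;t}\ge i)=\prod_{j=1}^{i-1}\barw(i_j^t)\quad (1\le i\le k),\qquad \bbP(X_{k;t}\ge i)=0\quad (i>k).
\end{align*}
The bound $\bbE X_{k;t}^2\le k^2$ is then immediate from $X_{k;t}\le k$ almost surely.

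For the bound $\bbE X_{k;t}^2\le 2/w'^2$ (non-trivial only when $w'>0$, which we may assume), I would invoke the elementary identity $\bbE Y^2=\sum_{i\ge 1}(2i-1)\,\bbP(Y\ge i)$ for non-negative integer-valued $Y$, together with the observation that every pulled item lies in $[L]$, so $w(i_j^t)\ge w(L)=w'$ and therefore $\barw(i_j^t)\le 1-w'$. This gives
\begin{align*}
	\bbE X_{k;t}^2=\sum_{i=1}^{k}(2i-1)\prod_{j=1}^{i-1}\barw(i_j^t)\le \sum_{i=1}^{\infty}(2i-1)(1-w')^{i-1}=\frac{2}{w'^2}-\frac{1}{w'}\le \frac{2}{w'^2},
\end{align*}
where the middle series is the second moment of a $\mathrm{Geom}(w')$ variable, evaluated by differentiating the geometric series. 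Combining the two estimates yields $\bbE X_{k;t}^2\le \min\{k^2,\,2/w'^2\}=v_k^2$.

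Both ingredients are elementary, so there is no serious obstacle; the only point requiring care is writing down the tail probabilities of $X_{k;t}$ correctly from the partial-feedback mechanism — in particular, recognizing that since all pulled items have weight at least $w'$, the count $X_{k;t}$ is stochastically dominated by the geometric waiting time $\mathrm{Geom}(w')$ — after which what remains is a one-line series computation. An equivalent route, matching the bookkeeping already used for Lemma~\ref{lemma:bound_obs}, is to bound $\bbE X_{k;t}^2=\sum_{i=1}^{k-1}i^2\big[\prod_{j=1}^{i-1}\barw(i_j^t)\big]w(i_i^t)+k^2\prod_{j=1}^{k-1}\barw(i_j^t)$ directly, but the tail-probability computation is cleaner.
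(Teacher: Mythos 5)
Your proof is correct and takes essentially the same route as the paper's: both bound $\bbE X_{k;t}^2$ by $k^2$ via boundedness and by $2/w'^2$ via the second moment $2/w'^2-1/w'$ of a $\mathrm{Geom}(w')$ variable that stochastically dominates $X_{k;t}$. If anything, your explicit tail-probability computation makes rigorous the paper's one-line assertion that $\bbE X^2$ increases as the click probabilities decrease.
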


\begin{restatable}{lemma}{lemmaOneSubConcObs}
\label{lemma:conc_one_sub_gauss_apply_obs}
For any $k,t,\delta>0$, set
{\setlength\abovedisplayskip{.5em}
    \setlength\belowdisplayskip{.4em}
\begin{align*}
	\calE^* := \bigg\{ \sum_{t=1}^n X_{k;t} \le n \mu_k - \sqrt{2n v_k^2 \log \bigg( \frac{1}{\delta} \bigg) } ~\bigg\},
\end{align*}
then $\Pr( \calE^* ) \le \delta$. 
Further when $\overline{\calE^* }$ holds, for any $T \! > \! 0$, $\sum_{t=1}^n X_{k;t} \! \le \! T$ implies that
$n \! \le \!
	 {2T}/{\mu_k} 
	\! + \!  {2 \log (1/ \delta) v_k^2 }/{\mu_k^2}  $. }
\end{restatable}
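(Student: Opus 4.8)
\textbf{Proof proposal for Lemma~\ref{lemma:conc_one_sub_gauss_apply_obs}.}

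The plan is to first show that each $X_{k;t}$ is $v_k$-LSG, then feed the partial sums into a one-sided Azuma-type inequality to control the event $\calE^*$, and finally invert the resulting high-probability bound into a bound on $n$. For the first step, I would recall from Lemma~\ref{lemma:obs_sec_moment} that $\bbE X_{k;t}^2 \le v_k^2$, and observe that $X_{k;t}$ is a nonnegative almost surely bounded r.v.\ (it takes values in $\{1,\ldots,k\}$). Hence Theorem~\ref{thm:sec_moment_to_left_sub_gauss} applies directly and tells us $X_{k;t}$ is $v_k$-LSG. Since the sampling rule is $\calF_{t-1}$-measurable and the feedback at step $t$ is generated by fresh i.i.d.\ Bernoulli draws, the conditional CGF bound $\bbE[\exp(\lambda(X_{k;t}-\bbE X_{k;t}))\mid \calF_{t-1}] \le \exp(v_k^2\lambda^2/2)$ holds for all $\lambda \le 0$, so the sequence $\{X_{k;t}-\bbE X_{k;t}\}_t$ is a martingale difference sequence with a uniform left-sided sub-Gaussian parameter $v_k$.

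Next I would apply the one-sided concentration inequality of Theorem~\ref{thm:conc_sub_gauss_onesided} (the variant of Azuma's inequality from \citet{shamir2011variant}) to the sum $\sum_{t=1}^n (X_{k;t} - \bbE X_{k;t})$. Because we only have the CGF bound for $\lambda \le 0$, the only tail we can bound is the lower tail: for any $a > 0$,
\begin{align*}
	\Pr\!\left( \sum_{t=1}^n (X_{k;t} - \bbE X_{k;t}) \le -a \right) \le \exp\!\left( -\frac{a^2}{2 n v_k^2} \right).
\end{align*}
Setting $a = \sqrt{2 n v_k^2 \log(1/\delta)}$ makes the right-hand side equal to $\delta$. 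Since $\bbE X_{k;t} \ge \mu_k$ for every $t$ by Lemma~\ref{lemma:bound_obs}, we have $\sum_{t=1}^n \bbE X_{k;t} \ge n\mu_k$, so on the complement of the above lower-tail event, $\sum_{t=1}^n X_{k;t} > \sum_{t=1}^n \bbE X_{k;t} - a \ge n\mu_k - \sqrt{2 n v_k^2 \log(1/\delta)}$. This is precisely $\overline{\calE^*}$, giving $\Pr(\calE^*) \le \delta$.

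For the final implication, I would work on the event $\overline{\calE^*}$, where $\sum_{t=1}^n X_{k;t} \ge n\mu_k - \sqrt{2 n v_k^2 \log(1/\delta)}$. If additionally $\sum_{t=1}^n X_{k;t} \le T$, then $n\mu_k - \sqrt{2 n v_k^2 \log(1/\delta)} \le T$, i.e.\ $n\mu_k \le T + \sqrt{2 n v_k^2 \log(1/\delta)}$. This is a quadratic inequality in $\sqrt{n}$; writing $x = \sqrt{n}$, $b = \sqrt{2 v_k^2 \log(1/\delta)}$, it reads $\mu_k x^2 - b x - T \le 0$, so $x \le (b + \sqrt{b^2 + 4\mu_k T})/(2\mu_k)$, and squaring plus the elementary inequality $(p+q)^2 \le 2p^2 + 2q^2$ yields $n = x^2 \le 2T/\mu_k + 2 v_k^2 \log(1/\delta)/\mu_k^2$, which is the claimed bound. (Alternatively one can split: either $\sqrt{2 n v_k^2 \log(1/\delta)} \le n\mu_k/2$, forcing $n\mu_k/2 \le T$ and hence $n \le 2T/\mu_k$; or the reverse, forcing $n \le 8 v_k^2 \log(1/\delta)/\mu_k^2$ — either way the sum of the two terms dominates.)

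The main obstacle is the first step: ensuring the martingale-difference version of the LSG property is actually available, i.e.\ that Theorem~\ref{thm:sec_moment_to_left_sub_gauss} can be upgraded from a statement about a single r.v.\ to a statement about $X_{k;t}$ conditioned on $\calF_{t-1}$, and that Theorem~\ref{thm:conc_sub_gauss_onesided} is stated for martingale differences with a one-sided CGF bound rather than for i.i.d.\ summands. Here the key point is that conditioned on $\calF_{t-1}$ the arm $S_t$ is fixed, the conditional distribution of $X_{k;t}$ is still nonnegative, bounded, and has conditional second moment at most $v_k^2$ (Lemma~\ref{lemma:obs_sec_moment} holds pointwise over arms), so Theorem~\ref{thm:sec_moment_to_left_sub_gauss} applies conditionally; the rest is a routine exponential-supermartingale argument underlying Theorem~\ref{thm:conc_sub_gauss_onesided}. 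Everything after that is elementary algebra.
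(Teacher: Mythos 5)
Your proposal is correct and follows essentially the same route as the paper's proof: establish the conditional LSG property of $X_{k;t}-\bbE X_{k;t}$ via Lemma~\ref{lemma:obs_sec_moment} and Theorem~\ref{thm:sec_moment_to_left_sub_gauss} (conditioning on $\calF_{t-1}$ so that $S_t$ is fixed), apply the one-sided Azuma inequality of Theorem~\ref{thm:conc_sub_gauss_onesided}, lower-bound $\sum_t \bbE X_{k;t}$ by $n\mu_k$, and invert the quadratic in $\sqrt{n}$ using $(p+q)^2\le 2(p^2+q^2)$. The concern you raise about the martingale-difference, one-sided form of the concentration inequality is already resolved by the statement of Theorem~\ref{thm:conc_sub_gauss_onesided} as given in the paper, so no additional work is needed there.
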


Lemma~\ref{lemma:conc_one_sub_gauss_apply_obs}  
implies that with high probability, we can lower bound the amount of observations on the surviving items over the whole horizon.
Subsequently, with probability at least $1-\delta$, the agent would have received sufficiently many observations on the surviving items to return an $\epsilon$-optimal arm after at most $(c_1 N_1 + c_2 N_2 + c_3 N_3)$ time steps (see Theorem~\ref{thm:fix_conf_up_bd}). %
The lemma also indicates that a smaller LSG/SG parameter of $X_{k;t}$  leads to 
a smaller upper bound on the number of time steps.
Since we can show $X_{k;t}$ is $v_k$-LSG but cannot show it is $v_k$-SG~(a detailed discussion is deferred to Appendix~\ref{pf:thm_sec_moment_to_left_sub_gauss}), it is beneficial to consider the class of LSG distributions for our problem.
The class of LSG r.v.'s and the general estimate of the LSG parameter, which is crucial for the utilization of the concentration inequality, may be of independent interest.

\subsection{Proof sketch of Theorem~\ref{thm:fix_conf_up_bd}} 
\label{sec:main_pf_ub}
\textbf{Concentration.} As the algorithm proceeds, the agent moves items from $D_t$ to $A_t$ or $R_t$ according to the confidence bounds
of all surviving items in $D_t$. 
This motivates us to define a ``nice event"
{\setlength\abovedisplayskip{.5em}
    \setlength\belowdisplayskip{.4em}
	$$\calE(i,\delta) := \{ \forall t\ge 1, L_t(i,\delta) \le \hat{w}_t(i) \le U_t(i,\delta) \}.$$
To show that $\bigcap_{i=1}^L \calE(i,\delta)$ holds with high probability, we utilize Theorem~\ref{thm:conc_log}~\citep{jamieson2014lil,jun2016top} and the SG property of $W_t(i)$~(the r.v. that reflects whether item $i$ is clicked at time step $t$). 
}

\begin{restatable}{lemma}{leConcEvent}
\label{le:conc_event}
	For any $\delta \! \in \! [0,1]$,  
	$\bbP \big( \bigcap_{i=1}^L \calE(i,\delta) \big) \ge 1 - { \delta}/{2}.$%
\end{restatable}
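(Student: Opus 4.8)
The plan is to fix a single item $i \in [L]$, control the probability that the "nice event" $\calE(i,\delta)$ fails, and then take a union bound over $i \in [L]$. First I would recall that $\hat{w}_t(i)$ is the empirical mean of $T_t(i)$ i.i.d. samples $W_s(i) \sim \Ber(w(i))$, each of which is $\tfrac{1}{2}$-sub-Gaussian (being bounded in $[0,1]$). The subtlety is that $T_t(i)$ — the number of observations of item $i$ by time $t$ — is itself a random stopping-time-like quantity, so I cannot directly apply a fixed-sample-size Hoeffding bound; instead I would invoke the anytime law-of-the-iterated-logarithm-type concentration inequality stated earlier as Theorem~\ref{thm:conc_log} (from \citet{jamieson2014lil,jun2016top}), which is precisely designed to give a bound of the form $|\hat{w}_t(i) - w(i)| \le C_t(i,\delta)$ \emph{uniformly over all $t$} (equivalently over all sample counts $T_t(i) \ge 1$) with probability at least $1 - \rho(\delta)^2$ (or some comparable function of $\rho$), where the confidence radius $C_t(i,\delta) = 4\sqrt{\log(\log_2[2T_t(i)]/\rho(\delta))/T_t(i)}$ is exactly the quantity appearing in the algorithm's UCB/LCB definitions.

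The key steps, in order, are: (1) note $\calE(i,\delta) = \{\forall t \ge 1:\ |\hat{w}_t(i) - w(i)| \le C_t(i,\delta)\}$ up to the trivial case $T_t(i)=0$ where $C_t(i,\delta)=+\infty$ and the event holds vacuously; (2) apply Theorem~\ref{thm:conc_log} to the i.i.d. $\tfrac12$-sub-Gaussian sequence $(W_s(i))_s$ with the choice of confidence parameter matched to $\rho(\delta)$, yielding $\bbP(\overline{\calE(i,\delta)}) \le c\,\rho(\delta)^2$ for the appropriate constant built into the statement of that theorem; (3) substitute $\rho(\delta) = \sqrt{\delta/(12L)}$ so that $\bbP(\overline{\calE(i,\delta)}) \le \delta/(2L)$ (the constants $12$ and the factor $4$ in $C_t$ are precisely calibrated so this works out); (4) union bound: $\bbP(\bigcup_{i=1}^L \overline{\calE(i,\delta)}) \le L \cdot \delta/(2L) = \delta/2$, hence $\bbP(\bigcap_{i=1}^L \calE(i,\delta)) \ge 1 - \delta/2$.

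The main obstacle is step (2)–(3): getting the constants to line up. One must check that the hypotheses of Theorem~\ref{thm:conc_log} are met (i.i.d., sub-Gaussian with the right proxy variance), that the $\log_2[2T_t(i)]$ and the $1/T_t(i)$ scaling inside $C_t(i,\delta)$ are exactly what that theorem delivers for $\tfrac12$-sub-Gaussian increments, and that the leading constant $4$ together with $\rho(\delta)^2 = \delta/(12L)$ produces a per-item failure probability of at most $\delta/(2L)$ rather than something larger. This is a bookkeeping exercise rather than a conceptual one — the conceptual content is entirely outsourced to the anytime concentration inequality — but it is where an error would most plausibly creep in, so I would carry it out carefully by writing out the exact statement of Theorem~\ref{thm:conc_log} with its constants and tracking them through. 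Everything else (the union bound, the vacuous-case handling for $T_t(i)=0$) is routine.
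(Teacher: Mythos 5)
Your proposal matches the paper's proof essentially step for step: the paper likewise notes that $\eta_t(i)=W_t(i)-w(i)$ is $\tfrac12$-sub-Gaussian, applies Theorem~\ref{thm:conc_log} to get a per-item failure probability of $6\rho(\delta)^2=\delta/(2L)$, and finishes with a union bound over the $L$ items. The only thing left implicit in your write-up is the explicit constant $6$ from Theorem~\ref{thm:conc_log}, which is exactly what makes $\rho(\delta)=\sqrt{\delta/(12L)}$ yield $\delta/(2L)$, as you anticipated.
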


\textbf{Sufficient observations.} Next, we  
assume $\bigcap_{i=1}^L \calE(i,\delta)$ holds 
and find the number of observations that guarantees the correct identification of an item. 
To facilitate the analysis of the expected time complexity~(Proposition~\ref{prop:conf_comp_full_feedback_exp}, \ref{prop:conf_upbd_two_prob_exp}), we assume $\bigcap_{i=1}^L \calE(i,\delta')$ holds for a fixed $\delta' \in (0, \delta]$ in Lemma~\ref{le:suff_obs_to_identif},
which generalizes \citet[Lemma 2]{jun2016top}. 

\begin{restatable} 
{lemma}{lemmaSuffObsIdentif}
 \label{le:suff_obs_to_identif}
 {\setlength\abovedisplayskip{.5em}
    \setlength\belowdisplayskip{.4em}
    Fix any $0 \! < \! \delta' \! \le \! \delta$, assume $\bigcap_{i=1}^L \calE(i,\delta')$ holds. 
    Set 
     $T'_t := \min_{i\in D_t} T_t(i)$, 
     then for any time step $t$,
    \begin{align}        
        & 
        \forall i \! \le \! K' ,   T'(t) \! \ge \! \barT_{i,\delta'} 
             \Rightarrow      L_{t}(i , \delta) \! > \! U_{t}(j^* \! , \delta) \! - \! \epsilon 
             \Rightarrow      i \! \in \! A_t,   
        \nonumber \\  
        & 
        \forall i \! > \! K' ,  T'(t) \! \ge \! \barT_{i,\delta'} 
             \Rightarrow    U_{t}(i , \delta) \! <  \! L_{t}( j' \! ,  \delta) \! - \! \epsilon  
             \Rightarrow      i \! \in \! R_t.   
        \nonumber %\label{eq:reject_subopt}
    \end{align}}
\end{restatable}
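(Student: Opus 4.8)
The plan is to run the standard confidence-bound (racing-type) argument, adapted here to the cascade model. The two implications are symmetric, so I would carry out the acceptance of an $\epsilon$-optimal item $i\le K'$ in detail and only indicate the changes for the rejection of an item $i>K'$ (replace $j^{*}$ by $j'$, $A$ by $R$, the claim ``$w(j^{*})\le w(K+1)$'' by ``$w(j')\ge w(K)$'', and reverse the concluding inequality). First I would reduce to the case $i\in D_{t}$: if $i\notin D_{t}$ then $i$ has already left the survival set, and that it was placed in $A_{\tau}$ rather than $R_{\tau}$ is the ``no-misclassification'' property, which I would carry along by the same induction on $t$ that proves the present statement. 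Once $i\in D_{t}$, every surviving item $j\in D_{t}$ satisfies $T_{t}(j)\ge T'_{t}\ge \bar{T}_{i,\delta'}$.

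\emph{Step 1 (radius control).} I would substitute $T_{t}(j)\ge\bar{T}_{i,\delta'}$ into $C_{t}(\cdot,\delta')$, using that this quantity is nonincreasing in the number of observations once that number exceeds a small threshold. A routine estimate --- for which the universal constants $216$ and $648$ appearing in $\bar{T}_{i,\delta'}$ are exactly calibrated, the $\log_{2}[2T]$ term being controlled through $2\bar{T}_{i,\delta'}\le\bigl(648/(\rho(\delta')\bar{\Delta}_{i}^{2})\bigr)^{2}$ --- then yields $C_{t}(j,\delta')\le \bar{\Delta}_{i}/6$ for every $j\in D_{t}$. Since $\rho$ is increasing and $\delta'\le\delta$, one also has $C_{t}(j,\delta)\le C_{t}(j,\delta')\le\bar{\Delta}_{i}/6$. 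On the event $\bigcap_{j=1}^{L}\calE(j,\delta')$, i.e.\ when $L_{t}(j,\delta')\le w(j)\le U_{t}(j,\delta')$ for all $j$ and all $t$, this gives $|\hat{w}_{t}(j)-w(j)|\le C_{t}(j,\delta')\le\bar{\Delta}_{i}/6$ for all $j\in D_{t}$, in particular for $i$ and for the pivot $j^{*}\in D_{t}$.

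\emph{Step 2 (the pivot is suboptimal, and the conclusion).} The crux is the structural claim $w(j^{*})\le w(K+1)$. Only items of $A_{t}\cup R_{t}$ ever leave $D_{t}$, so, using the invariant --- carried through the induction under $\bigcap_{j}\calE(j,\delta')$ --- that no index of $[K]$ is ever rejected, at most $|A_{t}|$ indices of $[K]$ lie outside $D_{t}$; hence at least $k_{t}=K-|A_{t}|$ of them survive in $D_{t}$, and combined with Step 1 this would force the $(k_{t}+1)$-st largest empirical mean over $D_{t}$ to be attained by an index of true weight at most $w(K+1)$. Granting $w(j^{*})\le w(K+1)$, I would then combine Step 1 with the good event: using $|\hat{w}_{t}(i)-w(i)|\le\bar{\Delta}_{i}/6$, $|\hat{w}_{t}(j^{*})-w(j^{*})|\le\bar{\Delta}_{i}/6$, $C_{t}(i,\delta),C_{t}(j^{*},\delta)\le\bar{\Delta}_{i}/6$, and $w(i)-w(j^{*})\ge w(i)-w(K+1)\ge\bar{\Delta}_{i}-\epsilon$ (the last bound holding with equality for $1\le i\le K$ and equally for $K<i\le K'$),
\begin{align*}
L_{t}(i,\delta)-\bigl(U_{t}(j^{*},\delta)-\epsilon\bigr)
&=\bigl(\hat{w}_{t}(i)-\hat{w}_{t}(j^{*})\bigr)-C_{t}(i,\delta)-C_{t}(j^{*},\delta)+\epsilon\\
&\ge\bigl(w(i)-w(j^{*})+\epsilon\bigr)-\tfrac{2}{3}\bar{\Delta}_{i}
\ \ge\ \bar{\Delta}_{i}-\tfrac{2}{3}\bar{\Delta}_{i}\ =\ \tfrac{1}{3}\bar{\Delta}_{i}\ >\ 0 .
\end{align*}
Thus $L_{t}(i,\delta)>U_{t}(j^{*},\delta)-\epsilon$, whence the acceptance rule of Algorithm~\ref{alg:bai_cas_conf} puts $i$ into $A_{t+1}$; the rejection implication follows from the symmetric computation with $j'$ in place of $j^{*}$ and the claim $w(j')\ge w(K)$.

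\emph{Main obstacle.} The delicate point is Step 2: the structural claim about the pivot cannot be decoupled from correctness, so I expect to prove it by a simultaneous induction on $t$ together with an invariant of the form ``no index of $[K]$ is ever rejected, and every accepted index is $\epsilon$-optimal''. Maintaining this invariant is subtler when $\epsilon>0$, since then an accepted item need not be one of the true top-$K$ items, and the count ``at least $k_{t}$ indices of $[K]$ survive in $D_{t}$'' must be argued with care against near-boundary items whose empirical order may have been perturbed. The remaining ingredient, purely computational, is checking that the universal constants in $\bar{T}_{i,\delta'}$, $C_{t}$ and $\rho$ fit together so that $T\ge\bar{T}_{i,\delta'}$ really drives the confidence radius far enough below $\bar{\Delta}_{i}$ for the displayed estimate to close.
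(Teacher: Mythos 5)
Your Step 1 and the final algebra follow the paper's argument (the paper runs it by contradiction and inverts $\tau\le c\log(\log_2(2\tau)/\rho')$ rather than carrying an explicit $\bar{\Delta}_i/6$ radius, but that is only a calibration difference). The genuine gap is in Step 2. The structural claim $w(j^*)\le w(K+1)$ is false as stated: $j^*$ is defined by the \emph{empirical} order, and nothing prevents the index attaining the $(k_t+1)$-st largest empirical mean from being a true top-$K$ item whose empirical mean has been perturbed downward (your Step 1 only guarantees radii of order $\bar{\Delta}_i/6$ for the particular $i$ under consideration, which can far exceed $\Delta_K=w(K)-w(K+1)$ when $\bar{\Delta}_i$ is large). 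Moreover, your counting argument points the wrong way for this pivot: knowing that \emph{at least} $k_t$ indices of $[K]$ survive in $D_t$ bounds the $k_t$-th largest empirical mean from \emph{below} (which is what you need for $j'$ in the rejection branch), but to control $U_t(j^*,\delta)$ from above you need that \emph{at most} $k_t$ surviving items have true weight exceeding $w(K+1)$, so that at least $|D_t|-k_t$ survivors have empirical mean at most $w(K+1)+C$ and hence the $(k_t+1)$-st largest empirical mean cannot exceed that value.

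The correct and sufficient statement is therefore about the empirical mean of the pivot, $\hatw_t(j^*)\le w(K+1)+\tilde C(T'_t,\rho')$, not about its true weight; with this replacement your displayed computation closes verbatim. The paper obtains exactly this bound from a deterministic order-statistics fact (its Lemma~\ref{le:true_trap_by_emp}, quoted from Jun et al.): if every empirical mean is within $C$ of its true mean, then the $i$-th largest empirical mean lies within $C$ of the $i$-th largest true mean. No induction on $t$ is needed for this; the paper disposes of nonempty $A_t,R_t$ up front by reducing to a fresh instance with $L_{\mathrm{new}}=L-|A_t|-|R_t|$ and $K_{\mathrm{new}}=K-|A_t|$ and then applies the order-statistics lemma to the full surviving set (this reduction does quietly absorb the no-misclassification invariant you worry about, so your instinct that correctness and the pivot bound are entangled is not unreasonable, but the route through true weights of empirical order statistics cannot be repaired as written).
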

Lemmas~\ref{le:conc_event} and \ref{le:suff_obs_to_identif} imply that with sufficiently many observations, the agent can correctly identify items 
with probability at least $1 \! - \! { \delta}/{2}$.

\textbf{Time complexity.}  
Subsequently, we observe that our algorithm stops before identifying all items. 
\begin{restatable}{lemma}{leStopElimintateNo}
\label{le:stop_eliminate_no}
	Assume $\bigcap_{i=1}^L \calE(i,\delta)$ holds. Algorithm~\ref{alg:bai_cas_conf} stops after
	identifying at most $L- \max\{ K'-K, 1\}$ items. 
\end{restatable}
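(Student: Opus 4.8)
The plan is to condition throughout on the nice event $\calE:=\bigcap_{i=1}^L\calE(i,\delta)$, on which every empirical mean stays inside its confidence interval, and then to argue two things separately: (a) at least $K'-K$ items are never identified, and (b) at least one item is never identified. Since $|D_t|$ is non-increasing in $t$, combining (a) and (b) gives $|D_\calT|\ge\max\{K'-K,1\}$ at the stopping time $\calT$, which is exactly the claim (the number identified being $L-|D_\calT|$).

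For part (a) I would first prove the ``rejection is impossible for good items'' counterpart of Lemma~\ref{le:suff_obs_to_identif}: under $\calE$, an $\epsilon$-optimal item (index $\le K'$) is never placed into any $R_t$. If item $i\le K'$ were rejected at step $t$, the test $U_t(i,\delta)<L_t(j',\delta)-\epsilon$ together with $w(i)\le U_t(i,\delta)$ and $L_t(j',\delta)\le w(j')$ on $\calE$ would force $w(j')>w(i)+\epsilon\ge w(K)$, so $j'$ lies among the top $K-1$ items; but $j'$ is the $k_t$-th largest empirical mean in $D_t$, so the $k_t-1$ surviving items ranked above $j'$, the item $j'$ itself, and the already-accepted items in $A_t$ would have to comprise more than $K$ items of weight above $w(K+1)$, contradicting the existence of exactly $K$ optimal items. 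Next, the algorithm accepts at most $K$ items: the while-loop forces $|A_t|<K$ before each update, and an accepted item must out-rank $j^*$ (when $\epsilon=0$; the $\epsilon>0$ bookkeeping is analogous), so at most $k_t=K-|A_t|$ new items join $A_{t+1}$. Consequently at least $K'-K$ of the $K'$ $\epsilon$-optimal items are never accepted; being also never rejected, they stay in $D_t$ for all $t$, so $|D_t|\ge K'-K$ throughout.

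For part (b) I would suppose, toward a contradiction, that some run empties the survival set, and take $t$ to be the last iteration, so that $D_t\ne\emptyset$, $|A_t|<K$, $|R_t|<L-K$, and $D_{t+1}=\emptyset$. From the partition identity $|A_t|+|R_t|+|D_t|=L$ and $|R_t|<L-K$ one gets $|D_t|\ge k_t+1$, so both pivots $j'$ (rank $k_t$) and $j^*$ (rank $k_t+1$) of $D_t$ exist. Emptying $D_{t+1}$ requires every item of $D_t$ to be accepted or rejected at step $t$; but by the argument of part (a) the top $k_t$ ranked surviving items cannot be rejected and the bottom $|D_t|-k_t\ge1$ cannot be accepted, so it would have to be that the rank-$k_t$ item $j'$ is accepted while the rank-$(k_t+1)$ item $j^*$ is rejected. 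Tracing the two tests, $L_t(j',\delta)>U_t(j^*,\delta)-\epsilon$ and $U_t(j^*,\delta)<L_t(j',\delta)-\epsilon$, back through $\calE$ — together with the facts, established in part (a), that a rejected item is not $\epsilon$-optimal while an accepted item is $\epsilon$-optimal — yields the contradiction. Hence $D_t$ is never emptied and $|D_\calT|\ge1$.

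The step I expect to be the main obstacle is exactly this ``rejection-impossibility'' bookkeeping that feeds both parts: one must upgrade the purely empirical comparison against the rank-$k_t$ (resp.\ rank-$(k_t+1)$) surviving pivot into a statement about the true weights $w(\cdot)$, and then carefully count how many optimal and $\epsilon$-optimal items can simultaneously sit in $A_t$, in $D_t$, and above the pivot. Everything else — the partition identities, monotonicity of $|D_t|$, the at-most-$K$-accepted bound, and the reduction of the final-iteration case to the impossibility claim — is routine.
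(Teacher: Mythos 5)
Your part (a) is essentially the paper's own argument for the case $K'>K$: under $\bigcap_i\calE(i,\delta)$ no $\epsilon$-optimal item is ever rejected and at most $K$ items are ever accepted, so $|A_t|+|R_t|\le K+(L-K')$. (Two caveats: your counting for ``rejection is impossible'' produces exactly $K$ items, not more than $K$, and it silently assumes the items empirically ranked above $j'$ have true weight above $w(K+1)$ --- making this rigorous needs the uniform radius bound $C_t(i,\delta)\le\tilde C(T'_t,\rho)$ and Lemma~\ref{le:true_trap_by_emp}, which is how the paper's Lemma~\ref{le:suff_obs_to_identif} handles it. Also ``at most $k_t$ items join $A_{t+1}$'' uses the empirical-rank argument that only works cleanly for $\epsilon=0$; for $\epsilon>0$ an item ranked below $j^*$ is accepted whenever $\epsilon>C_t(i,\delta)+C_t(j^*,\delta)$, so the ``analogous bookkeeping'' you defer is not analogous.)

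Part (b), however, is wrong: the invariant ``$D_t$ is never emptied'' is false, and the contradiction you invoke does not exist. The two tests $L_t(j',\delta)>U_t(j^*,\delta)-\epsilon$ and $U_t(j^*,\delta)<L_t(j',\delta)-\epsilon$ are not in tension --- the second implies the first --- and under $\calE$ they only yield $w(j^*)<w(j')-\epsilon$, which is precisely the expected situation when $j'$ is a surviving $\epsilon$-optimal item and $j^*$ a surviving non-$\epsilon$-optimal one. Concretely, take $L=2$, $K=1$, $\epsilon=0$: at the first step with $L_t(1,\delta)>U_t(2,\delta)$, item $1$ enters $A_{t+1}$ and item $2$ simultaneously enters $R_{t+1}$ (the two membership conditions coincide), so $D_{t+1}=\emptyset$ on the nice event. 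The paper's argument for the $K'=K$ case is not a nonemptiness claim but a pigeonhole on the while-condition: once $L-1$ items lie in $A_t\cup R_t$, one cannot have both $|A_t|\le K-1$ and $|R_t|\le L-K-1$ (their sum would be $\le L-2$), so $|A_t|\ge K$ or $|R_t|\ge L-K$ already holds and the loop terminates --- the algorithm therefore never needs to wait for an $L$-th item to be identified, even though additional items may be classified ``for free'' at that same final step. You need to replace part (b) with this counting argument (and read the lemma accordingly, as a bound on how many identifications must be waited for, which is how it is used in the proof of Theorem~\ref{thm:fix_conf_up_bd}).
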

Lemma~\ref{le:stop_eliminate_no} indicates that it suffices to count the number of time steps needed to identify at most $L-K'+K$ items.

We consider the worst case in which the agent identifies items in descending order of the $\bar{\Delta}_i$'s. We divide the whole horizon into several phrases according to $|D_t|$, the number of surviving items. 
During each phrase, we upper bound the required number of observations with Lemma~\ref{le:suff_obs_to_identif}; then Lemma~\ref{lemma:conc_one_sub_gauss_apply_obs} helps to upper bound the required number of time steps with high probability. Lastly, we bound the total error probability 
by $\delta/2$ and utilize the Lagrange multipliers to solve the following problem:
{\setlength\abovedisplayskip{.5em}
    \setlength\belowdisplayskip{.4em}
\begin{align*} 
	\max_{\delta_k: 1\le k\le 2K-K' } \! \sum_{k=1}^{2K-K' }  \frac{2 v_{K-k+1}^2 }{\mu_{K-k+1}^2} \log \delta_k  
	 ~\text{ s.t.} 
	\sum_{k=1}^{2K-K'} \!\! \delta_k \le \delta/2. 
\end{align*}
Altogether, we upper bound the time complexity.}

\subsection{Proof sketch of Theorem~\ref{thm:lb_fix_conf_dep}} 
\label{sec:main_pf_lb}

\textbf{Construct instances.} 
To begin, we fix $\alpha>0$ and define a class of $L+1$ instances, indexed by $\ell = 0, 1, \! \ldots \! , L$:  
\\ $\bullet$ under instance $0$, we have $\{w(1), w(2), \ldots, w(L)\}$, 
\\ $\bullet$ under instance $\ell$, we have $\{w(1), w(2), \ldots, w(\ell  - 1), \hspace{1em}$ $w^{(\ell)}(\ell), w(\ell+1), \ldots, w(L)\}$;
\\
where we define $w^{(\ell)}_\ell$'s so that they satisfy 
{\setlength\abovedisplayskip{.5em}
    \setlength\belowdisplayskip{.4em}
\begin{align*}
	& 1\le i \le K: ~
	  w^{(i)}(i) < w(K+1), \\
	& \hspace{2em} \rmKL(w(i), w(K+1)  )<\rmKL(w (i),w^{(i)}(i)  ) ,\\
	& \hspace{2em} \rmKL(w (i),w^{(i)}(i)  ) < \rmKL(w(i), w(K+1)  )+\alpha, \\
	& K< j \le L: ~
	  w^{(j)}(j) > w(K), \\
	& \hspace{2em} \rmKL(w(j), w(K)  )< \rmKL(w (j),w^{(j)}(j)  ),\\
	& \hspace{2em} \rmKL(w (j),w^{(j)}(j)  ) < \rmKL(w(j), w(K)  )+\alpha.
\end{align*}
In particular, $ S^* \in [K]^{(K)}$ is optimal under instance $0$, while suboptimal under instance $1\le \ell \le L$.
Bearing the risk of overloading the notations, under instance $\ell$, we denote $S^{*, \ell} $ as an optimal arm, $S^{\pi, \ell}_t$ as the arm chosen by algorithm $\pi$ at step $t$ and $\bmO^{\pi, \ell}_t$ as the corresponding stochastic outcome~(see its definition in Section~\ref{sec:prob_setup}).

\begin{figure}[ht]
	\centering
	\includegraphics[width=.48\textwidth, trim= 0 20 0 15,clip]{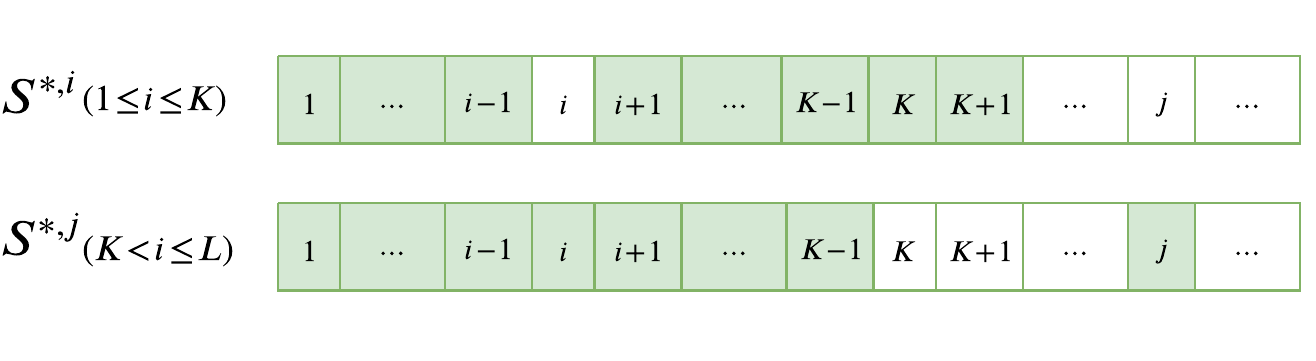}	
	\caption{Optimal set $S^{*,\ell}$ in instance $\ell$ (shaded in green)}
	\label{fig:instance_lb}
\end{figure}}

\textbf{KL divergence.} 
We first apply chain rule to decompose 
$\mathrm{KL}\big(~\{ S^{\pi, 0}_t, \bmO^{\pi, 0}_t \}^\calT_{t=1}, \{ S^{\pi, \ell}_t, \bmO^{\pi, \ell}_t \}^\calT_{t=1}~\big)$.  \vphantom{ $ \bar{  \bmO^{\pi, 0}_t }$}

\begin{restatable}[{\citet[Lemma 6.4]{orSubmit}}]{lemma}{lemmaKLdecompOR}\label{lemma:KLdecompOR}
	For any $1\le \ell \le L$,
	{\setlength\abovedisplayskip{.5em}
    \setlength\belowdisplayskip{.4em}
\begin{align*}
	&
	\mathrm{KL}\big(~ \{ S^{\pi, 0}_t, \bmO^{\pi, 0}_t \}^\calT_{t=1}, \{ S^{\pi, \ell}_t, \bmO^{\pi, \ell}_t \}^\calT_{t=1} ~\big)
	\\ & 
	\resizebox{.5\textwidth}{!}{$
	= \sum\limits_{t=1}^\calT \sum\limits_{s_t\in [L]^{(K)}} \Pr[S^{\pi, 0}_t = s_t]
	\mathrm{KL}  \big(  P_{ \bmO_t^{\pi,0 }  \mid S_t^{\pi,0}}(\cdot \mid s_t) \big\| P_{ \bmO_t^{\pi, \ell }  \mid S_t^{\pi,\ell}}(\cdot \mid s_t)  \big). \nonumber  
	$}
\end{align*}}
\end{restatable}

Next, we lower bound $\bbE [T_\calT(\ell)]$ with the KL divergence by applying a result from \citet{kaufmann2016complexity}. 

\begin{restatable} 
{lemma}{lemmaLbKLdecomp}\label{lemma:lb_KLdecomp}
For any $1\le \ell \le L$,
{\setlength\abovedisplayskip{.5em}
    \setlength\belowdisplayskip{.4em}
\begin{align*}
	&
	\mathrm{KL}\big(~ \{ S^{\pi, 0}_t, \bmO^{\pi, 0}_t \}^\calT_{t=1}, \{ S^{\pi, \ell}_t, \bmO^{\pi, \ell}_t \}^\calT_{t=1} ~\big)
	\\& 
	= \bbE [T_\calT(\ell)] \cdot \mathrm{KL}\big( w (\ell), w^{(\ell)}(\ell) \big)
	  \ge 
		\sup_{\calE \in \calT} 
		\mathrm{KL} \big( \bbP_0(\calE), \bbP_\ell(\calE) \big).
		\nonumber 
\end{align*}}
\end{restatable}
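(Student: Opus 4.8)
The plan is to establish the displayed equality first, then the final inequality. For the equality I would start from the chain-rule decomposition in Lemma~\ref{lemma:KLdecompOR} and evaluate each summand $\mathrm{KL}\big(P_{\bmO_t^{\pi,0}\mid S_t^{\pi,0}}(\cdot\mid s_t)\,\|\,P_{\bmO_t^{\pi,\ell}\mid S_t^{\pi,\ell}}(\cdot\mid s_t)\big)$. The key structural observation is that instances $0$ and $\ell$ have identical weight vectors except at coordinate $\ell$; so if $\ell\notin s_t$ the two conditional laws of $\bmO_t$ coincide and the summand is $0$, and only arms $s_t=(i_1,\dots,i_K)$ containing $\ell$ contribute. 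Fix such an $s_t$, with $i_k=\ell$.

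For such an $s_t$ I would decompose $\bmO_t$ sequentially along the cascade and apply the chain rule for KL twice. First let $Y$ record the outcome of the prefix positions $1,\dots,k-1$ (the location of the first click among them, if any, or the event ``no click among the first $k-1$ items''); $Y$ is a deterministic function of $W_t(i_1),\dots,W_t(i_{k-1})$, none of which is item $\ell$, so the law of $Y$ is common to both instances and contributes $0$. On the event $\{Y=\text{no click}\}$, which has probability $q_{s_t}:=\prod_{j<k}\barw(i_j)$ under both instances, $\bmO_t$ is a deterministic function of $(W_t(\ell),W_t(i_{k+1}),\dots,W_t(i_K))$; applying the chain rule again, the first component is $\mathrm{Bern}(w(\ell))$ under instance $0$ versus $\mathrm{Bern}(w^{(\ell)}(\ell))$ under instance $\ell$, while conditioned on $W_t(\ell)$ the remaining censored coordinates have identical laws and contribute $0$. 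Hence the summand equals $q_{s_t}\cdot\mathrm{KL}\big(w(\ell),w^{(\ell)}(\ell)\big)$. Summing over $t$ and over $s_t$ and pulling the constant $\mathrm{KL}(w(\ell),w^{(\ell)}(\ell))$ out, the remaining factor $\sum_t\sum_{s_t\ni\ell}\Pr[S_t^{\pi,0}=s_t]\,q_{s_t}=\sum_t\Pr_0[\text{item }\ell\text{ observed at step }t]$ is, by definition, the expected number of times item $\ell$ is observed before the stopping time under instance $0$, i.e. $\bbE[T_\calT(\ell)]$. This gives the equality.

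For the inequality I would invoke the standard contraction-of-KL argument of \citet[Lemma 1]{kaufmann2016complexity}: for any event $\calE$ measurable with respect to $\calF_\calT$, the indicator $\mathbbm{1}_\calE$ is a deterministic function of the observed trajectory $\{S_t^{\pi,\bullet},\bmO_t^{\pi,\bullet}\}_{t=1}^\calT$, so the data-processing inequality yields that the trajectory-level KL divergence is at least $\mathrm{KL}(\bbP_0(\calE),\bbP_\ell(\calE))$ (the binary KL of the two probabilities); taking the supremum over $\calE\in\calF_\calT$ completes the proof.

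I expect the main obstacle to be the careful bookkeeping in the sequential KL decomposition of the censored feedback $\bmO_t$: one must be scrupulous about the $\star$-censoring (coordinates after the first click, and after position $k$ when item $\ell$ is not clicked, carry no information), about verifying that the ``prefix law'' is genuinely common to both instances (which uses that item $\ell$ occupies position $k$ and arms are permutations of distinct items), and about matching the weight $\sum_{t,s_t}\Pr[S_t^{\pi,0}=s_t]\,q_{s_t}$ with the definition of $\bbE[T_\calT(\ell)]$; the chain rule must also be applied in a way compatible with $\calT$ being a stopping time, which is already subsumed in Lemma~\ref{lemma:KLdecompOR}.
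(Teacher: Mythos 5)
Your proposal is correct and follows essentially the same route as the paper: the chain-rule decomposition of Lemma~\ref{lemma:KLdecompOR}, the observation that only coordinate $\ell$ differs between the two instances so each conditional summand reduces to $\Pr_0[\ell \text{ observed at } t \mid S_t=s_t]\cdot\mathrm{KL}\big(w(\ell),w^{(\ell)}(\ell)\big)$ and the sum telescopes to $\bbE[T_\calT(\ell)]$, followed by the change-of-measure inequality of \citet{kaufmann2016complexity} for the final bound. Your prefix/Bernoulli/censored-suffix computation of the per-step conditional KL is in fact more explicit than the paper's, which asserts that identity directly.
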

Define the event $\calE := \{ \hat{S}^\pi    \in   [K]^{(K)} \} \in \calF_\calT$. We establish that, for any $(\delta,K)$-PAC algorithm, $\bbP_0(\calE)  \ge 1  -   \delta$ and $\bbP_\ell(\calE)   \le    \delta ~(\forall 1  \le  \ell  \le L)$.
Lastly, by revisiting the definition of $X_{k;t}$ in Section~\ref{sec:main_result_up}, we see that $\tilde{\mu}_K$ also upper bounds the expected number of observations of items at one step for any $(\delta, K)$-PAC algorithm~(Lemma~\ref{lemma:bound_obs}).
This allows us to lower bound $\bbT^*$.

\section{Experiments}
We evaluate the performance of {\sc CascadeBAI$(\epsilon,\delta,K)$} and some related algorithms.
For each choice of algorithm and instance, we run $20$ independent trials. The standard deviations of the time complexities of our algorithm are negligible compared to the averages, and therefore are omitted from the plots.
More details are provided in Appendix~\ref{appdix_exp}.

\subsection{Order of pulled items}

    \begin{figure}[h]
        \centering
        \hspace{-.6em} %-.5em
        \includegraphics[width = .48\textwidth]{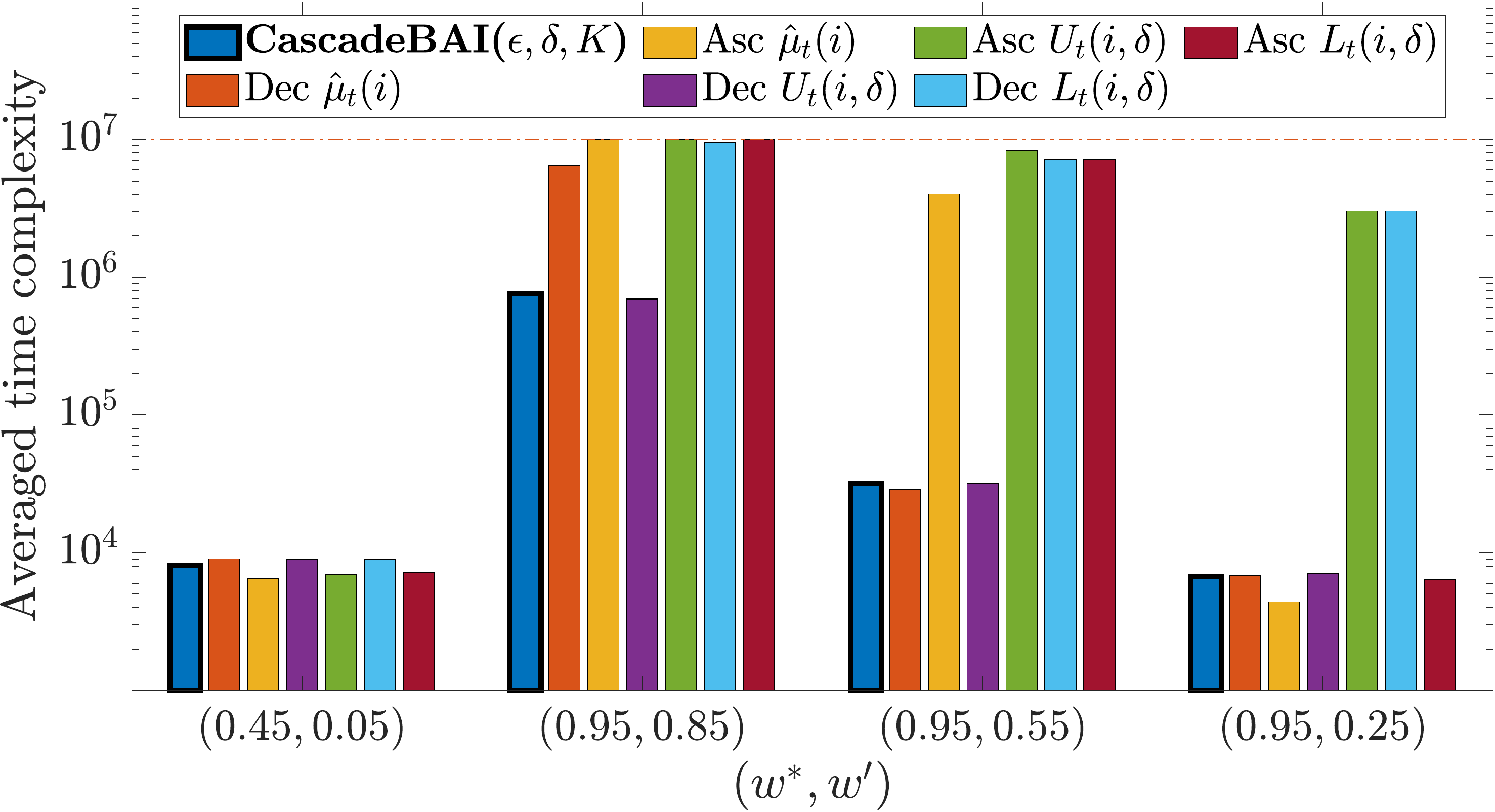} %\\   .45\textwidth
       \caption{Average time complexity incurred by different sorting order of $S_t$: ascending order of $T_i(t)$~(Algorithm~\ref{alg:bai_cas_conf}), ascending/descending order of $\hat{\mu}_t(i)/ U_t(i)/ L_t(i)$ with $L=64$, $K=16$, $\delta=0.1$ and $\epsilon=0$ in the cascading bandits. 
       }\label{pic:compare_order} 
    \end{figure}
    
As shown in Lines 5--9 of Algorithm~\ref{alg:bai_cas_conf}, {\sc CascadeBAI($\epsilon,\delta,K$)} sorts items in $S_t$ based on {\em ascending} order of $T_{t-1}(i)$'s.  
This order is crucial for proving our theoretical results. 
To learn the impact of ordering on the time complexity, we evaluate the empirical performance of sorting $S_t$ in descending or ascending order of $\hatw_t(i)$'s, $U_t(i,\delta)$'s or $L_t(i,\delta)$'s. 
We compare these methods under various problem settings and set the maximum time step as $10^7$.
Figure~\ref{pic:compare_order} shows that {\sc CascadeBAI($\epsilon,\delta,K$)} empirically performs as well as the best among the other heuristics, but {\sc CascadeBAI($\epsilon,\delta,K$)} is the only one with a theoretical guarantee.

\subsection{Comparison to semi-feedback setting}

We compare {\sc CascadeBAI($0,\delta,K$)}, {\sc BatRac($K$)} and {\sc BatRac($1$)}~\citep{jun2016top} empirically.
In Figure~\ref{pic:compare_semi}, if $w^*$, $w'$ are sufficiently small as the parameters shown in subfigure~(a), {\sc CascadeBAI($0,\delta,K$)}  performs similarly to {\sc BatRac($K$)}; 
if $w^*$, $w'$ are large as in subfigure~(b), it behaves similarly to {\sc BatRac($1$)}.
This corroborates the implications of Corollary~\ref{cor:compare_full_feedback}.

\begin{figure}[ht]
  \centering
  \subfigure[$w^* \! = \! \frac{1}{K}, w' \! = \! \frac{1}{K^2}$]{
\label{pic:compare_semi_a}
\includegraphics[width = .23\textwidth]{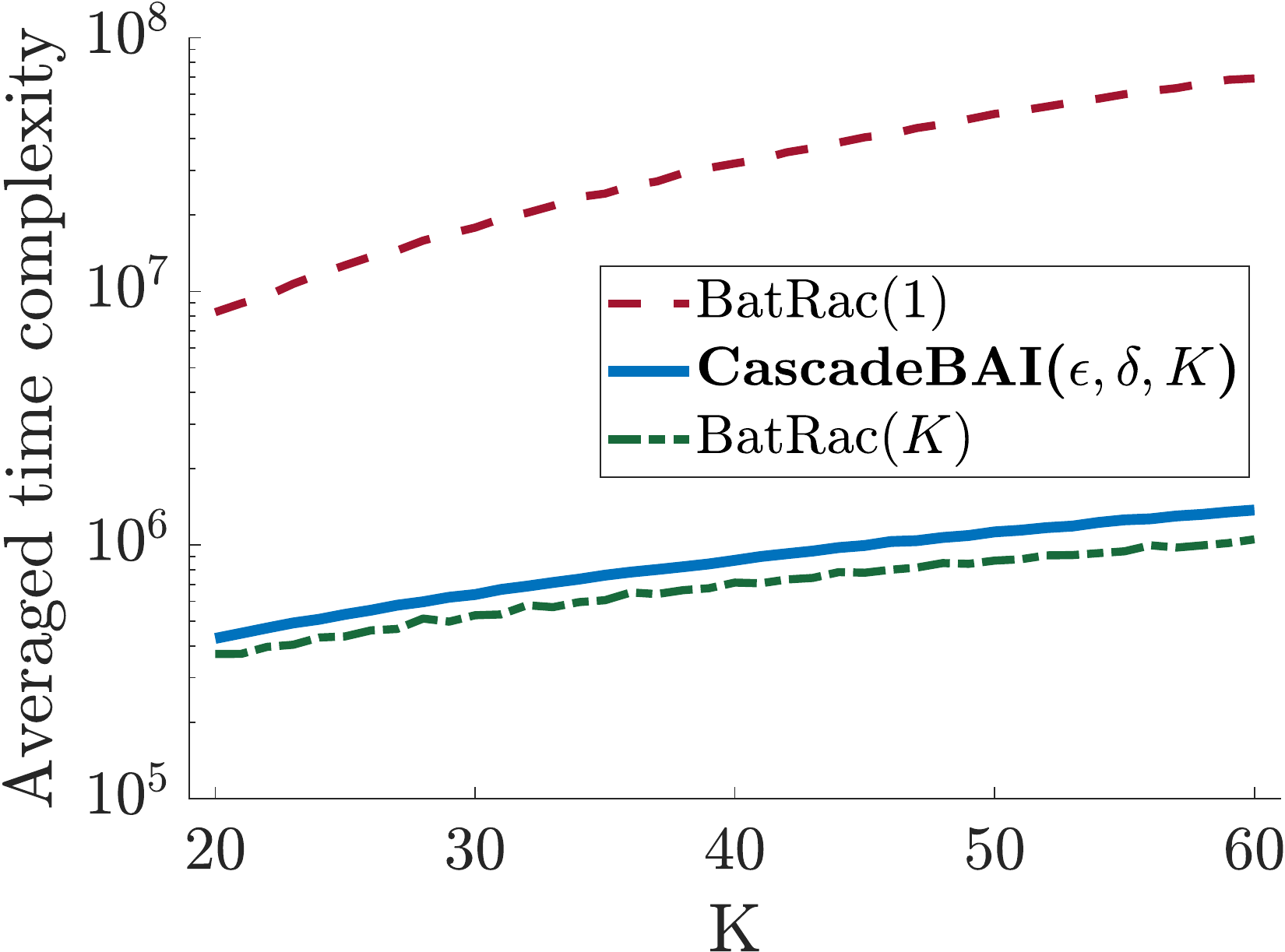}
}
\hspace{-.5em}
\subfigure[$w^* \! = \! 1 \! - \! \frac{1}{K^2}, w' \! = \! 1 \! - \! \frac{1}{K}$]{
\label{pic:compare_semi_b}
\includegraphics[width = .23\textwidth]{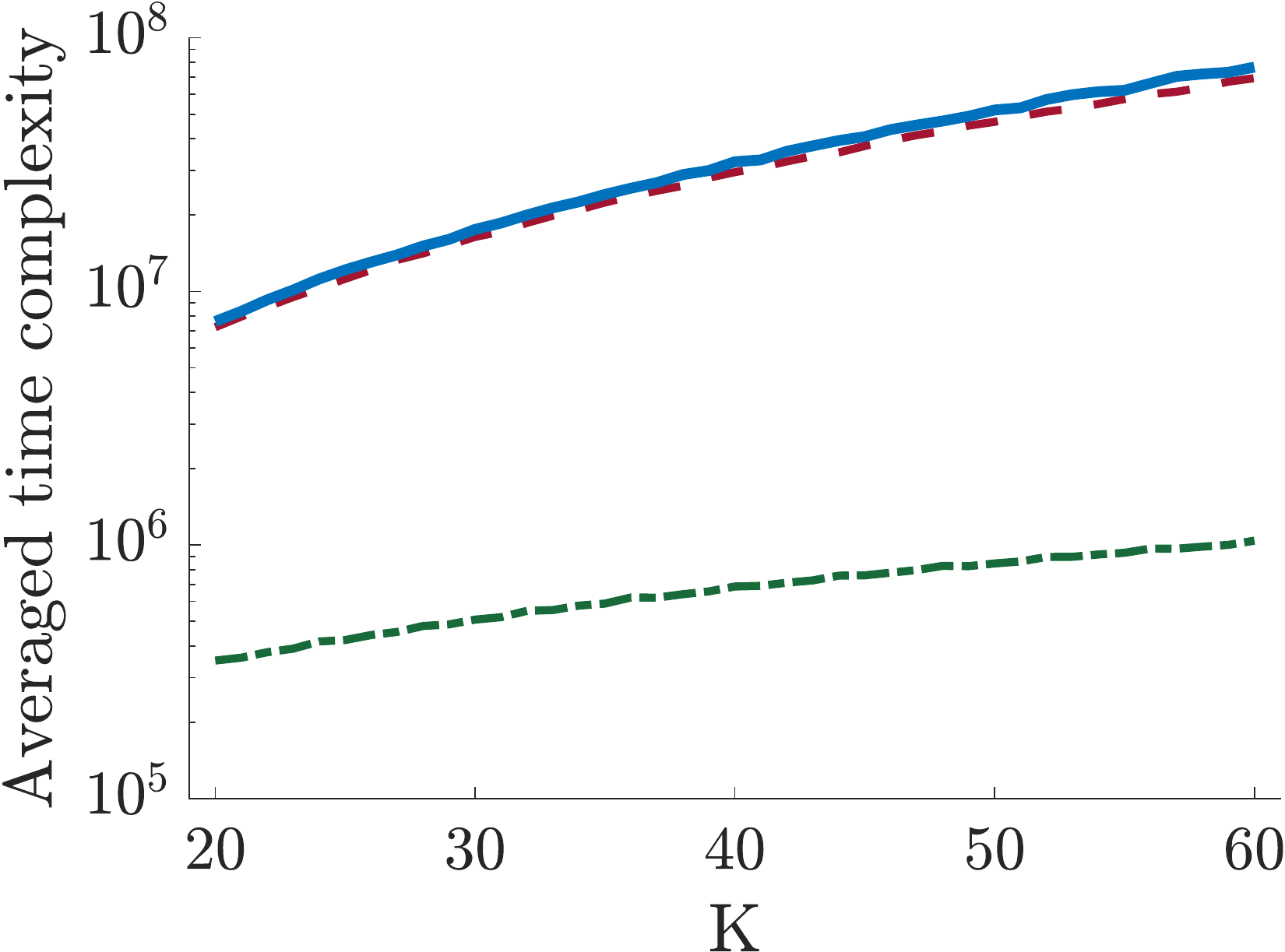} 
}
  \caption{Average time complexity of {\sc CascadeBAI$(\epsilon,\delta,K)$}, {\sc BatRac($1$)}, {\sc BatRac($K$)} with $L=128$, $\delta=0.1$, $\epsilon=0$, $K = 20,\ldots,60$. }
  \label{pic:compare_semi}
\end{figure}

\subsection{ Further empirical evidence }

Our analysis of the cascading feedback involves $v_k, \mu_k$ in the upper bound of the time complexity; these parameters depend strongly on $w^*, w'$ and $K$ (Lemma~\ref{lemma:bound_obs}, \ref{lemma:obs_sec_moment}). 
Hence, to assess the tightness of our analyses,
we consider several simplified cases by choosing $w^*, w'$ as functions of $ K$ and examine whether the dependence of the resultant time complexity (Proposition~\ref{prop:conf_upbd_two_prob}) on $K$ is materialized through numerical experiments.

\begin{table}[ht]
  \centering
  \caption{Upper bounds on the time complexity of Algorithm~\ref{alg:bai_cas_conf} with $L=128$, $K = 20,\ldots,60$, $\delta=0.1$, $\epsilon=0$~(Proposition~\ref{prop:conf_upbd_two_prob}), and their fitting results. 
  }
  \resizebox{.48\textwidth}{!}{ 
	\begin{tabular}{ccccc}
%	  No. & 
	  $w^*$     & $w'$ &  Upper bound    & Fit. model & $R^2$-stat.\\ 
	  \hline \\[-.8em]
%	  1 & 
	  $ {1}/{K}$ & $ {1}/{K^2}$ & $\tilde{O}  (
          K  
         )~$ &  $c_1 K + c_2$  
        & $0.999$	 
        \\[.5em]
      %
%      3 & 
      $1 - \!  {1}/{K^2 }$ & $ 1 - \!{1}/{K }$ & $\tilde{O}  (
         K^2  
         )$ &  $c_1 K^2 + c_2$  
        & $0.999$	 
        \\[.5em]
      %
%      2 & 
      $ {1}/{\sqrt{K} }$ & $ {1}/{K }$ & $\tilde{O}  (
         K  
         )~$ &  $c_1 K + c_2$ 
      & $0.973$	 
        \\[.5em]
      %
%      4 & 
      $1-\!  {1}/{K }$ & $ 1-\!  {1}/{\sqrt{K} }$ & $\tilde{O} (
         K%L %\log \left[ \frac{L}{ \delta } \log \left( \frac{ KL }{ \delta (w^*-w')^2 } \right) \right]
         )~$ &  $c_1 K + c_2$   
        & $1.000$	 
        \\[.5em]
      %
%      5 & 
      $1-\!  {1}/{K }$ & $  {1}/{K }$ & $\tilde{O} (
         K^2  
        )$ &  $c_1 K^2 + c_2$
        & $0.992$	
	    \end{tabular}%
	    }
	  \label{tab:K_set_approx}%
	\end{table}

We fit a model to the averaged time complexity under each setting as stated in Table~\ref{tab:K_set_approx}. 
In each case, the $R^2$-statistic is almost $1$, implying that the variability of the time complexity is almost fully explained by the proposed polynomial model~\citep{glantz1990primer}. 
Therefore, the empirical results show that the dependence of the upper bound on $K$ (Proposition~\ref{prop:conf_upbd_two_prob}) is rather tight, which implies that using the new concept of  LSG r.v.'s, our quantifications of the cascading feedback are also rather tight.

\begin{figure}[ht]
        \centering
  \subfigure[$w^* \! = \! \frac{1}{K}, w' \! = \! \frac{1}{K^2}$]{
\label{pic:K_fit_row_a}
\includegraphics[width = .23\textwidth]{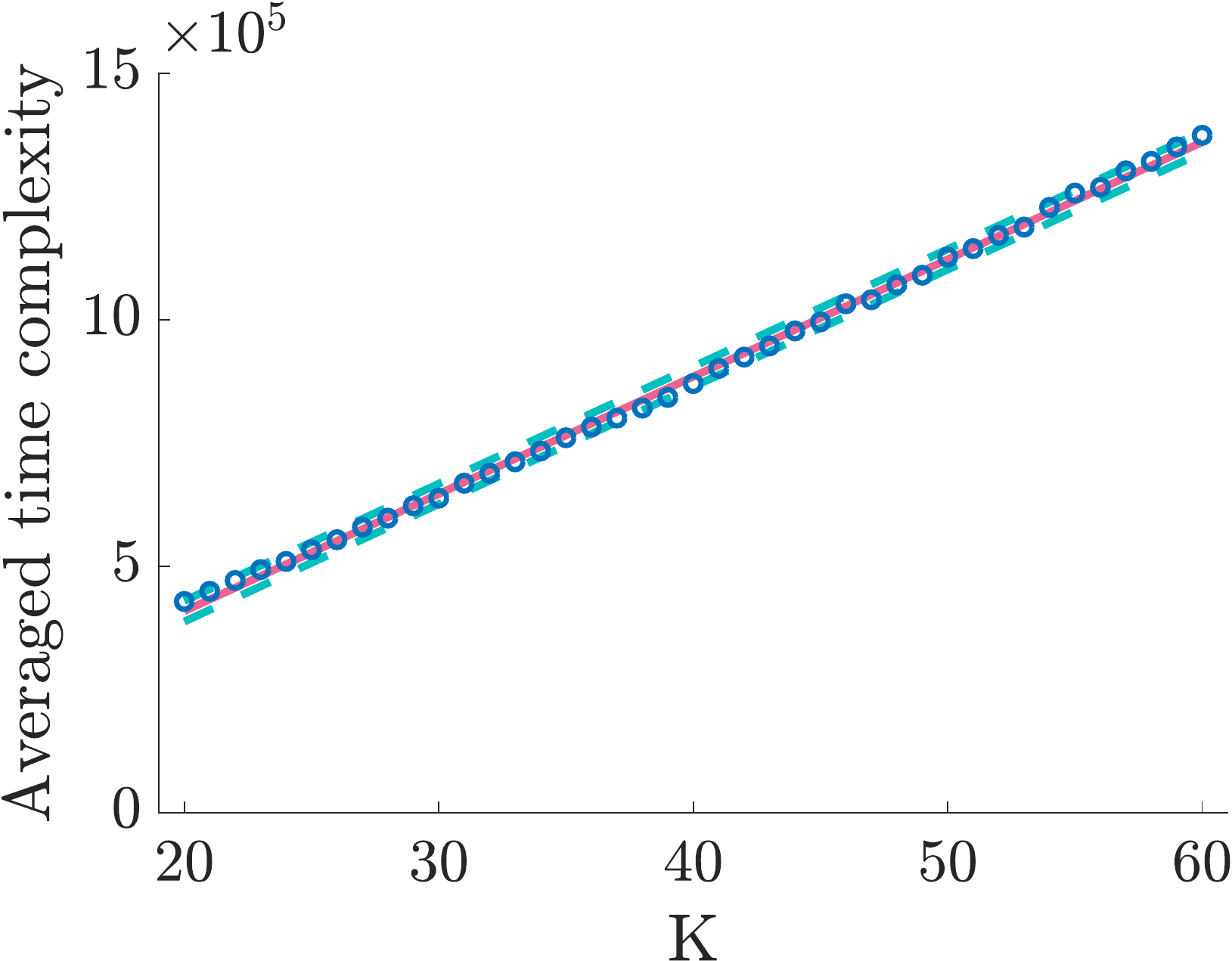} 
}
\hspace{-.5em}
\subfigure[$w^* \! = \! 1 \! - \! \frac{1}{K^2}, w' \! = \! 1 \! - \! \frac{1}{K}$]{
\label{pic:K_fit_row_b}
\includegraphics[width = .23\textwidth]{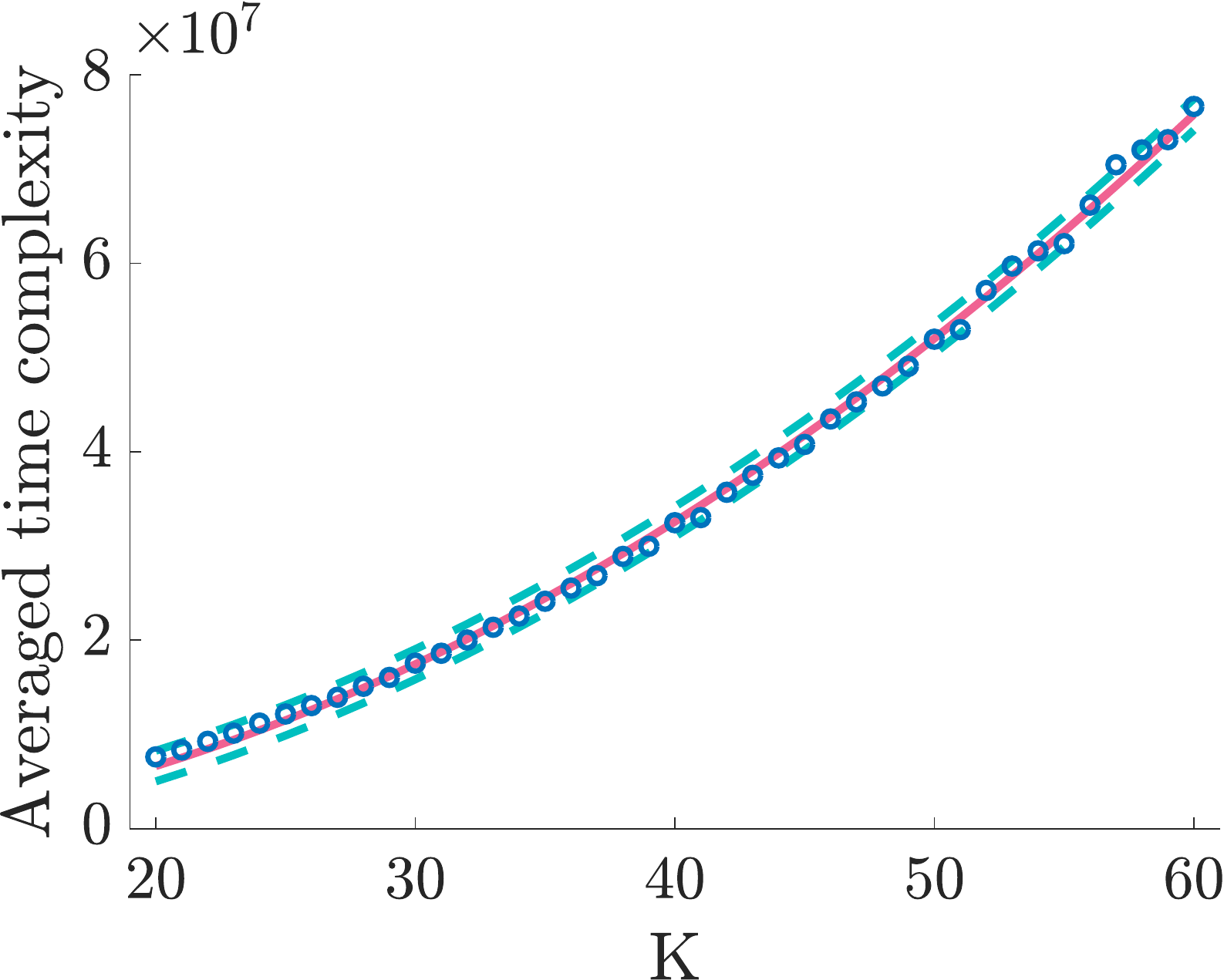} 
}
       \caption{Fit the averaged time complexity with functions of $K$ for two cases. Fix $L=128$, $\delta=0.1$, $\epsilon=0$.  
       Blue dots are the averaged time complexity, red line is the fitted curve, and cyan dashed lines show the $95\%$ confidence interval.}  
\label{pic:K_fit_row}
\end{figure}

\section{Summary and Future work}
\label{appdix:future_work}
This work presents the first theoretical analysis of best arm identification problem with partial feedback. 
We also show that the upper bound for the {\sc CascadeBAI($\epsilon,\delta,K$)} algorithm closely matches the lower bound in some cases. 
Empirical experiments further support the theoretical analysis. 
Moreover, the relation between the second moment and the LSG property of a bounded random variable may be of independent mathematical interest.

The assumption of $w^*<1/K$ (ensuring tightness of the sample complexity in Corollary~\ref{cor:comp_conf_up_low_two_prob}) is relevant in practical applications since CTRs are low in real applications. 
For most applications (e.g. online advertising), $K$ is small ($\approx 5$-$10$), so our assumption is reasonable.
We are cautiously optimistic that, the framework could be enhanced for better bounds in the remaining less practically relevant regime, which is an avenue for future work.

The following are some more avenues for further investigations. 
First, we note that estimating the number of observations per time step is key to establishing a high probability bound on the total number of time steps. 
In this work, we bound the expectation of this term with $w^*$ and $w’$ (Lemma~\ref{lemma:bound_obs}). This bound is tight in some cases (cf.\ Corollary~\ref{cor:comp_conf_up_low_two_prob}). These include the difficult case where all click probabilities $w(i)$'s are close to one another and hence the gaps are small. Nevertheless, a tighter bound for each individual time step may improve the results; this, however, will require a more elaborate and delicate analysis of the stochastic outcomes and their impacts at each time step. This is especially so since the order of selected items also affects the number of observations in the cascading bandit setting.
Secondly, this work focuses on the fixed-confidence setting of the BAI problem. We see that the consideration of the fixed-budget setting for cascading bandits is still not available. It is envisioned that the analysis of the statistical dependence between the number of observations and time steps would be non-trivial.
Thirdly, we envision that the analysis may be generalized to the contextual setting~\citep{soare2014best,pmlr-v80-tao18a}.

\section*{Acknowledgment}
We thank the reviewers for their insightful comments. This work is partially funded by a
National University of Singapore Start-Up Grant (R-266-000-136-133)
and
a Singapore National Research Foundation (NRF) Fellowship (R-263-000-D02-281).

%\newpage

\bibliography{cascadebairef}
\bibliographystyle{icml2020}

%%%%%%%%%%%%%%%%%%%%%%%%%%%%%%%%%%%%%%%%%%%%%%%%%%%%%%%%%%%%%%%%%%%%%%%%%%%%%%%
%%%%%%%%%%%%%%%%%%%%%%%%%%%%%%%%%%%%%%%%%%%%%%%%%%%%%%%%%%%%%%%%%%%%%%%%%%%%%%%
% DELETE THIS PART. DO NOT PLACE CONTENT AFTER THE REFERENCES!
%%%%%%%%%%%%%%%%%%%%%%%%%%%%%%%%%%%%%%%%%%%%%%%%%%%%%%%%%%%%%%%%%%%%%%%%%%%%%%%
%%%%%%%%%%%%%%%%%%%%%%%%%%%%%%%%%%%%%%%%%%%%%%%%%%%%%%%%%%%%%%%%%%%%%%%%%%%%%%%

\newpage

\onecolumn
\appendix
\section{Notations}

\begin{spacing}{1.5}
	\begin{longtable}[!h]{ p{.20\textwidth}  p{.80\textwidth} } 
           %%%%%% overall problem setting		  
           $[n]$ & set $\{1,\cdots,n\}$ for any $n\in \bbN$ \\
           $[n]^{(m)}  $ & set of all $m$-permutations of $[n]$, i.e., all ordered $m$-subset of $[n]$ for any $m\le n$ \\
		   $[L]$ & ground set of size $L$ \\
		   $w(i)$ & click probability of item $i \in [L]$ \\
		   $K$ & size of recommendation list/pulled arm \\
		   $[L]^{(K)}$ & set of all $K$-permutations of $[L]$ \\
		   $S_t$ & recommendation list/pulled arm at time step $t$\\
		   
		   $i_i^t$ & $i$-th pulled item at time step $t$\\
		   $W_t(i)$ & an r.v. that reflects whether the user clicks at item $i$ at time step $t$\\

		   $S_t^{\pi }$ & chosen arm at time step $t$  by algorithm $\pi$ \\
		   $\bmO_t^{\pi  }$ & stochastic outcome by pulling $S_t^{\pi }$ at time step $t$ by algorithm $\pi$ \\
		   $ \tilde{k}_t$ & feedback from the user at time step $t$ \\
		   
		   $\bar{w}(i)$ & equals to $1-w(i)$, i.e., one minus the click probability \\
		   $\bmw$ & the vector of click probabilities $w(i)$'s \\

		   $w^*$ & maximum click probability \\
		   $w'$ & minimum click probability \\
		   $\epsilon$ & tolerance parameter \\
		   $K'_\epsilon$ & number of $\epsilon$-optimal items (abbreviated as $K'$ for brevity)\\
		   $S^{ *}$ & optimal arm in $[K]^{ (K) } $\\

		   $\pi$ & deterministic and non-anticipatory algorithm \\
		   $\hat{S}^\pi$ & output of algorithm $\pi$ \\
		   $\calT^\pi$ & time complexity of algorithm $\pi$\\
		   $\phi^\pi$ & final recommendation rule of algorithm $\pi$\\
		   $\calF_t$ & observation history \\
		   $\delta$ & risk parameter (failure probability) \\
		   
		   $\bbT^*( \bmw, \epsilon,\delta,K )$ & optimal expected time complexity (abbreviated as $\bbT^*$) \\
		   
		   %%%%%% algorithm
		   $D_t$ & survival set in Algorithm~\ref{alg:bai_cas_conf} \\
		   $A_t$ & accept set in Algorithm~\ref{alg:bai_cas_conf} \\
		   $R_t$ & reject set in Algorithm~\ref{alg:bai_cas_conf} \\
		   $T_t(i)$ & number of observations of item $i$ by time step $t$ \\ %before time step $t$ added by $1$ \\
		   $\hat{w}_t(i)$ & empirical mean of item $i$ at time step $t$ in Algorithm~\ref{alg:bai_cas_conf}  \\
		   $k_t$ & number of $\epsilon$-optimal items to identify at time step $t$ in Algorithm~\ref{alg:bai_cas_conf}\\
		   $C_t(i,\delta)$ & confidence radius of item $i$ at time step $t$ in Algorithm~\ref{alg:bai_cas_conf}\\
		   $U_t(i,\delta)$ & upper confidence bound~(UCB) of item $i$ at time step $t$ in Algorithm~\ref{alg:bai_cas_conf}\\
		   $L_t(i,\delta)$ & lower confidence bound~(LCB) of item $i$ at time step $t$ in Algorithm~\ref{alg:bai_cas_conf}\\
		   $j^*$ & empirically $(K+1)$-th optimal item at time step $t$ in Algorithm~\ref{alg:bai_cas_conf}\\
		   $j'$ & empirically $K$-th optimal item at time step $t$ in Algorithm~\ref{alg:bai_cas_conf}\\
		   $\rho(\delta)$ & parameter used to define the confidence radius $C_t(i,\delta)$  \\

		   %%%%%% main result of upper bound
		   $c_1,c_2, \ldots$ & finite and positive universal constants whose values may vary from line to line \\
		   
		   $\Delta_i$ & gap between the click probabilities \\
		   $\bar{\Delta}_i$ & variation of $\Delta_i$ incurred by the tolerance parameter $\epsilon$ \\
		   $ \bar{T}_{i,\delta} $ & number of observations required to identify item $i$ with fixed $\delta$ and $\epsilon$ \\
		   $ \sigma(i) $ & descending order of $\bar{\Delta}_i$ of ground items \\
		   $\hatk_t$ & number of surviving items pulled at time step $t$ during the proceeding of Algorithm~\ref{alg:bai_cas_conf}\\
		   $X_{\hatk_t;t}$ & number of observations of surviving items at time step $t$\\
		   $ \mu(k,w)$ & lower bound on $\bbE X_{k;t}$  (abbreviated as $\mu_k $)\\
		   $v(k,w)$ & upper bound on $\bbE X_{k;t}^2$  (abbreviated as $v_k $)\\
		   $K_1, K_2, M_k$ & parameters used in Theorem~\ref{thm:fix_conf_up_bd} \\
		   $N_1, N_2, N_3$ & constituents in the upper bound established in Theorem~\ref{thm:fix_conf_up_bd} \\
		   %%%%%% main result of upper bound and lower bound
		   $\pi_1$ & represents Algorithm~\ref{alg:bai_cas_conf} for brevity \\
		   $\tilde{\mu}(k,w)$ & upper bound on $\bbE X_{k;t}$ (abbreviated as $\tilde{\mu}_k $) \\
		   %%%%%% proof sketch of 'partial feedback'
		   %%%%%% proof sketch of 'upper bound'
		   $\mathcal{E}(i,\delta)$ & ``nice event'' in the analysis of Algorithm~\ref{alg:bai_cas_conf} \\
		   %%%%%% proof sketch of 'lower bound'
		   $w^{(\ell)}(\ell)$ & click probability of item $\ell$ under instance $\ell~(1\le \ell \le L)$ \\
		   $S_t^{\pi, \ell  }$ & chosen arm at time step $t$  by algorithm $\pi$ under instance $\ell~(0\le \ell \le L)$\\
		   $\bmO_t^{\pi,  \ell  }$ & stochastic outcome by pulling $S_t^{\pi }$ at time step $t$ by algorithm $\pi$ under instance $\ell~(0\le \ell \le L)$ 
	\end{longtable}
\end{spacing}
	\addtocounter{table}{-1}

\section{Useful definitions and theorems}
Here are some basic facts from the literature that we will use:

\begin{restatable}
[Azuma’s Inequality for Martingales with Subgaussian Tails, implied by~\citet{shamir2011variant}]
{theorem}{theoremConcSubGaussOnesided}

\label{thm:conc_sub_gauss_onesided}
Let $\{(D_t,\calF_t)\}^\infty_{t=1}$ be a martingale difference sequence, and suppose that for any $\lambda\le 0$, we have $\bbE[e^{\lambda D_t} | \calF_{t-1}] \le e^{\lambda^2 \omega^2/2}$ almost surely. Then for all $\omega \ge 0$,
	\begin{align*}
		\Pr \left[ \sum^n_{t=1} D_t  \le - \omega \right]
		 \le 
		 \exp\left( -\frac{ \omega^2}{2\sum_{t=1}^{n} v^2_t } \right).
	\end{align*}
\end{restatable}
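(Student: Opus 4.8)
The plan is to run the standard Chernoff (exponential Markov) argument for the lower tail, invoking the one-sided subgaussian MGF bound only at nonpositive arguments, which is exactly the regime the lower tail requires. (I read the hypothesis as supplying a per-term parameter $v_t$, i.e.\ $\bbE[e^{\lambda D_t}\mid\calF_{t-1}]\le e^{\lambda^2 v_t^2/2}$ for all $\lambda\le0$, so that it matches the $\sum_{t=1}^n v_t^2$ appearing in the conclusion.) First I would fix any $\lambda\le 0$ and note that, since $x\mapsto e^{\lambda x}$ is nonincreasing for such $\lambda$, the event $\{\sum_{t=1}^n D_t\le-\omega\}$ coincides with $\{e^{\lambda\sum_{t=1}^n D_t}\ge e^{-\lambda\omega}\}$; Markov's inequality then yields
\begin{align*}
\Pr\Big[\sum_{t=1}^n D_t\le-\omega\Big]\le e^{\lambda\omega}\,\bbE\Big[\exp\Big(\lambda\sum_{t=1}^n D_t\Big)\Big].
\end{align*}

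Next I would control the joint MGF by peeling off one term at a time with the tower property. Because $\exp(\lambda\sum_{t=1}^{n-1}D_t)$ is $\calF_{n-1}$-measurable, conditioning on $\calF_{n-1}$ and applying the hypothesis to $D_n$ gives
\begin{align*}
\bbE\Big[\exp\Big(\lambda\sum_{t=1}^n D_t\Big)\Big]=\bbE\Big[\exp\Big(\lambda\sum_{t=1}^{n-1}D_t\Big)\,\bbE\big[e^{\lambda D_n}\mid\calF_{n-1}\big]\Big]\le e^{\lambda^2 v_n^2/2}\,\bbE\Big[\exp\Big(\lambda\sum_{t=1}^{n-1}D_t\Big)\Big].
\end{align*}
Iterating this inequality down to $t=1$ produces $\bbE[\exp(\lambda\sum_{t=1}^n D_t)]\le\exp(\tfrac{\lambda^2}{2}\sum_{t=1}^n v_t^2)$, whence $\Pr[\sum_{t=1}^n D_t\le-\omega]\le\exp(\lambda\omega+\tfrac{\lambda^2}{2}V)$ with $V:=\sum_{t=1}^n v_t^2$.

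Finally I would optimize the exponent $g(\lambda)=\lambda\omega+\tfrac{\lambda^2}{2}V$ over the admissible range $\lambda\le0$. Its unconstrained minimizer is $\lambda^\star=-\omega/V$, and this is the one point that genuinely needs checking: I must verify $\lambda^\star\le0$ so that the minimizer is feasible and the MGF bound legitimately applies there. This holds precisely because $\omega\ge0$ and $V>0$, which is exactly why left-sided subgaussianity is enough to control the lower tail. Substituting $\lambda^\star$ gives $g(\lambda^\star)=-\omega^2/(2V)$, yielding the claimed bound $\exp\big(-\omega^2/(2\sum_{t=1}^n v_t^2)\big)$. The only (mild) obstacle is this alignment between the sign constraint on $\lambda$ and the sign of the optimizing exponent; the martingale bookkeeping is otherwise routine, with the degenerate case $V=0$ trivial, and no real difficulty arises since the hypothesis is used solely at $\lambda\le0$.
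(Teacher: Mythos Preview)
Your argument is correct: the Chernoff--Markov bound at $\lambda\le 0$, the tower-property peeling of the conditional MGFs, and the optimization at $\lambda^\star=-\omega/V\le 0$ together yield exactly the stated inequality, and your reading of the hypothesis (per-term parameters $v_t$) is the only sensible one given the $\sum_{t=1}^n v_t^2$ in the conclusion. The paper does not supply its own proof of this theorem---it is quoted as a known fact implied by \citet{shamir2011variant}---so there is no paper proof to compare against; your derivation is the standard one and is precisely what such a citation is shorthand for.
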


\begin{theorem}[Non-asymptotic law of the iterated logarithm~\citep{jamieson2014lil,jun2016top}]\label{thm:conc_log}
	Let $X_{1}, X_{2}, \ldots$ be $i . i . d .$ zero-mean sub-Gaussian random variables with scale $\sigma>0 ;$ i.e.
$\mathbb{E} e^{\lambda X_{i}} \leq e^{\frac{\lambda^{2} \sigma^{2}}{2}} .$ Let $\omega \in(0, \sqrt{1 / 6}) .$ Then,
$$\mathbb{P}\left(\forall \tau \geq 1,\left|\sum_{s=1}^{\tau} X_{s}\right| \leq 4 \sigma \sqrt{ \frac{ \log \left(\log _{2}(2 \tau) / \omega\right) }{\tau} }\right) 
\geq 
1-6 \omega^{2}.$$
\end{theorem}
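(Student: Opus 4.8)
The plan is to establish this uniform-in-$\tau$ bound by a peeling (dyadic slicing) argument combined with a maximal version of the sub-Gaussian tail inequality, reducing to a single tail by symmetry. Write $S_\tau := \sum_{s=1}^\tau X_s$. The asserted event is $\{\,S_\tau \le g(\tau)\ \text{for all }\tau\ge 1\,\}$ with the increasing envelope $g(\tau) := 4\sigma\sqrt{\tau\,\log(\log_2(2\tau)/\omega)}$ (the statement writes this as a bound on the empirical sum $S_\tau$, equivalently on $S_\tau/\tau$). Since $-X_s$ is again zero-mean and $\sigma$-sub-Gaussian, a union bound gives $\Pr(\exists\,\tau:\ |S_\tau|>g(\tau)) \le 2\,\Pr(\exists\,\tau:\ S_\tau>g(\tau))$, so I would bound the upper tail and double the result.

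First I would record the maximal sub-Gaussian inequality underlying (the exponential-supermartingale method behind) Theorem~\ref{thm:conc_sub_gauss_onesided}. Because $\bbE[e^{\lambda X_s}]\le e^{\lambda^2\sigma^2/2}$, the process $M_\tau := \exp(\lambda S_\tau - \lambda^2\sigma^2\tau/2)$ is, for each fixed $\lambda>0$, a nonnegative supermartingale with $\bbE M_0 = 1$. Using $-\lambda^2\sigma^2\tau/2 \ge -\lambda^2\sigma^2 n/2$ for $\tau\le n$, the event $\{\max_{\tau\le n}S_\tau\ge a\}$ is contained in $\{\max_{\tau\le n}M_\tau\ge e^{\lambda a-\lambda^2\sigma^2 n/2}\}$; Doob's maximal inequality (equivalently Ville's inequality) then gives $\Pr(\max_{\tau\le n}S_\tau\ge a)\le e^{-\lambda a+\lambda^2\sigma^2 n/2}$, and optimizing at $\lambda=a/(\sigma^2 n)$ yields
\[
\Pr\Big(\max_{\tau\le n}S_\tau\ge a\Big)\le \exp\!\Big(-\frac{a^2}{2\sigma^2 n}\Big).
\]

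Next I would peel. Partition $\{\tau\ge 1\}$ into the dyadic blocks $B_k:=\{2^k\le \tau<2^{k+1}\}$, $k=0,1,2,\dots$. On $B_k$ one has $\log_2(2\tau)\ge k+1$, and since $g$ is increasing its minimum over $B_k$ is $g(2^k)=4\sigma\sqrt{2^k\log((k+1)/\omega)}=:a_k$. Hence any violation inside $B_k$ forces $\max_{\tau<2^{k+1}}S_\tau>a_k$, and the maximal inequality with $n=2^{k+1}$ gives per-block failure probability at most $\exp(-a_k^2/(2\sigma^2 2^{k+1}))=\exp(-4\log((k+1)/\omega))=\omega^4/(k+1)^4$. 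Summing over blocks, $\sum_{k\ge 0}\omega^4/(k+1)^4=\omega^4\zeta(4)\le 2\omega^4$, so the upper tail has probability $\le 2\omega^4$; doubling for both tails gives total failure $\le 4\omega^4\le 6\omega^2$ (using $\omega<1$), which is the claim.

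The only delicate point — and hence the main obstacle — is the bookkeeping that calibrates the constant in $g$ against the block geometry: one must verify that $g$ is monotone on each $B_k$ (so evaluating at the left endpoint $2^k$ is legitimate) and that the bound $\log_2(2\tau)\ge k+1$ makes the per-block probabilities summable. Once the factor $4$ is fixed this reduces to the convergent series above, and the comfortable slack relative to $6\omega^2$ shows the stated constant is not tight. The same scheme specializes directly to the confidence radius $C_t(i,\delta)$ of Algorithm~\ref{alg:bai_cas_conf} by taking $\omega=\rho(\delta)$ and adding a union bound over the $L$ items.
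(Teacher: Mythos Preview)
The paper does not prove Theorem~\ref{thm:conc_log}; it is stated as a known result from \citet{jamieson2014lil} and \citet{jun2016top} and used as a black box (in Lemma~\ref{le:conc_event}). So there is no in-paper proof to compare against.

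That said, your argument is correct and is essentially the standard one from those references: dyadic peeling on $\tau$, a Doob/Ville maximal inequality for the sub-Gaussian supermartingale $\exp(\lambda S_\tau - \lambda^2\sigma^2\tau/2)$, and a union bound over blocks that yields a summable series $\sum_{k\ge 0}\omega^4/(k+1)^4 = \omega^4\zeta(4)$. The monotonicity of $g$ and the inequality $\log_2(2\tau)\ge k+1$ on $B_k$ are exactly what is needed, and the final slack $4\omega^4\le 6\omega^2$ holds for all $\omega<1$. You also correctly read the statement as a bound on the empirical mean $S_\tau/\tau$ (equivalently $|S_\tau|\le 4\sigma\sqrt{\tau\log(\log_2(2\tau)/\omega)}$); as printed, the display bounds $|S_\tau|$ by a quantity that tends to $0$, which is a typo --- the usage in $C_t(i,\delta)$ and in the proof of Lemma~\ref{le:conc_event} confirms the intended normalization.
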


\section{Influence of $\epsilon$} 
\label{appdix:influen_epsilon}

In general, a larger $\epsilon$ indicates a smaller time complexity. 
Here are two explanations. 
	(i) When $\epsilon$ grows, $K'_\epsilon$, the number of $\epsilon$-optimal items also grows. Then it should be easier to identify an $\epsilon$-optimal arm. 
	(ii) If $\epsilon$ is sufficiently large such that $K'_\epsilon \ge 2K-1$, then there are at least $K$ items left in the survival set $D_t$ before the algorithm stops. Otherwise, when $|D_t|<K$, 
	the agent pulls $|D_t|<K$ surviving items at some steps and this results in a wastage in the number of time steps.

\begin{restatable}{proposition}{propFixConfUpBdManyEpsOpt} \label{prop:fix_conf_up_bd_many_eps_opt}
	Assume $K'\ge 2K-1$.
	With probability at least $1-\delta$, Algorithm~\ref{alg:bai_cas_conf} outputs an $\epsilon$-optimal arm after at most $(c_1 N_1' + c_2 N_2' )$ steps where
\begin{align*}
    N_1' & = \frac{ 2v_K^2 }{\mu_K^2}\log \left( \frac{2}{\delta} \right) 
        =O \left(
            \frac{ v_K^2 }{\mu_K^2}\log \left( \frac{2}{\delta} \right)
            \right),\\
    N_2' & = \frac{2}{\mu_K}\left[
		\sum_{i=1}^{L-K'+K-1} \bar{T}_{\sigma(i)} + (K'-K+1) \bar{T}_{\sigma(L-K'+K)} + (K'-K)
	\right] \\
	 &   = O \left(
	        \frac{1}{\mu_K}\left\{
		\sum_{i=1}^{L-K'+K-1} \bar{\Delta}_{\sigma(i)}^{-2} \log \left[ \frac{ L}{\delta } \log \left( \frac{ L }{ \delta \bar{\Delta}_{ \sigma(i) }^{2} } \right)\right]
		+ (K'-K+1) \bar{\Delta}_{\sigma(L-K'+K)}^{-2} \log \left[ \frac{ L}{\delta } \log \left( \frac{ L }{ \delta \bar{\Delta}_{ \sigma(L-K'+K ) }^{2} } \right)
		\right]
		\right\}
	        \right).
\end{align*}
\end{restatable}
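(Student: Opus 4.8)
The plan is to follow the blueprint of Theorem~\ref{thm:fix_conf_up_bd} while exploiting the simplification that, when $K'\ge 2K-1$, the algorithm never has to complement a pulled arm with an already–identified item, so the whole run takes place in a single regime in which exactly $K$ surviving items are pulled at each step. First I would condition on the ``nice event'' $\calE:=\bigcap_{i=1}^{L}\calE(i,\delta)$, which holds with probability at least $1-\delta/2$ by Lemma~\ref{le:conc_event}. On $\calE$, Lemma~\ref{le:stop_eliminate_no} guarantees that at most $L-\max\{K'-K,1\}$ items are identified before the algorithm stops, so the survival set always has size at least $K'-K\ge K-1$; a short counting argument (using that, on $\calE$, no $\epsilon$-optimal item is ever rejected and no suboptimal item is ever accepted, via Lemma~\ref{le:suff_obs_to_identif}) shows that $|D_t|=K-1$ is incompatible with the while-loop still being active, so in fact $|D_t|\ge K$ and hence $\hat k_t=K$, $X_{\hat k_t;t}=X_{K;t}$, at every step $t\le\calT$. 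This is the only place the hypothesis $K'\ge 2K-1$ enters, and it is exactly what makes this case easier than Theorem~\ref{thm:fix_conf_up_bd}.

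Next I would bound the total number of observations of surviving items, $\sum_{t=1}^{\calT}X_{K;t}$, deterministically on $\calE$. Since the sampling rule always pulls the $K$ surviving items with the fewest previous observations, $T'_t:=\min_{i\in D_t}T_t(i)$ is non-decreasing and the observation counts of surviving items stay within a universal constant of one another; by Lemma~\ref{le:suff_obs_to_identif}, the moment $T'_t$ reaches $\bar{T}_{i,\delta}$ the item $i$ leaves $D_t$, and (because $|D_t|\ge K$) it is never pulled again. Ordering the items as $\bar\Delta_{\sigma(1)}\ge\cdots\ge\bar\Delta_{\sigma(L)}$, the worst case is that the $K'-K$ unidentified survivors at termination are the hardest reachable items, the last item to be identified is $\sigma(L-K'+K)$, each identified item $\sigma(i)$ ($i\le L-K'+K-1$) has accrued at most $\bar{T}_{\sigma(i),\delta}$ observations, item $\sigma(L-K'+K)$ has accrued exactly $\bar{T}_{\sigma(L-K'+K),\delta}=T'_{\calT}$, and each of the $K'-K$ unidentified survivors at most $\bar{T}_{\sigma(L-K'+K),\delta}+1$. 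Summing over all $L$ items yields
\[
\sum_{t=1}^{\calT}X_{K;t}\ \le\ \sum_{i=1}^{L-K'+K-1}\bar{T}_{\sigma(i),\delta}+(K'-K+1)\bar{T}_{\sigma(L-K'+K),\delta}+(K'-K)\ =:\ T .
\]

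Finally I would convert this observation budget into a bound on $\calT$ via Lemma~\ref{lemma:conc_one_sub_gauss_apply_obs} applied with $k=K$ and failure level $\delta/2$: on the complement of the event $\calE^{*}$ defined there (probability at least $1-\delta/2$), any $n$ with $\sum_{t=1}^{n}X_{K;t}\le T$ satisfies $n\le 2T/\mu_K+2\log(2/\delta)\,v_K^2/\mu_K^2$, so the algorithm cannot still be running after that many steps; hence on $\calE$ together with the complement of $\calE^{*}$ — an event of probability at least $1-\delta$ — we get $\calT\le 2T/\mu_K+2\log(2/\delta)\,v_K^2/\mu_K^2 = c_1N_1'+c_2N_2'$ for universal constants $c_1,c_2$ (here $c_1=c_2=1$ up to absorbing the last–step overhead and a ceiling). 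The $O(\cdot)$ forms of $N_1'$ and $N_2'$ then follow by substituting the closed form $\bar{T}_{i,\delta}=O(\bar\Delta_i^{-2}\log[\tfrac L\delta\log(\tfrac{L}{\delta\bar\Delta_i^{2}})])$ and noting that the additive $(K'-K)$ is dominated since each $\bar{T}$-term is at least one.

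The main obstacle is the deterministic bookkeeping in the second step: one has to argue carefully that in the worst case the unidentified survivors are precisely the $K'-K$ hardest reachable items, that the almost-uniform allocation keeps all surviving observation counts within one of $T'_t$ (so the only over-sampling slack is the $+1$ attached to each of the $K'-K$ survivors, with item $\sigma(L-K'+K)$ itself carrying none), and thereby that the coefficient of $\bar{T}_{\sigma(L-K'+K),\delta}$ is exactly $K'-K+1$ and the additive term exactly $K'-K$. Everything else — the two concentration ingredients (Lemmas~\ref{le:conc_event} and \ref{lemma:conc_one_sub_gauss_apply_obs}), the conversion from observations to time steps, and the $\delta/2+\delta/2$ union bound — is routine given the results already in place.
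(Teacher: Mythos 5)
Your proposal is correct and follows essentially the same route as the paper's proof: condition on the nice event of Lemma~\ref{le:conc_event}, use Lemma~\ref{le:stop_eliminate_no} to conclude that $|D_t|\ge K'-K+1\ge K$ throughout so that $K$ surviving items are pulled every step, bound the total surviving-item observations by exactly the quantity $\sum_{i=1}^{L-K'+K-1}\bar T_{\sigma(i)}+(K'-K+1)\bar T_{\sigma(L-K'+K)}+(K'-K)$ via Lemma~\ref{le:suff_obs_to_identif} and the almost-uniform allocation, and then invert with Lemma~\ref{lemma:conc_one_sub_gauss_apply_obs} at level $\delta/2$ before taking the union bound. Your bookkeeping, the resulting $2\tilde T_0/\mu_K + (2v_K^2/\mu_K^2)\log(2/\delta)$ bound, and the roles of the two $\delta/2$ events all match the paper's argument.
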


\section{Proofs of main results}
\label{appdix:pf_main_results}
In this Section, we provide proofs of 
%%%%%% 
%%%%%% simplifications of main theorem
Proposition
\ref{prop:term_by_cas}, 
Corollary
\ref{cor:compare_full_feedback},
Propositions
\ref{prop:conf_comp_full_feedback_exp},
\ref{prop:conf_upbd_two_prob},
\ref{prop:conf_upbd_two_prob_exp},
%%%%%% 
Corollary
\ref{cor:conf_lowbd_two_prob},
%%%%%% novel: partial feedback
Theorem~\ref{thm:prop_obs}, \ref{thm:sec_moment_to_left_sub_gauss}, 
Lemmas~\ref{lemma:bound_obs} --
%, 
%\ref{lemma:obs_sec_moment},
%\ref{lemma:conc_one_sub_gauss_apply_obs},
%%\ref{lemma:geo_left_sub_gau},
%%\ref{lemma:obs_sub_exp}, \ref{lemma:conc_apply_obs},
%%%%%%% 
%%%%%%% similar results to Jun's paper
%\ref{le:conc_event},
%\ref{le:suff_obs_to_identif},
%\ref{le:stop_eliminate_no},
%%%%%%%
%%%%%%% lower bound
%\ref{lemma:KLdecompOR},
\ref{lemma:lb_KLdecomp},
and complete the proof of 
Theorems~\ref{thm:fix_conf_up_bd}, \ref{thm:lb_fix_conf_dep},
Proposition~\ref{prop:fix_conf_up_bd_many_eps_opt}
in this order.

\subsection{Proof of Proposition~\ref{prop:term_by_cas} }
\label{pf:prop_term_by_cas}
\propTermCas*

\begin{proof}
According to Lemma~\ref{lemma:bound_obs} and Theorem~\ref{thm:sec_moment_to_left_sub_gauss},
\begin{align*}
    \mu_k \ge \min\{ k/2, 1/(2w^*) \},\quad
    v_k = \min \{  k, \sqrt{2}/w' \}.
\end{align*}

We upper bound $v_k/\mu_k$ and $k/\mu_k$ in two cases:

\noindent {(i)}: $0< w^* \le 1/K$: $0< w' < w^* \le 1/K$, $v_k =k $, $\mu_k \ge k/2$.
\begin{align*}
	& \frac{v_k}{ \mu_k } \le \frac{k }{ k/2 }  =2
	~\Rightarrow~
    \sum_{k=1}^{ K-K_2 }  
         \frac{ v_{K-k+1}^2 }{\mu_{K-k+1}^2} 
         \log \left( \frac{1 }{\delta} \sum_{j=1}^{ K-K_2 } 
          \frac{ v_{K-j+1}^2 }{\mu_{K-j+1}^2} 
          \right) 
    \le 
    4K
    \log \left( \frac{ 4K }{\delta}  
          \right)
          .
\end{align*}

\noindent {(ii)}: $1/K < w^* \le 1$:  $v_k \le \sqrt{2}/w'$, $\mu_k \ge 1/(2w^*)$.
\begin{align*}
	& \frac{v_k}{ \mu_k } \le \frac{  \sqrt{2}/w' }{ 1/(2w^*) } = \frac{2\sqrt{2}w^*}{ w' }
	~\Rightarrow~
    \sum_{k=1}^{ K-K_2 }  
         \frac{ v_{K-k+1}^2 }{\mu_{K-k+1}^2} 
         \log \left( \frac{1 }{\delta} \sum_{j=1}^{ K-K_2 } 
          \frac{ v_{K-j+1}^2 }{\mu_{K-j+1}^2} 
          \right) 
    \le 
     \frac{8Kw^{*2} }{w'^2 } 
    \log \left(
          \frac{8Kw^{*2} }{\delta w'^2 } 
          \right).
\end{align*}

\end{proof}
 
\subsection{Proof of Corollary~\ref{cor:compare_full_feedback} }
\label{pf:cor_compare_full_feedback}
\corCompFullFeedback*

\begin{proof}
 According to Lemma~\ref{lemma:bound_obs} and Theorem~ \ref{thm:sec_moment_to_left_sub_gauss},
\begin{align*}
    \mu_k \ge \min\{ k/2, 1/(2w^*) \},\quad
    v_k = \min \{  k, \sqrt{2}/w' \}.
\end{align*}
We first upper bound $v_k/\mu_k$ and $k/\mu_k$ in two cases:

\noindent {(i)}: $0< w^* \le 1/K$: $0< w' < w^* \le 1/K$, $v_k =k $, $\mu_k \ge k/2$.
\begin{align*}
    \frac{v_k}{ \mu_k } \le \frac{k}{k/2} = 2,\quad 
    \frac{k}{\mu_k} \le \frac{k}{k/2} = 2.
\end{align*}

\noindent {(ii)}: $1/K < w^* \le 1$:  $v_k \le \sqrt{2}/w'$, $\mu_k \ge 1/(2w^*)$.
\begin{align*}
    \frac{v_k}{ \mu_k } \le \frac{  \sqrt{2}/w' }{ 1/(2w^*) } = \frac{2\sqrt{2}w^*}{ w' },\quad
    \frac{k}{\mu_k} \le \frac{k}{ 1/(2w^*) } = 2kw^*.
\end{align*}

Next, we separate the upper bound in Theorem~\ref{thm:fix_conf_up_bd} into two parts and bound them separately:
\begin{align*}
    (A) &
          = \sum_{k=1}^{K-1}  
         \frac{  v_{K-k+1}^2 }{\mu_{K-k+1}^2} 
         \log \left( \frac{1 }{\delta} \sum_{j=1}^{K-1 } 
          \frac{  v_{K-j+1}^2 }{\mu_{K-j+1}^2} 
          \right) , \\
     (B) & =
     \frac{1}{\mu_K} \sum_{i=1}^{L-K} \bar{T}_{\sigma(i)} 
        +
           \sum_{k=1}^{ K-K_1-1 }\bar{T}_{\sigma(L-K+k)}
          \left( \frac{  K-k+1  }{ \mu_{K-k+1} } - \frac{  K-k  }{ \mu_{K-k}}  \right)
        + 
         \left( \frac{  K_1+1 }{ \mu_{K_1+1} } -2 \right) \bar{T}_{\sigma(L-K_1 ) }
         +
         2 \bar{T}_{\sigma(L-K_2 )}
\end{align*}
with
$  K_1 = \min\{\lfloor 1/w^*\rfloor, K-1 \} , \ 
			    K_2 = 1$.
			    
\noindent \textbf{\underline{Case 1}: All click probabilities $w(i)$ are at most $1/K$}. $ 1/w^* \ge K$ and $v_k/\mu_k \le 2$,  $K_1 = K-1$.
\begin{align*}
	(A) & \le 4(K-1) \log \left( \frac{4(K-1)}{\delta}  \right)  = O \left( K \log\left( \frac{K}{\delta} \right) \right), \\
	(B) & 
	\le 
	\frac{2}{ K} \sum_{i=1}^{L-K} \bar{T}_{\sigma(i)} 
        + 
         \left( \frac{  2 }{ \mu_{2} } -2 \right) \bar{T}_{\sigma(L-K+1 ) }
         +
         2 \bar{T}_{\sigma(L-1 )}
    = O \left(
    \frac{1}{ K} \sum_{i=1}^{L-K} \bar{T}_{\sigma(i)} 
        + \bar{T}_{\sigma(L-1 )}
    \right).
\end{align*}

\noindent \textbf{\underline{Case 2}: All click probabilities $w(i)$ are at least $1/2$}. $ 1/w^* \le K$ implies $v_k/\mu_k \le 4\sqrt{2}$,  $K_1 \ge 1$.
\begin{align*}
	(A) & \le 32(K-1) \log \left( \frac{32(K-1)}{\delta}  \right)  = O \left( K \log\left( \frac{K}{\delta} \right) \right), \\
	(B) 
	& \le 
	2 \sum_{i=1}^{L-K} \bar{T}_{\sigma(i)} 
        +
           \sum_{k=1}^{ K-2 }\bar{T}_{\sigma(L-K+k)}
          \left( \frac{  K-k+1  }{ \mu_{K-k+1} } - \frac{  K-k  }{ \mu_{K-k}}  \right)
        + 
         \frac{  2 }{ \mu_{2} } \bar{T}_{\sigma(L-1 ) } \\
    & = 
    2 \sum_{i=1}^{L-K} \bar{T}_{\sigma(i)} 
        +
           \sum_{k=0}^{ K-3 }\frac{  K-k   }{ \mu_{K-k } }\bar{T}_{\sigma(L-K+k+1)}
        - 
        	\sum_{k=1}^{ K-2 } \frac{  K-k  }{ \mu_{K-k}} \bar{T}_{\sigma(L-K+k)}
        + 
         \frac{  2 }{ \mu_{2} } \bar{T}_{\sigma(L-1 ) } \\
    & = 
    2 \sum_{i=1}^{L-K} \bar{T}_{\sigma(i)} 
        +
        \frac{K}{\mu_K} \bar{T}_{\sigma(L-K +1)}
        +
           \sum_{k=0}^{ K-2 }\frac{  K-k   }{ \mu_{K-k } }\left[ \bar{T}_{\sigma(L-K+k+1)}
        -   \bar{T}_{\sigma(L-K+k)} \right] \\
    & \le 
    2 \sum_{i=1}^{L-K} \bar{T}_{\sigma(i)} 
        +
        \frac{K}{\mu_K} \bar{T}_{\sigma(L-K +1)}
        +
           \sum_{k=0}^{ K-2 }2(K-k) \cdot \left[ \bar{T}_{\sigma(L-K+k+1)}
        -   \bar{T}_{\sigma(L-K+k)} \right] \\
	& \le 
	2 \sum_{i=1}^{L-K} \bar{T}_{\sigma(i)} 
        +
           \sum_{k=1}^{ K-2 }\bar{T}_{\sigma(L-K+k)}
          \left[ 2( K-k+1)  - 2(K-k)  \right]
        + 
         4 \bar{T}_{\sigma(L-1 ) } \\
    & = O \left(
    \sum_{i=1}^{L-1} \bar{T}_{\sigma(i)} 
    \right).
\end{align*}

Recall that when $\epsilon=0$,
\begin{align*}
	\barT_i = O \left(
		\Delta_i^{-2} \log
		\left(
		\frac{L}{\delta} \log \left( \frac{L}{\delta \Delta_i^{2} } \right)
		\right)
	\right)
\end{align*}
for all $i\in [L]$. Altogether, we complete the proof.

\end{proof}

\subsection{Proof of Proposition~\ref{prop:conf_comp_full_feedback_exp}}
\label{pf:prop_conf_comp_full_feedback_exp}

\propConfCompFullFeedbackExp*

\begin{proof}
     
     \textbf{ (i) Consider all click probabilities $w(i)'$'s are at most $1/K$.} 
For any $0<\delta'\le \delta$, revisit the proof and result of Theorem~\ref{thm:fix_conf_up_bd}. 
First, Lemma~\ref{le:conc_event} implies that $\bbP \left( \bigcap_{i=1}^L \calE(\epsilon,\delta' ) \right) \ge 1-\delta'/2$.
    Assume $ \bigcap_{i=1}^L \calE(\epsilon,\delta' ) $ holds from now on. 
Secondly, Lemma~\ref{le:suff_obs_to_identif} indicates that Algorithm~\ref{alg:bai_cas_conf} can correctly identify item $i$ after $\barT_{i,\delta}$ observations.
Thirdly, similar to the discussion in Section~\ref{sec:main_pf_ub}, we set $\sum_{k=1}^{K-1} \delta_k \le \delta'/2$.
Additionally applying the analysis in Proposition~\ref{prop:term_by_cas} and Corollary~\ref{cor:compare_full_feedback}, with probability at least $1-\delta'$, we can bound the time complexity by
\begin{align*}
   & \sum_{k=1}^{ K-1}  
         \frac{ v_{K-k+1}^2 }{\mu_{K-k+1}^2} 
         \log \left( \frac{ 1 }{\delta‘} \sum_{j=1}^{ K-K_2 } 
          \frac{ v_{K-j+1}^2 }{\mu_{K-j+1}^2} 
          \right)
          + K + \frac{8}{K} \sum_{i=1}^{L-K} \barT_{\sigma(i)} + 8\barT_{ \sigma(L-1) } \\
    & \le 
    4K \log \left( \frac{4K}{\delta'} \right) + K + 
    \frac{8}{ K}
    \sum_{i=1}^{L-K} 
    \frac{217}{ \Delta_{\sigma(i)}^{2} } \log
		\left[
		\frac{24L}{\delta'} \log_2 \left( \frac{ 648\times 12L}{\delta' \Delta_{\sigma(i)}^{2} } \right)
		\right]
	+
	\frac{7136}{ \Delta_{\sigma(L-1)}^{2} } \log
		\left[
		\frac{24L}{\delta'} \log_2 \left( \frac{ 648\times 12 L}{\delta' \Delta_{\sigma(L-1)}^{2} } \right)
		\right] \\
		& \le 
	 5K \log \left( \frac{4K}{\delta'} \right)+ 
    \frac{1}{ K}
    \sum_{i=1}^{L-K} 
    \frac{10600}{ \Delta_{\sigma(i)}^{2} } \log
		\left[
		\frac{ L}{\delta'} \log_2 \left( \frac{  L}{\delta' \Delta_{\sigma(i)}^{2} } \right)
		\right]
	+
	\frac{10600 }{ \Delta_{\sigma(L-1)}^{2} } \log
		\left[
		\frac{ L}{\delta'} \log_2 \left( \frac{   L}{\delta' \Delta_{\sigma(L-1)}^{2} } \right)
		\right] \\
    & \le 
       10610 \log \left( \frac{1}{ \delta'^2 } \right) \cdot
           \left\{
         \frac{1}{ K}
    \sum_{i=1}^{L-K} 
    \Delta_{\sigma(i)}^{-2} \log
		\left[
		L \log \left( \frac{L}{  \Delta_{\sigma(i)}^{2} } \right)
		\right]
	+
	\Delta_{\sigma(L-1)}^{-2} \log
		\left[
		L\log \left( \frac{L}{ \Delta_{\sigma(L-1)}^{2} } \right)
		\right]
		\right\}.
\end{align*}    
In short, set 
\begin{align*}
    A = 21220 
    \left\{
         \frac{1}{ K}
    \sum_{i=1}^{L-K} 
    \Delta_{\sigma(i)}^{-2} \log
		\left[
		L \log \left( \frac{L}{  \Delta_{\sigma(i)}^{2} } \right)
		\right]
	+
	\Delta_{\sigma(L-1)}^{-2} \log
		\left[
		L\log \left( \frac{L}{ \Delta_{\sigma(L-1)}^{2} } \right)
		\right]
		\right\},
\end{align*}
then $  \Pr( \calT >  -A \log \delta'  ) < \delta'$ for any $ 0\le \delta' \le \delta$.

Meanwhile, Tonelli's Theorem implies that
\begin{align*}
    \bbE \calT 
    = \bbE \left[ \int_0^\calT 1 ~\rmd x  \right]
    = \bbE \left[ \int_0^{+\infty} \bbI (\calT>x) ~\rmd x  \right]
    = \int_0^{+\infty}\bbE \left[  \bbI (\calT>x)  \right] ~\rmd x
    = \int_0^{+\infty} \bbP(\calT>x) ~\rmd x.
\end{align*}
Since $x= -A \log \delta$ implies $\delta = e^{-x/A}$ and
    $\int_0^{+\infty} e^{-x/A} ~\rmd x
    =  A e^{-x/A}|_{x=+\infty}^0 = A$,
\begin{align*}
    & \bbE \calT \le 
    \int_0^{ -A \log \delta } 1 ~\rmd x
    +
    \int_{ -A \log \delta }^{+\infty} \bbP(\calT>x) ~\rmd x  
    \le 
    -A \log \delta + 
    \int_{ 0 }^{+\infty} \bbP(\calT>x) ~\rmd x   
    = 
    -A \log \delta + A
    \\
    & \le 
     42440  \log \left( \frac{1}{\delta} \right) \cdot 
    \left\{
         \frac{1}{ K}
    \sum_{i=1}^{L-K} 
    \Delta_{\sigma(i)}^{-2} \log
		\left[
		L \log \left( \frac{L}{  \Delta_{\sigma(i)}^{2} } \right)
		\right]
	+
	\Delta_{\sigma(L-1)}^{-2} \log
		\left[
		L\log \left( \frac{L}{ \Delta_{\sigma(L-1)}^{2} } \right)
		\right]
		\right\}
		.
\end{align*}

\textbf{ (ii) Consider all click probabilities $w(i)$'s are at least $1/2$.} 
The analysis is similar to that in Case (i). With the analysis in Theorem~\ref{thm:fix_conf_up_bd} and results in Proposition~\ref{prop:term_by_cas} and Corollary~\ref{cor:compare_full_feedback}, for any $0<\delta'\le \delta$, with probability at least $1-\delta'$, the time complexity is at most
\begin{align*}
   & \sum_{k=1}^{ K-1}  
         \frac{ v_{K-k+1}^2 }{\mu_{K-k+1}^2} 
         \log \left( \frac{ 1 }{\delta'} \sum_{j=1}^{ K-1 } 
          \frac{ v_{K-j+1}^2 }{\mu_{K-j+1}^2} 
          \right)
          + K +  8\sum_{i=1}^{L-1} \barT_{\sigma(i)}   \\
    & \le 
    \frac{ 8Kw^{*2} }{w'^2 }  \log \left(
            	\frac{ 8Kw^{*2} }{\delta w'^2 } 
            	  \right) + K + 
    \sum_{i=1}^{L-1} 
    \frac{8 \times 217}{ \Delta_{\sigma(i)}^{2} } \log
		\left[
		\frac{24L}{\delta'} \log_2 \left( \frac{ 648\times 12L}{\delta' \Delta_{\sigma(i)}^{2} } \right)
		\right] \\
		& \le 
	 32K  \log \left(
            	\frac{  32K }{\delta  }  \right) + K + 
   \sum_{i=1}^{L-1} 
    \frac{ 10598 }{ \Delta_{\sigma(i)}^{2} } \log
		\left[
		\frac{ L}{\delta'} \log_2 \left( \frac{  L}{\delta' \Delta_{\sigma(i)}^{2} } \right)
		\right] \\
    & \le 
       10630 
    \sum_{i=1}^{L-1} 
    \Delta_{\sigma(i)}^{-2} \log
		\left[
		L \log \left( \frac{L}{  \Delta_{\sigma(i)}^{2} } \right)
		\right]
		\log \left( \frac{1}{ \delta'^2 } \right) 
		.
\end{align*}    
Set $A = 21260 \sum_{i=1}^{L-1} 
    \Delta_{\sigma(i)}^{-2} \log
		\left[
		L \log \left( \frac{L}{  \Delta_{\sigma(i)}^{2} } \right)
		\right]$,
then $  \Pr( \calT >  -A \log \delta'  ) < \delta'$ for any $ 0\le \delta' \le \delta$. Lastly,
\begin{align*}
    \bbE \calT \le 
    42520
    \sum_{i=1}^{L-1} 
    \Delta_{\sigma(i)}^{-2} \log
		\left[
		L \log \left( \frac{L}{  \Delta_{\sigma(i)}^{2} } \right)
		\right]
		\log \left( \frac{1}{ \delta  } \right) .
\end{align*}

\end{proof}

\subsection{Proof of Proposition~\ref{prop:conf_upbd_two_prob}  } 
\label{pf:prop_conf_upbd_two_prob}

\propConfUpbdTwoprob*

\begin{proof}
\textbf{We first remind ourselves how the algorithm proceeds.}
In this instance,  $\epsilon=0$ yields $K^*=1$. 
For any item $i \in [L]$, $\bar{\Delta}_i=\Delta_i=w^*-w'$. 
And according to Lemma~\ref{le:suff_obs_to_identif},  item $i$ will be correctly classified with high probability after $\barT_i$ observations where $\rho = \delta/(12L)$,
	\begin{align*}
        \barT_{i,\delta} = \barT_{(w,\delta)} = \barT_{(w)} 
        &  = 
        1 + \left\lfloor \frac{216}{   (w^*-w')^2 }  \log \left( \frac{2}{ \rho} \log_2 \left( \frac{ 648 }{ \rho (w^*-w')^2 } \right) \right)  \right\rfloor  \\
       &  = O\left(
         \frac{ 1 }{   (w^*-w')^2 }  \log \left[ \frac{L}{ \delta } \log_2 \left( \frac{ L }{ \delta (w^*-w')^2 } \right) \right]
        \right)        
        .
    \end{align*}
This implies that each item requires the same number of observations to be correctly identified. According to the design of algorithm, $ T_t(j)-1 \le T_t(i) \le T_t(j)+1 $ for any remaining items $i \neq j$.
Therefore, the worst case is as follows: 
\begin{itemize}
	\item the agent observes one item for $\barT_{(w)}$ times and the others for $\barT_{(w)}-1$ times after $t'$ steps, and identifies one item per step for the subsequent $L-2$ steps.
\end{itemize} 

Therefore, we now turn to upper bounding the number of steps required to eliminate an item for the first time. 
According to Lemma~\ref{lemma:conc_one_sub_gauss_apply_obs}, we
 set $\delta_0 = \delta/2$, $k=K$, $n=t'$, 
 $\omega_K'= - \sqrt{-2t' v_K^2 \log (\delta/2)}$.
  Then the total number of observations during $t'$ steps should be larger than $t' \mu_K + \omega_K'$ with probability at least $1-\delta/2$. The number of observations can be upper bounded as follows:
\begin{align*}
	t' \mu_K + \omega_K'	
	& \le \bar{T}_{(w)} + (L-1)[ \bar{T}_{(w)} - 1]
	= L\cdot\bar{T}_{(w)} - L + 1.
\end{align*} 
Then with Lemma~\ref{le:conc_event} and its ensuing discussion in Section~\ref{sec:main_pf_ub},
with probability at least $1-\delta$, the time complexity is upper bounded by 
\begin{align}
	& \frac{2 (L\cdot\bar{T}_{(w)} - L + 1)}{\mu_K} 
	+  \frac{2v_K^2 }{\mu_K^2}\log \left(\frac{2}{\delta}\right).
	 \nonumber 
\end{align}

\textbf{Next, we consider how the values of $w^*$ and $w'$ affect the bound.}
 According to Lemma~\ref{lemma:bound_obs} and Theorem~ \ref{thm:sec_moment_to_left_sub_gauss},
\begin{align*}
    \mu_k \ge \min\{ k/2, 1/(2w^*) \},\quad
    v_k = \min \{  k, \sqrt{2}/w' \}.
\end{align*}
We discuss two cases separately:

\noindent \underline{Case 1}: $0<w^*\le 1/K$: $0< w' < w^* \le 1/K$, $v_K =K $, $\mu_K \ge K/2$. The upper bound becomes:
\begin{align*}
    \frac{4 (L\cdot\bar{T}_{(w)} - L + 1)}{ K} 
	+  \frac{ 8  K^2 }{ K^2}\log \left(\frac{2}{\delta}\right)
	=
	O\left(
         \frac{  L }{   K(w^*-w')^2 }  \log \left[ \frac{L}{ \delta } \log_2 \left( \frac{ L }{ \delta (w^*-w')^2 } \right) \right]
        \right).
\end{align*}

\noindent \underline{Case 2}: $1/K < w^* \le 1$:  $v_k \le \sqrt{2}/w'$, $\mu_k \ge 1/(2w^*)$. The bound becomes
\begin{align*}
    & \frac{2 (L\cdot\bar{T}_{(w)} - L + 1)}{  1/(2w^*) } 
	+  \frac{ 4 /w'^2 }{ 1/(2w^*)^2}\log \left(\frac{2}{\delta}\right) 
	=
	O\left(
         \frac{ w^*L }{   (w^*-w')^2 }  \log \left[ \frac{L}{ \delta } \log_2 \left( \frac{ L }{ \delta (w^*-w')^2 } \right) \right]
         +
         \left( \frac{w^{*}}{ w' } \right)^2 \cdot \log \left( \frac{1}{\delta} \right)
        \right).
\end{align*}

\end{proof}

\subsection{Proof of Proposition~\ref{prop:conf_upbd_two_prob_exp}}
\label{pf:prop_conf_upbd_two_prob_exp}

\propConfUpbdTwoprobExp*

\begin{proof}
     For any $0<\delta'\le \delta$, we revisit the proof of Proposition~\ref{prop:conf_upbd_two_prob}. 
     Firstly, Lemma~\ref{le:conc_event} implies that $\bbP \left( \bigcap_{i=1}^L \calE(\epsilon,\delta' ) \right) \ge 1-\delta'/2$.
    Assume $ \bigcap_{i=1}^L \calE(\epsilon,\delta' ) $ holds from now on. 
    Secondly, Lemma~\ref{le:suff_obs_to_identif} implies that the agent can identify any item correctly after 
\begin{align*}
    \barT_{ (w,\delta' ) } =
    1 + \left\lfloor \frac{216}{  (w^*-w')^2 }  \log \left( \frac{24L}{ \delta' } \log_2 \left( \frac{ 648*24L }{ \delta' (w^*-w')^2 } \right) \right)  \right\rfloor
    \le 
    \frac{1320 }{  (w^*-w')^2 }  \log \left[ \frac{ L}{ \delta' } \log  \left( \frac{  L }{ \delta' (w^*-w')^2 } \right) \right]
\end{align*}
    observations. 
Then with analysis similar to Appendix~\ref{pf:prop_conf_upbd_two_prob}, we can upper bound the time complexity of Algorithm~\ref{alg:bai_cas_conf} with probability $1-\delta'$.

\noindent \underline{Case 1}: $0<w^*\le 1/K$: 
with probability at least $1-\delta'$, the time complexity is upper bounded by 
\begin{align*}
    & \frac{4 L\cdot\bar{T}_{(w,\delta')} }{ K} 
	+  8 \log \left(\frac{2}{\delta'}\right)
	\le 
	\frac{5288L }{ K (w^*-w')^2 }  \log \left[ \frac{ L}{ \delta' } \log  \left( \frac{  L }{ \delta' (w^*-w')^2 } \right) \right] \\
	& \le 
	\frac{ 10576L }{ K (w^*-w')^2 }  \log \left[ L \log  \left( \frac{  L }{   (w^*-w')^2 } \right) \right] \log (\frac{1}{\delta'})
	:= -A \log \delta'.
\end{align*}
Then for any $0<\delta' \le \delta$, $ \Pr(  \calT > -A \log \delta' )< \delta'$.
Meanwhile, Tonelli's Theorem implies that
\begin{align*}
    \bbE \calT 
    = \bbE \left[ \int_0^\calT 1 ~\rmd x  \right]
    = \bbE \left[ \int_0^{+\infty} \bbI (\calT>x) ~\rmd x  \right]
    = \int_0^{+\infty}\bbE \left[  \bbI (\calT>x)  \right] ~\rmd x
    = \int_0^{+\infty} \bbP(\calT>x) ~\rmd x.
\end{align*}
Since $x= -A \log \delta$ implies $\delta = e^{-x/A}$ and
    $\int_0^{+\infty} e^{-x/A} ~\rmd x
    =  A e^{-x/A}|_{x=+\infty}^0 = A$,
\begin{align*}
    & \bbE \calT \le 
    \int_0^{ -A \log \delta } 1 ~\rmd x
    +
    \int_{ -A \log \delta }^{+\infty} \bbP(\calT>x) ~\rmd x  
    \le 
    -A \log \delta + 
    \int_{ 0 }^{+\infty} \bbP(\calT>x) ~\rmd x   
    = 
    -A \log \delta + A
    \\
    & \le 
     \frac{ 21152 L }{ K (w^*-w')^2 }  \log \left[ L \log  \left( \frac{  L }{   (w^*-w')^2 } \right) \right] \log (\frac{1}{\delta})
		.
\end{align*}

\noindent \underline{Case 2}: $1/2 \le w' < 1$ or $w^*/w' \le 2$:  with a similar analysis, for any $0<\delta \le \delta'$, with
\begin{align*}
    & 4 w^* L \bar{T}_{(w,\delta')}  
	+  16 \left( \frac{w^{*}}{ w' } \right)^2 \log \left(\frac{2}{\delta'}\right) 
	\le 
	\frac{ 5280w^*L }{  (w^*-w')^2 }  \log \left[ \frac{ L}{ \delta' } \log  \left( \frac{  L }{ \delta' (w^*-w')^2 } \right) \right] 
	+ 
	64 \log \left( \frac{1}{\delta'} \right)\\
	& \le 
	\frac{ 10624w^*L }{  (w^*-w')^2 }  \log \left[ L \log  \left( \frac{  L }{  (w^*-w')^2 } \right) \right] \log \left(  \frac{1}{\delta'} \right)
	:= -A \log \delta'
\end{align*}
$\Pr( \calT > -A\log \delta' ) < \delta'$. Lastly,
\begin{align*}
    \bbE \calT \le 
    \frac{ 21248 w^*L }{  (w^*-w')^2 }  \log \left[ L \log  \left( \frac{  L }{  (w^*-w')^2 } \right) \right] \log \left(  \frac{1}{\delta} \right).
\end{align*}

\end{proof}

\subsection{Proof of Corollary~\ref{cor:conf_lowbd_two_prob} }
\label{pf:cor_conf_lowbd_two_prob}
\corConfLowbdTwoprob*

\begin{proof}
	First, by setting $w(i)=w^*$ for all $1\le i\le K$ and $w(j)=w'$ for all $k<j\le L$, the result in Theorem~\ref{thm:lb_fix_conf_dep} becomes
	\begin{align*}
		&  
		 \frac{\mathrm{KL}(1-\delta,\delta)}{\tilde{\mu_K}}
		 \cdot
		 \left[
		   \frac{K}{ \mathrm{KL} ( w^*, w' ) }
		   + \frac{L-K}{ \mathrm{KL} (  w', w^* ) }
	\right] 
		\ge 
		 \frac{\log (1/2.4\delta)}{\tilde{\mu_K}}
		 \cdot
		 \left[
		   \frac{K}{ \mathrm{KL} ( w^*, w' ) }
		   + \frac{L-K}{ \mathrm{KL} (  w', w^* ) }
	\right] .
	\end{align*}

	Next, according to Pinsker's and reverse Pinsker's inequality for any two distributions $ P$ and $ Q$ defined in the same finite space $X$ we have
\begin{align*}
	\delta(P,Q)^2 \le \frac{1}{2} \mathrm{KL}(P,Q)
	\le \frac{1}{\alpha_Q} \delta(P,Q)^2
\end{align*}
where
$
	\delta(P,Q) = \sup\{ | P(A)-Q(A)| \big| A \subset X \} \text{ and } \alpha_Q = \min_{x\in X: Q(x)>0} Q(x)
$.
In our case, set $\delta(w^*,w') = (w^*-w')^2$ and $\alpha = \min \{ w',w^*,1-w^*,1-w' \} = \min \{ w' ,1-w^* \}$, we have
\begin{align*}
	& (w^*-w')^2 \le \frac{1}{2} \mathrm{KL}(w^*,w')
	\le \frac{1}{\alpha } (w^*-w')^2
	= \frac{ 1 }{ \min \{ w' ,1-w^* \} } (w^*-w')^2,\\
	& (w^*-w')^2 \le \frac{1}{2} \mathrm{KL}(w',w^*)
	\le \frac{1}{\alpha } (w^*-w')^2
	= \frac{ 1 }{ \min \{ w' ,1-w^* \} } (w^*-w')^2.
\end{align*}
Further since $\tilde{\mu}_K \le 1/w'$ as stated by Lemma~\ref{lemma:bound_obs}, the lower bound becomes
\begin{align*}
	\Omega\left( 
	\min \{ w' ,1-w^* \} \cdot 
	\frac{L w' }{ ( w^*-w' )^2 } 
	\log \left[ \frac{1}{ \delta} \right] 
	\right)
	.
\end{align*}
\end{proof}

\subsection{Proof of Theorem~\ref{thm:prop_obs}}
\label{pf:thm_prop_obs}

\thmObsProp*
\begin{proof}
	(i) 
	Consider any ordered set $I=( i_1^{I}, \ldots, i_k^{I} )$.  To show ${\mu}_k( \bmu,I_{ \textrm{dec} }) \le {\mu}_k(  \bmu,I)\le {\mu}_k(\bmu,I_{ \textrm{inc} })$, it is sufficient to show the following:
	\begin{itemize}
		\item[$(*)$:] If there exists $1\le m < k$ such that $u_{ i_m^I }<u_{i_{m+1}^I }$, we can change their positions to get $I'$ and have ${\mu}_k(\bmu,I) > \mu_k(\bmu,I')$.
	\end{itemize}
The proof of $(*)$ is as follows:
	\begin{align*}
		\text{if}~ 1\le m < k-1, & \\ 		
		\mu_k(\bmu,I) - \mu_k(\bmu,I') 
		& = m\cdot \prod_{j=1}^{m-1} (1-u_{i_j^I })(u_{ i_m^I }-u_{i_{m+1}^I } ) 
		+ (m+1)\cdot \prod_{ j=1}^{m-1} 
		  (1-u_{i_j^I} )[ u_{i_{m+1}^I }(1-u_{i_m^I }) - u_{i_m^I }(1-u_{i_{m+1}^I }) ] \\
		&  = -\prod_{j=1}^{m-1} (1-u_{i_j^I })(u_{ i_m^I }-u_{i_{m+1}^I } ) > 0 ; \\
		\text{if}~ m = k-1, \hspace{1.8em}& \\
		 \mu_k(\bmu,I) - \mu_k(\bmu,I') 
		& = m\cdot \prod_{j=1}^{m-1} (1-u_{i_j^I })(u_{ i_m^I }-u_{i_{m+1}^I } ) 
		+ (m+1)\cdot \prod_{ j=1}^{m-1} 
		  (1-u_{i_j^I} )[  (1-u_{i_m^I }) -  (1-u_{i_{m+1}^I }) ] \\
		& = -\prod_{j=1}^{m-1} (1-u_{i_j^I })(u_{ i_m^I }-u_{i_{m+1}^I } ) > 0 .
	\end{align*}

(ii) 
To show the monotonicity, it is sufficient to show the following:
\begin{itemize}
	\item[$(\#)$:] Set two sets of click probabilities $u$, $v$ such that $v_{i_m^I}>u_{i_m^I}$ for some $1\le m \le k$ and $v_{i_j^I}=u_{i_j^I}$ for $j \neq m$. Then we have $\mu_k(\bmu,I) \ge \mu_k(\bmv,I)$.
\end{itemize}
Here is the proof of $(\#)$.
If $m=k$, then obviously we have $ {\mu}_k(\bmu,I)={\mu}_k(\bmv,I)$. If $1\le m <k$, we exchange positions of the $m$-th and $(m+1)$-th item to get a new ordered set $I_1$, then
\begin{align*}
	& \mu_k(\bmu,I) - \mu_k(\bmu,I_1) 
		= -\prod_{j=1}^{m-1} (1-u_{i_j^I })(u_{ i_m^I }-u_{i_{m+1}^I } ), \ \
	 \mu_k(\bmv,I) - \mu_k(\bmv,I_1) 
		= -\prod_{j=1}^{m-1} (1-u_{i_j^I })(v_{ i_m^I }-u_{i_{m+1}^I } ).
\end{align*}
Hence
\begin{align*}
	 \mu_k(\bmu,I) - \mu_k(\bmv,I)
	& = [ \mu_k(\bmu,I) - \mu_k(\bmu,I_1) ] 
	 	- [ \mu_k(\bmv,I) - \mu_k(\bmv,I_1) ] + \mu_k(\bmu,I_1) - \mu_k(\bmv,I_1)\\
	& = -\prod_{j=1}^{m-1} (1-u_{i_j^I })(u_{ i_m^I }-v_{i_{m}^I } ) + \mu_k(\bmu,I_1) - \mu_k(\bmv,I_1) \\
	& > \mu_k(\bmu,I_1) - \mu_k(\bmv,I_1).
\end{align*}
If $m+1<k$, note that the only difference between $(\bmu, I_1)$ and $(\bmv, I_1)$ now lies in the click probability of the $(m+1)$-th item. In detail,
\begin{align*}
	u_{i_{m+1}^{I_1} } = u_{i_m^I},\ 
	v_{i_{m+1}^{I_1} } = v_{i_m^I}
	~\text{ and } ~
	u_{i_j^{I_1}} = v_{i_j^{I_1}},\ \forall j\ne m+1.
\end{align*}
We exchange positions of the $(m+1)$-th and $(m+2)$-th item in $I_1$ to get a new ordered set $I_2$. Similarly we have
\begin{align*}
	 \mu_k(\bmu,I_1) - \mu_k(\bmv,I_1)
	& = [ \mu_k(\bmu,I_1) - \mu_k(\bmu,I_2) ] 
	 	- [ \mu_k(\bmv,I_1) - \mu_k(\bmv,I_2) ] + \mu_k(\bmu,I_2) - \mu_k(\bmv,I_2)\\
	& = -\prod_{j=1}^{m} (1-u_{i_j^{I_1} })(u_{ i_{m+1}^{I_1} }-v_{i_{m+1}^{I_1} } ) + \mu_k( \bmu,I_2) - \mu_k( \bmv,I_2) \\
	& = -\prod_{j=1}^{m} (1-u_{i_j^{I_1} })(u_{ i_{m}^{I} }-v_{i_{m}^{I} } ) + \mu_k( \bmu,I_2) - \mu_k( \bmv,I_2) \\
	& > \mu_k( \bmu,I_2) - \mu_k( \bmv,I_2).
\end{align*}
We can continue this operation for $n=k-m$ times and get $I_n$. Iteratively, we have $ \mu_k( \bmu,I) - \mu_k( \bmv,I) \ge \mu_k( \bmu,I_n) - \mu_k( \bmv,I_n) $. Besides, the only difference between $( \bmu, I_n)$ and $( \bmv, I_n)$ now lies in the click probability of the $k$-th item:
\begin{align*}
	u_{i_{k}^{I_n} } = u_{i_m^I},\ 
	v_{i_{k}^{I_n} } = v_{i_m^I}
	~\text{ and } ~
	u_{i_j^{I_n}} = v_{i_j^{I_n}},\ \forall j\ne k.
\end{align*}
Since $\mu_k( \bmu,I_n) = \mu_k( \bmv,I_n)$, $\mu_k( \bmu,I) \ge \mu_k( \bmv,I)$.

\end{proof}

\subsection{Proof of Lemma~\ref{lemma:bound_obs} }
\label{pf:lemma_bound_obs}
\lemmaBoundObs*

\begin{proof}

\textbf{Lower bound.} According to Lemma~\ref{thm:prop_obs}, the expectation of observations attains its minimum when we pull an ordered set $\{ 1, 2, \ldots,k \}$, and attains its maximum when we pull an ordered set $\{ L-k+1, L-k+2, \ldots,L \}$. In other words, depending on the instance, the expectation of observations can be lower bounded as follows:
\begin{align*}
	\mu_k = \mu(k,w)
	=  \sum_{i=1}^{k-1}  i \cdot w(i) \cdot \prod_{ j=1 }^{i-1} [1-w(j)]  + k\cdot \prod_{j=1}^{k-1} [1-w(j)] .
\end{align*}

Moreover, since the lower bound ${\mu}_k$ is larger than the expectation of observations when $w(i)=w^*$ for all $1\le i\le k$ or we pull item $1$ for $K$ times~(note that this is not allowed in Algorithm~\ref{alg:bai_cas_conf}), we can utilize only $w^*$ to lower bound the expectation:
\begin{align}
    \mu_k & \ge \sum_{i=1}^{k-1}  i \cdot w^* (1-w^*) ^{i-1}  + k  (1-w^*)^{k-1} 
    := g(w^*)  
    \nonumber
\end{align}
then
\begin{align*}
	g(w) & = w + 2w(1-w) + \ldots + (k-1)w(1-w)^{k-2} + k(1-w)^{k-1} \\
	(1-w) \cdot g(w) & = w(1-w) + 2w(1-w)^2 + \ldots + (k-1)w(1-w)^{k-1} + k(1-w)^{k} \\
	w\cdot g(w) & = w + w(1-w) + w(1-w)^2 + \ldots + w(1-w)^{k-2} + [ k - (k-1)w](1-w)^{k-1} - k(1-w)^{k} \\
	w\cdot g(w) & = w + w(1-w) + w(1-w)^2 + \ldots + w(1-w)^{k-2} + ( k - kw+w -k + kw )(1-w)^{k-1}  \\
	w\cdot g(w) & = w \cdot \frac{1 - (1-w)^k}{w} \\ % - k(1-w)^k + k(1-w)(1-w)^{k-1} \\
	g(w) & = \frac{1 - (1-w)^k}{w} . %- k(1-w)^{k-1} .
\end{align*}
Let $w^* = k^{-\beta} \in [0,1]$, then $\beta\ge 0$. 
Since $(1-1/x)^x$ is a nondecreasing function of $x$ and $\lim_{x\rightarrow \infty}(1-1/x)^x = 1/e$, $k^{1-\beta}\ge 0$,
\begin{align*}
     g(w^*) = \frac{1 - (1-w^*)^ k }{w^*}
        = \frac{1 - (1-k^{-\beta} )^{ k^\beta\cdot k^{1-\beta}} }{ k^{-\beta} }
        \ge  k^\beta \cdot \left( 1 - e^{ -k^{1-\beta}}  \right).
\end{align*}

If $\beta\ge 1$, let $f(x)=e^{-x}$, then $f^{(n)}(x)=(-1)^n\cdot e^{-x}$. For any $x\ge 0$,  there exists $y\in[0,x]$ such that
\begin{align*}
    f(x) = f(0) + f'(0)\cdot x + \frac{1}{2!} f^{(2)}(0)\cdot x^2 + \frac{1}{3!} f^{(3)}(0)\cdot y^3
    = 1 -x + \frac{x^2}{2} - \frac{y^3}{3} \le 1 -x + \frac{x^2}{2}.
\end{align*}
This leads to $1-e^{-x} \ge x (1-x/2)$ and
    \begin{align*}
        g(w^*) 
        \ge k^\beta \cdot k^{1-\beta}( 1 - k^{1-\beta}/2 )
        \ge k(1-1/2)
        = k/2.
    \end{align*}
Otherwise, $0\le \beta <1$. Since
\begin{align*}
    \beta \nearrow ~\Rightarrow~ 1-\beta \searrow ~\Rightarrow~ k^{1-\beta} \searrow ~\Rightarrow~
    {-k^{1-\beta}} \nearrow ~\Rightarrow~ e^{-k^{1-\beta}} \nearrow ~\Rightarrow~
    1 - e^{-k^{1-\beta} } \searrow
    ~,
\end{align*}
$1 - e^{-k^{1-\beta} }$ decreases as $\beta$ increases. Then,
    \begin{align*}
       g(w^*)  
       \ge k^\beta \cdot ( 1 - e^{-k^{1-1} } )
       = k^\beta \cdot ( 1- e^{-1} )
        \ge k^\beta \cdot (1-1/2)
        = k^\beta/2.
    \end{align*}
Altogether, $\mu_k \ge \min\{ k/2, k^\beta/2 \} = \min\{  k/2, 1/(2w^*)  \}$.

\textbf{Upper bound.} Similarly we can see that the expectation of observations attains its maximum when we pull an ordered set $\{ L, L-1, \ldots, L-k+1 \}$, and therefore upper bounded by
\begin{align*}
	\tilde{\mu}_k = \tilde{\mu}(k,w)
	=  \sum_{i=1}^{k-1}  i \cdot w(L+1-i) \cdot \prod_{ j=1 }^{i-1} [1-w(L+1-j)]  + k\cdot \prod_{j=1}^{k-1} [1-w(L+1-j)] .
\end{align*}

Furthermore, the upper bound $\tilde{\mu}_k$ is smaller than the expectation of  observations when $w(j)=w'$ for all $L-k+1 \le j\le L$ or we pull item $L$ for $K$ times~(again note that this is not allowed in Algorithm~\ref{alg:bai_cas_conf}):
\begin{align*}
    \tilde{\mu}_k \le \sum_{i=1}^{k-1}  i \cdot w' (1-w') ^{i-1}  + k  (1-w')^{k-1} 
    = g(w') \le \frac{1}{w'}.
\end{align*}

\end{proof}

\subsection{Proof of Theorem~\ref{thm:sec_moment_to_left_sub_gauss} }
\label{pf:thm_sec_moment_to_left_sub_gauss}

\thmSecmoToLeftSubgauss*

\begin{proof}
Set $\bbE X = \mu$ and $0\le X \le M$ a.s., then $M\ge 0$ and $0 \le \mu \le M$.
It  is equivalent to show that for any $v\ge \bbE X^2$, $\lambda\le 0$,
\begin{align*}
	\bbE[ \exp(\lambda X) ] \le \exp \left( \frac{v^2\lambda^2}{2} +\lambda\mu \right).
	%\label{eq:sub_exp_gauss_cond_equiv}
\end{align*}
Set
\begin{align*}
	f(\lambda) := \frac{v^2\lambda^2}{2} +\lambda\mu 
	- \log \bbE[ \exp(\lambda X) ],
\end{align*}
it is further equivalent to show $f(\lambda)\ge 0$. Then since $0\le X \le M$ a.s., for any $\lambda\le 0$, by Bounded Convergence Theorem,
\begin{align*}
    & \bbE[ \exp(\lambda X) ] \le 1, ~~
    \left| \frac{\rmd }{\rmd \lambda} \exp(\lambda X)  \right| = |X \exp(\lambda X) | \le M\ a.s., \\
    \Rightarrow ~&
    \frac{\rmd }{\rmd \lambda} \bbE[ \exp(\lambda X)  ]
    = \bbE\left[ \frac{\rmd }{\rmd \lambda} \exp(\lambda X) \right]
    = \bbE[ X \exp(\lambda X)  ] \le M,~~
     \left| \frac{\rmd }{\rmd \lambda} X\exp(\lambda X)  \right| = |X^2 \exp(\lambda X) | \le M^2\ a.s., \\
     \Rightarrow ~&
     \frac{\rmd }{\rmd \lambda} \bbE[ X\exp(\lambda X)  ]
    = \bbE\left[ \frac{\rmd }{\rmd \lambda} X\exp(\lambda X)  \right]
    = \bbE[ X^2 \exp(\lambda X)  ] \le M^2, \\
    & \left| \frac{\rmd }{\rmd \lambda} X^2\exp(\lambda X)  \right| = |X^3 \exp(\lambda X) | \le M^3\ a.s., \\
    \Rightarrow ~&
     \frac{\rmd }{\rmd \lambda} \bbE[ X^2\exp(\lambda X)  ]
    = \bbE \left[ \frac{\rmd }{\rmd \lambda} X^2\exp(\lambda X) \right]
    = \bbE[ X^3 \exp(\lambda X)  ] .
\end{align*}
Therefore,
\begin{align*}
	& f(0) = 0, \\
	& f'(\lambda) = v^2 \lambda + \mu - \frac{\frac{\rmd}{\rmd \lambda} \bbE[ \exp(\lambda X) ] }{\bbE[ \exp(\lambda X) ]}
	= v^2 \lambda + \mu - \frac{ \bbE[ X\exp(\lambda X) ] }{\bbE[ \exp(\lambda X) ]}, \\
	& f'(0) = \mu - \bbE X = 0,\\ 
	& f''(\lambda) = v^2 - \frac{ \bbE[ X^2\exp(\lambda X) ]\bbE[ \exp(\lambda X) ] - ( \bbE[ X\exp(\lambda X) ] )^2 }{ ( \bbE[ \exp(\lambda X) ] )^2 } 
	\ge v^2 - \frac{ \bbE[ X^2\exp(\lambda X) ] }{ \bbE[ \exp(\lambda X) ] } := g (\lambda),\\
    & g'(\lambda) = \frac{ - \bbE[ X^3\exp(\lambda X) ]\bbE[ \exp(\lambda X) ] + \bbE[ X^2\exp(\lambda X) ]\bbE[ X\exp(\lambda X) ] }{  \bbE[ \exp(\lambda X) ] )^2 }.
\end{align*}

Let $\mu$ be the probability measure of $X$ on $\bbR$. Since $0\le X\le M$ a.s., $\mu([0,M])=1$ and
\begin{align*}
    &  - \bbE[ X^3\exp(\lambda X) ]\bbE[ \exp(\lambda X) ] + \bbE[ X^2\exp(\lambda X) ]\bbE[ X\exp(\lambda X) ] \\
    &
    %  \int_{[0,M]} \int_{[0,M]} \int_\bbR \int_\bbR
    = - \int_{[0,M]} \int_{[0,M]} x^3 e^{\lambda x + \lambda y} ~ \rmd \mu(x)~ \rmd \mu(y) + \int_{[0,M]} \int_{[0,M]}  x^2 y e^{\lambda x + \lambda y} ~ \rmd \mu(x)~ \rmd \mu(y) \\
    &
    = \frac{1}{2} 
        \int_{[0,M]} \int_{[0,M]} e^{\lambda x + \lambda y} (-x^3 -y^3 + x^2y + xy^2)~ \rmd \mu(x)~ \rmd \mu(y) \\
    &
    = \frac{1}{2} 
        \int_{[0,M]} \int_{[0,M]} e^{\lambda x + \lambda y} [ -x^2(x-y) -y^2(y-x) ] ~ \rmd \mu(x)~ \rmd \mu(y) \\
    &
    = \frac{1}{2} 
        \int_{[0,M]} \int_{[0,M]} e^{\lambda x + \lambda y} (x-y) (y^2-x^2)  ~ \rmd \mu(x)~ \rmd \mu(y) \\
    & 
    = - \frac{1}{2} 
       \int_{[0,M]} \int_{[0,M]} e^{\lambda x + \lambda y} (x-y)^2 (x+y)~ \rmd \mu(x)~ \rmd \mu(y)
    \le 0.
\end{align*}
Since $\bbE[ \exp(\lambda X) ] )^2>0$, $g'(\lambda) \le 0$.
Hence $g(\lambda)$ is monotonically decreasing on $\bbR$. Further, for any $\lambda\le 0$, since $v^2\ge \bbE X^2$
\begin{align*}
	& 
	%e^\lambda \le 1 \Rightarrow 
	 f''(\lambda) \ge g(\lambda) \ge g(0) = v^2 - \bbE X^2 \ge 0
	~\Rightarrow~
	f'(\lambda) \text{ is monotonically increasing} \\
	~\Rightarrow~
	& f'(\lambda) \le f'(0) = 0 
	\hspace{8.7em}
	~\Rightarrow~
	f(\lambda) \text{ is monotonically decreasing }
	~\Rightarrow~
	f(\lambda) \ge f(0) = 0.
\end{align*}

\end{proof}

\textbf{Given $v^2\ge \bbE X^2$, it is more challenging to show $X$ is $v$-SG than to show $X$ is $v$-LSG.}
By revisiting the proof above, we see that given $X$ is $v$-LSG, to show $X$ is $v $-SG suffices to show $f(\lambda)\ge 0$ for any $\lambda\ge 0$.
Since it is hard to directly tell whether the inequality above holds for any $\lambda\ge 0$,
it is natural to look at how $f(\lambda)$ grows in $\bbR$.

Fix any $M_0>0$. For any $\lambda\in [0,M_0]$, again, applying the Bounded Convergence Theorem, we have
\begin{align*}
	& \frac{\rmd }{\rmd \lambda} \bbE[ \exp(\lambda X )  ]
    = \bbE\left[ \frac{\rmd }{\rmd \lambda} \exp(\lambda X ) \right],\
    \frac{\rmd }{\rmd \lambda} \bbE[  X  \exp(\lambda  X )  ]
    = \bbE\left[ \frac{\rmd }{\rmd \lambda}  X  \exp(\lambda  X  )  \right], \\
    \Rightarrow~ 
    & 
    f'(\lambda) = v^2 \lambda + \mu - \frac{ \bbE[ X \exp(\lambda X  ) ] }{\bbE[ \exp(\lambda X  ) ]},\\
    & f''(\lambda) = v^2 - \frac{ \bbE[ X^2\exp(\lambda X) ]\bbE[ \exp(\lambda X) ] - ( \bbE[ X\exp(\lambda X) ] )^2 }{ ( \bbE[ \exp(\lambda X) ] )^2 } .
\end{align*}
Since $f(0) = 0$ and $f'$ is differentiable on $\bbR$, it requires at least $r>0$ such that $f'(\lambda)\ge 0$ for any $\lambda \in [0,r]$. Furthermore, since $f' (0)=0$, one may consider showing that $f''(\lambda) \ge 0$ on $[0,r]$.

In the proof above, we define a function $g$ to show that $f''(\lambda) \ge g(\lambda) \ge 0$ on $(-\infty,0]$.
However, since $g (\lambda)\le 0$ on $[0,+\infty)$, this cannot help to show $f'' (\lambda) \ge 0$ on $[0,r]$.

The discussion above indicates that showing $X$ is $v$-SG is more challenging than showing $X$ is $v$-LSG.

\subsection{Proof of Lemma~\ref{lemma:obs_sec_moment} }
\label{pf:lemma_obs_sec_moment}
\lemmaObsSecmo*
\begin{proof}
 	Recall $w'$ is the minimum click probability. We abbreviate $X_{k;t}$ as $X$.
    Firstly, since $X\in [1,k]$, $\bbE X^2 \le k^2$. 
    Next, note that $\bbE X^2 $ increases when the click probabilities decrease or $k$ increases. Set $Y$ as a random variable drawn from a geometric distribution with parameter $w'$, then $\bbE X^2 \le \bbE Y^2$. Since $\bbE Y^2 = {2}/{w'^2}- {1}/{w’}$, $\bbE X^2 \le {2}/{w'^2}$.
\end{proof}

\subsection{Proof of Lemma~\ref{lemma:conc_one_sub_gauss_apply_obs} }
\label{pf:lemma_conc_one_sub_gauss_apply_obs}

\lemmaOneSubConcObs*

\begin{proof}

We abbreviate $X_{k;t}$ as $X_t$~(the number of observations of surviving items at step $t$ when pulling $k$ surviving items), and set $D_t= X_t  - \bbE X_t$, $\calF_t$ denote the decisions and observations up to step $t$. Besides, let $S_t$ be the set to pull at step $t$, then $S_t$ is determined by $\calF_{t-1}$, and $X_t$ depends on $S_t$. Since
\begin{align*}
	\bbE [ D_t | \calF_{t-1} ] 
	= \bbE[~ \bbE[  X_t  - \bbE X_t | S_t ] ~|~ \calF_{t-1} ~]
	= 0,
\end{align*}
$D_1, \ldots, D_t$ i s a martingale difference sequence adapted to $\calF = (\calF_t)_t$.
Besides, according to Theorem~\ref{thm:sec_moment_to_left_sub_gauss}, %and Lemma~\ref{lemma:obs_sec_moment}, 
for any $t$, any $\lambda\le 0$, $v_k^2 \ge \bbE X^2$ yields $\bbE[e^{\lambda D_t} | \calF_{t-1}] \le e^{\lambda^2v^2/2}$.
Then for any $\omega>0$,
\begin{align*}
    \Pr \left[~ \sum^n_{t=1} (X_t - \bbE X_t) \le -\omega ~\right]
    =
	\Pr \left[~ \sum^n_{t=1} D_t \le -\omega ~\right]
	\le 
	\exp\left(  -\frac{\omega^2}{ 2nv_k^2   } \right).
\end{align*}

Let the probability bound in the right hand side be $\delta$, then
\begin{align*}
	&\delta = \exp\left(
		-\frac{\omega^2}{ 2 n v_k^2 }
	\right) \ \
	\Rightarrow\ 
	\omega = \sqrt{  2 n v_k^2 \log \left( \frac{1}{\delta} \right) } .
\end{align*}
Note that $\bbE X_t \ge \mu_k$ for any $t$, 
\begin{align*}
	\delta & \ge
	\Pr\left( \sum_{t=1}^n (X_t - \bbE X_t) \le -\omega \right) 
	= 
	\Pr\left( \sum_{t=1}^n X_t \le \sum_{t=1}^n \bbE X_t  -\omega \right) \\
	& \ge 
	\Pr\left( \sum_{t=1}^n X_t \le n \mu_k -\omega \right)
	\ge 
	\Pr\left( \sum_{t=1}^n X_t \le n \mu_k - \sqrt{ 2n v_k^2 \log \left( \frac{1}{\delta} \right) }~ \right).  
\end{align*}

Next, for any $T>0$, consider
\begin{align*}
	n \mu_k - \sqrt{ 2n v_k^2 \log \left( \frac{1}{\delta} \right) } \le T.
\end{align*}
Set 
\begin{align*}
	a_0 = \frac{1}{\mu_k} \sqrt{ 2v_k^2 \log \left( \frac{1}{\delta} \right) },\
	b_0 = \frac{ T  }{\mu_k},\
	x = \sqrt{n},
\end{align*}
then $x\ge 0$ and $x^2 - a_0x - b_0 \le 0$. Note that $(p+q)^2\le 2(p^2+q^2)$,
\begin{align*}
	& x \le \frac{a_0 + \sqrt{a_0^2+4b_0} }{2} \\
	\Rightarrow \ &
	n \le \left( \frac{a_0 + \sqrt{a_0^2+4b_0} }{2} \right)^2 \le a_0^2 + 2b_0
	= \frac{2T}{\mu_k} 
	+  \frac{2v_k^2 }{\mu_k^2}\log \left( \frac{1}{\delta} \right) .
\end{align*}

\end{proof}

\subsection{Proof of Lemma~\ref{le:conc_event} }
\label{pf:le_conc_event}
\leConcEvent*

\begin{remark}[Sub-Gaussian property]\label{remark:sub_gauss}
    Define $\eta_{t}(i) = W_t(i) - w(i) $, then 
    $\eta_{t}(i)$ is $1/2$-sub-Guassian. 
\end{remark}

\begin{proof}[Proof of Remark~\ref{remark:sub_gauss}]
	Any non-negative random variable bounded in $[a,b] $ a.s. is sub-Gaussian with parameter $(b-a)/2$. Meanwhile, $W_t(i)\in [0,1]$ yields that $\eta_{t}(i) \in [ w(i)-1, w(i) ]$. $[w(i) - (w(i)-1 )]/2=1/2$.
\end{proof}

\begin{proof}
	For all $i \in [L]$, $\calE(i,\delta) = \{ \forall t\ge 1, | \hat{w}_t(i) - w(i) | \le C_t(i,\delta) \}$. Recall that
	\begin{align*}
		& C_t(i, \delta) = \tilde{C}\left( T_t(i), \rho  \right), \quad
		\tilde{C}(\tau, \rho) = 4\sqrt{ \frac{ \log \left( \log _{2}(2 \tau ) / \rho \right)}{\tau+1} },\quad
		\rho(\delta) = \sqrt{\delta/(12L)},
	\end{align*}
	then according to Theorem~\ref{thm:conc_log},
	\begin{align*}
		& \bbP \left( {\calE}(i,\delta) \right) \ge 1- 6\rho(\delta)^2  = 1- \frac{\delta}{2L} \
		\Rightarrow \ 
		\bbP \left( \bar{\calE}(i,\delta) \right) \le \frac{\delta}{2L}, \\
		\Rightarrow \ & 
		\bbP \left( \bigcap_{i=1}^L \calE(i,\delta) \right) = 1 - \bbP \left( {\bigcup_{i=1}^L \bar{\calE}(i,\delta) }  \right)
		\ge 1 - \sum_{i=1}^L \bbP \left( \bar{\calE}(i,\delta) \right) \ge 1 - L\cdot \frac{\delta}{2L} = 1 - \delta/2.
	\end{align*}
\end{proof}

\subsection{Proof of Lemma~\ref{le:suff_obs_to_identif} }
\label{pf:le_suff_obs_to_identif}
\lemmaSuffObsIdentif*

\textbf{Preliminary}. Since we use the UCB of the empirical top-$(k_t+1)$ item to accept $\epsilon$-optimal items, hopefully it should be close to the true click probability of item $(k_t+1)$; likewise, the LCB of the empirical top-$(k_t)$ item should be close to the true click probability of item $(k_t)$. This is stated in Lemma~\ref{le:true_trap_by_emp}.

\begin{lemma}[{\citet[Lemma 3]{jun2016top}}]
\label{le:true_trap_by_emp}
	Denote by $\hati$ the index of the item with empirical mean is $i$-th largest: i.e., $\hat{w}(\hat{1} ) \geq \ldots \geq \hat{w}(\hat{L}) .$ Assume that the empirical means of the arms are controlled by $\epsilon:$ i.e., $\forall i,\left| \hat{w}(i) - w(i) \right| < \epsilon$. Then,
$$
\forall i, w(i)-\epsilon \leq \hat{w}(\hat{i}) \leq w(i)+\epsilon.
$$
\end{lemma}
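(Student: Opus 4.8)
The plan is to prove this purely combinatorially, as a statement about order statistics being stable under a uniform additive perturbation of size $\epsilon$. Throughout I use the indexing convention fixed in Section~\ref{sec:prob_setup}, namely that the items are labeled so that $w(1) \ge w(2) \ge \ldots \ge w(L)$; thus $w(i)$ is literally the $i$-th largest true mean, and the target inequality asserts that the $i$-th largest empirical mean $\hat{w}(\hat{i})$ lies within $\epsilon$ of the $i$-th largest true mean. The hypothesis $|\hat{w}(j) - w(j)| < \epsilon$ for all $j$ rewrites as $w(j) - \epsilon < \hat{w}(j) < w(j) + \epsilon$, and I will feed these two-sided bounds into a pigeonhole count of how many empirical means can lie above a given threshold.

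For the upper bound $\hat{w}(\hat{i}) \le w(i) + \epsilon$, I would examine the $L - i + 1$ items $j \in \{i, i+1, \ldots, L\}$. Each satisfies $w(j) \le w(i)$ by the ordering, so $\hat{w}(j) < w(j) + \epsilon \le w(i) + \epsilon$. Hence at most $i - 1$ items, all lying in $\{1, \ldots, i-1\}$, can have empirical mean $\ge w(i) + \epsilon$. Since $\hat{w}(\hat{1}), \ldots, \hat{w}(\hat{i})$ are by definition the $i$ largest empirical means, if $\hat{w}(\hat{i})$ were $\ge w(i) + \epsilon$ then all $i$ of them would be $\ge w(i) + \epsilon$, contradicting the count; therefore $\hat{w}(\hat{i}) < w(i) + \epsilon$.

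For the lower bound $\hat{w}(\hat{i}) \ge w(i) - \epsilon$, I would symmetrically examine the $i$ items $j \in \{1, 2, \ldots, i\}$. Each satisfies $w(j) \ge w(i)$, so $\hat{w}(j) > w(j) - \epsilon \ge w(i) - \epsilon$; thus at least $i$ items have empirical mean strictly above $w(i) - \epsilon$. Because $\hat{w}(\hat{i})$ is the $i$-th largest empirical mean and at least $i$ values clear the threshold $w(i) - \epsilon$, the $i$-th largest is itself above that threshold, giving $\hat{w}(\hat{i}) > w(i) - \epsilon$. Combining the two one-sided bounds yields the claim.

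There is no genuine analytic obstacle here; the argument is a two-sided pigeonhole count, and the only thing to watch is bookkeeping — keeping the decreasing ordering of $w$ straight, correctly translating ``$i$-th largest empirical mean'' into ``at least/at most $i$ items exceed a threshold,'' and noting that the perturbation bounds are strict, which is why the displayed conclusion, stated with non-strict $\le$, holds comfortably. If one preferred a more structural route, the same conclusion follows from the min-max characterization of order statistics, $\hat{w}(\hat{i}) = \max_{|S| = i} \min_{j \in S} \hat{w}(j)$: taking $S = \{1, \ldots, i\}$ gives the lower bound, while observing that every size-$i$ set $S$ must contain some index $j \ge i$ (forcing $\min_{j \in S} \hat{w}(j) < w(i) + \epsilon$) gives the upper bound. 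I would favor the direct counting route above as the cleanest to write out.
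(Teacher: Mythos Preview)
Your argument is correct. The paper itself does not supply a proof of this lemma; it is quoted verbatim as \citet[Lemma~3]{jun2016top} and used as a black box inside the proof of Lemma~\ref{le:suff_obs_to_identif}. So there is nothing to compare against, and your pigeonhole/counting argument (or the equivalent min--max formulation you sketch at the end) is exactly the standard way to establish such an order-statistic stability statement. The only cosmetic point is that your strict inequalities $\hat{w}(\hat{i}) < w(i)+\epsilon$ and $\hat{w}(\hat{i}) > w(i)-\epsilon$ are actually slightly stronger than the non-strict ones displayed in the lemma, which you already note.
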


After that, Lemma~\ref{le:suff_obs_to_identif} shows that the agent will correctly classify the items after a sufficient number of observations, and also show what is the sufficient number of observations for each item.

\begin{proof}
	Recall
\begin{align*}
	k_t =  K - |A_t|, \ 
    \rho(\delta') =  {\delta'}/{(12L) }, \ 
    \barT_{i,\delta'} =  
    1 + \left\lfloor \frac{216}{  \bar{\Delta}_i^2 }  \log \left( \frac{2}{ \rho(\delta') } \log_2 \left( \frac{ 648 }{ \rho(\delta')  \bar{\Delta}_i^2 } \right) \right)  \right\rfloor
    .
\end{align*} 	
	And We use $\rho$ and $\rho'$ as abbreviations for $\rho(\delta)$ and $\rho(\delta')$ respectively. 
	
    It suffices to show for the case where $A_t$ and $R_t$ are empty since otherwise the problem is equivalent to removing rejected or accepted arms from consideration and starting a new problem with $L_{\text{new}}= L - |A_t| - |R_t| $ and $K_{\text{new}} = K - |A_t|$ while maintaining the observations so far. Note that when $A_t$ is empty, $k_t = K$.
    
    First of all, $T_t(i) \ge T'_t$ implies that 
    \begin{align}
        C_t(i, \delta) = \tilde{C}\left( T_t(i), \rho  \right) \le \tilde{C}\left( T'_t, \rho \right),\
        C_t(i, \delta') = \tilde{C}\left( T_t(i), \rho'  \right) \le \tilde{C}\left( T'_t, \rho' \right).
        \label{eq:suff_obs_identif_conf_radius_gen_bd}
    \end{align}
Then since  $\bigcap_{i=1}^L \calE(i,\delta')$ holds, $| \hatw_t(i) - w(i) | \le \tilde{C}\left( T_t(i), \rho' \right) \le \tilde{C}\left( T'_t, \rho' \right)$ for all $i \in D_t$. Combining this with Lemma~\ref{le:true_trap_by_emp}, we have
    \begin{align}
    	w(i) + \tilde{C}\left( T'_t, \rho' \right) \le \hatw_t(i) \le w(i) + \tilde{C}\left( T'_t, \rho' \right)\  \forall i \in D_t .
    	\label{eq:sub_obs_identif_apply_trap_by_emp}
    \end{align}

    \textbf{We first prove that for any $i\le K'$,} 
    \begin{align*}
    	T'(t) \ge \barT_{i,\delta'} 
          ~\Rightarrow ~  L_{t}(i, \delta)> U_{t}(j^*, \delta) - \epsilon \text{ where }  j^* =   \argmax _{j \in D_{t}}^{ (k_{t}+1 )} \hatw_{t} 
          ~\Rightarrow ~  i \in A_t. 
    \end{align*}   
For clarity, we write $j^*=\widehat{K+1}$, which is the item with $(K+1)$-th largest empirical mean at the $t$-th step. 
    We assume the contrary: $L_{t}(i, \delta) \le U_{t}( \widehat{K+1}, \delta ) - \epsilon $. 
    We can apply~\eqref{eq:suff_obs_identif_conf_radius_gen_bd} and~\eqref{eq:sub_obs_identif_apply_trap_by_emp} to obtain
    \begin{align*}
        & L_t(i, \delta) \ge \hat{w}_t(i) -  \tilde{C}\left( T'_t, \rho \right)
            \ge w(i) - \tilde{C}\left( T'_t, \rho \right) - \tilde{C}\left( T'_t, \rho' \right), \\
         & 
         U_t( \widehat{K+1} ) - \epsilon 
         \le \hatw_t(\widehat{K+1} ) + \tilde{C}\left( T'_t, \rho  \right) - \epsilon
         \le w ( K+1 ) + \tilde{C}\left( T'_t, \rho  \right) + \tilde{C}\left( T'_t, \rho'  \right) - \epsilon.
    \end{align*}            
Next,
    \begin{align*}
        & w(i) - \tilde{C}\left( T'_t, \rho \right) - \tilde{C}\left( T'_t, \rho' \right) 
          \le 
          w ( K+1 ) + \tilde{C}\left( T'_t, \rho  \right) + \tilde{C}\left( T'_t, \rho'  \right) - \epsilon,  \\
        \Rightarrow~ & 
          0 \overset{(a)}{<} w(i) - w(K+1) + \epsilon \le 2 \tilde{C}\left( T'_t, \rho \right) + 2 \tilde{C}\left( T'_t, \rho' \right) \le 4 \tilde{C}\left( T'_t, \rho' \right)
            =    16 \sqrt{ \frac{ \log \left( \log _{2}(2 T'_t ) / \rho' \right)}{ T'_t  } }, \\
        \Rightarrow~ &
            T'_t \le \frac{216}{ ([  w(i) - w(K+1) + \epsilon ]^2 }  \log \left( \log _{2}(2 T'_t ) / \rho' \right).
    \end{align*}
    Part (a) of the second line above follows from: (i) if $i \le K$, $w(i) - w(K+1) + \epsilon = \Delta_i + \epsilon >0$; (ii) else, $K < i \le K'$, since $w(i) \ge w(K) -\epsilon$, we have $w(i) - w(K+1) + \epsilon = w(i) - w(K) + w(K) - w(K+1) + \epsilon = \Delta_K - \Delta_i + \epsilon \ge \Delta_K >0$.
    Then invert to the third line using
    \begin{align*}
        \tau \le c \log \left(  \frac{ \log_2 2\tau }{ \rho' } \right)
        \Rightarrow 
        \tau \le c \log \left( \frac{2}{ \rho'} \log_2 \left( \frac{3c}{\rho'} \right) \right)
    \end{align*}
    with $c = 216 [  w(i) - w(K+1) + \epsilon ]^{-2} $ to have
    \begin{align*}
        T'_t 
        & \le \frac{216}{ [  w(i) - w(K+1) + \epsilon ]^2 }  \log \left( \frac{2}{ \rho'} \log_2 \left( \frac{ 648 }{ \rho'  [  w(i) - w(K+1) + \epsilon ]^2 } \right) \right) \\
        &< 1 + \left\lfloor \frac{216}{ [  w(i) - w(K+1) + \epsilon ]^2 }  \log \left( \frac{2}{ \rho'} \log_2 \left( \frac{ 648 }{ \rho' [  w(i) - w(K+1) + \epsilon ]^2 } \right) \right)  \right\rfloor  
            = \barT_{i,\delta'}.
    \end{align*}
    
    Therefore, $\barT'_t  \ge \bar{T}_{i,\delta'}$ implies that $L_{t}(i, \delta)> U_{t}(j^*, \delta) - \epsilon$ where $ j^* =   \argmax _{j \in D_{t}}^{ (k_{t}+1 )} \hatw_{t}  $. Then $i \in A_t$ is accepted.

	\textbf{Subsequently, we prove that for any $i>K'$,}
	\begin{align*}
		T'(t) \ge \barT_{i,\delta'} 
          ~\Rightarrow ~ U_{t}(i, \delta)<   L_{t}(j', \delta) - \epsilon \text{ where }   j' = \argmax _{j \in D_{t}}^{ (k_{t} )} \hatw_{t} 
          ~\Rightarrow ~  i \in R_t.
	\end{align*} 
Again for brevity, we write $\hatK=j'$, the item with $K$-th largest empirical mean at the $t$-th step.   
    We assume the contrary: $U_{t}(i, \delta )  \ge L_{t}( \hatK, \delta ) -\epsilon$. 
    Again applying~\eqref{eq:suff_obs_identif_conf_radius_gen_bd} and~\eqref{eq:sub_obs_identif_apply_trap_by_emp}, we have
    \begin{align*}
        & U_t(i,\delta) \le \hat{w}_t(i) + \tilde{C}\left( T'_t, \rho \right)
            \le w(i) + \tilde{C}\left( T'_t, \rho \right) + \tilde{C}\left( T'_t, \rho' \right), \\
         & 
         L_t( \hatK, \delta ) - \epsilon \ge \hatw_t(\hatK) - \tilde{C}\left( T'_t, \rho  \right) - \epsilon
         \ge 
         w( K) - \tilde{C}\left( T'_t, \rho  \right) - \tilde{C}\left( T'_t, \rho'  \right) - \epsilon.
    \end{align*}            
Next,
    \begin{align*}
        & w(i) + \tilde{C}\left( T'_t, \rho \right) + \tilde{C}\left( T'_t, \rho' \right) 
          \ge w( K) - \tilde{C}\left( T'_t, \rho  \right) - \tilde{C}\left( T'_t, \rho'  \right) - \epsilon,  \\
        \Rightarrow~ & 
            0< w(K) - w(i) - \epsilon \le 2 \tilde{C}\left( T'_t, \rho \right) + 2 \tilde{C}\left( T'_t, \rho' \right) \le 4 \tilde{C}\left( T'_t, \rho' \right)
            =    16 \sqrt{ \frac{ \log \left( \log _{2}(2 T'_t ) / \rho' \right)}{ T'_t  } }.
    \end{align*}
    Similar to the first case, with
    \begin{align*}
    	\barT_{i,\delta'}
    	=
    	1 + \left\lfloor \frac{216}{ ( w(K)-w(i) - \epsilon )^2 }  \log \left( \frac{2}{ \rho'} \log_2 \left( \frac{ 648 }{ \rho'  ( w(K)-w(i) - \epsilon )^2 } \right) \right)  \right\rfloor  
    \end{align*}
    we obtain that $\barT'_t  \ge \bar{T}_{i,\delta'} $ implies $U_{t}(i, \delta)<   L_{t}(j', \delta) - \epsilon$ where $j' = \argmax _{j \in D_{t}}^{ (k_{t} )} \hatw_{t} $. Then $i \in R_t$ is rejected.
    
\end{proof}

\subsection{Proof of Lemma~\ref{le:stop_eliminate_no} }
\label{pf:le_stop_eliminate_no}

\leStopElimintateNo*

\begin{proof}
	Assume $\bigcap_{i=1}^L \calE(i,\delta)$ holds. 
	
\underline{\bf Case (i)}: $K'=K$. 
	In the worst case, the algorithm does not terminate before identifying the $(L-1)$-th one. In this case, after identifying the $(L-1)$-th one with sufficient observations, either the accept set or the reject set is full, i.e., $|A_t|=K$ or $|R_t|=L-K$, the the agent can just place the remaining item in the unfilled set. 
	
	Hence, the algorithm terminates after sufficiently observing and identifying at most $L-1 = L- \max\{ K'+K, 1\}$ items.
	
\underline{\bf Case (ii)}: $K'>K$. 
	The algorithm classify all items correctly according to Lemma~\ref{le:suff_obs_to_identif}. since the number of $\epsilon$-optimal items is 
 $K'=\max \{ i: w(i)\ge w(K)-\epsilon \}\ge K$
, the number of suboptimal items is $L-K'\le L-K$. Hence, $|R_t|\le L-K'$. Besides, $|A_t|\le K$ according to the design of the algorithm. Therefore,
\begin{align*}
	|A_t| + |R_t| \le L-K'+K.
\end{align*}
In other words, the algorithm terminates after sufficiently observing and identifying at most $L-K'+K= L- \max\{ K'+K, 1\}$ items.
\end{proof}

\subsection{Proof of Lemma~\ref{lemma:lb_KLdecomp}}
\label{pf:lemma_lb_KL_decomp}
\lemmaLbKLdecomp*

To manifest the difference between instance $\ell$ and other instances, with $w^{(0)}(i) = w(i)$ for all $i\in[L]$ we write
\begin{itemize}
	\setlength{\itemindent}{0pt}
	\item $\{w^{(0)}(1), w^{(0)}(2), \ldots, w^{(0)}(L)\}$ under instance $0$;
	\item $\{w^{(0)}(1), w^{(0)}(2), \ldots, w^{(0)}(\ell-1),w^{(\ell)}(\ell), w^{(0)}(\ell+1), \ldots, w^{(0)}(L)\}$ under instance $\ell$.
\end{itemize}

We combine Lemma~\ref{lemma:KLdecompOR} and a result from \citet{kaufmann2016complexity} to relate the time complexity and KL divergence together.

\begin{lemma}[{\citep[Lemma 19]{kaufmann2016complexity}}]
\label{lemma:lb_kl_event_kau}
	Let $\calT$ be any almost surely finite stopping time with respect to $\calF_t$. For every event $\calE \in \calF_\calT$, instance $1\le \ell \le L$,
	\begin{align*}
		\mathrm{KL}\left(\{ S^{\pi, 0}_t, \bmO^{\pi, 0}_t \}^\calT_{t=1}, \{ S^{\pi, \ell}_t, \bmO^{\pi, \ell}_t \}^\calT_{t=1}\right)
		\ge 
		\mathrm{KL}( \bbP_0(\calE), \bbP_\ell(\calE) ).
	\end{align*}

\end{lemma}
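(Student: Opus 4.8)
The plan is to establish the displayed chain in two independent pieces: first the equality $\mathrm{KL}(\{S_t^{\pi,0},\bmO_t^{\pi,0}\}_{t=1}^\calT,\{S_t^{\pi,\ell},\bmO_t^{\pi,\ell}\}_{t=1}^\calT)=\bbE[T_\calT(\ell)]\cdot\mathrm{KL}(w(\ell),w^{(\ell)}(\ell))$, and then the inequality relating this quantity to $\sup_{\calE\in\calF_\calT}\mathrm{KL}(\bbP_0(\calE),\bbP_\ell(\calE))$. These two parts have quite different flavors: the first is an exact information-theoretic computation specific to the cascading feedback, while the second is a generic contraction bound.

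For the equality, I would start from the chain-rule decomposition of Lemma~\ref{lemma:KLdecompOR}, which writes the trajectory KL as $\sum_{t=1}^\calT\sum_{s_t}\Pr[S_t^{\pi,0}=s_t]\,\mathrm{KL}(P_{\bmO_t^{\pi,0}\mid S_t^{\pi,0}}(\cdot\mid s_t)\,\|\,P_{\bmO_t^{\pi,\ell}\mid S_t^{\pi,\ell}}(\cdot\mid s_t))$. The key observation is that instances $0$ and $\ell$ share all click probabilities except that of item $\ell$. Hence, for any pulled list $s_t$ that does not contain $\ell$, the two conditional laws of the feedback coincide and the corresponding summand vanishes; only lists containing $\ell$ contribute.

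The heart of the argument is to evaluate a single summand when $\ell$ sits at position $k$ of $s_t$, i.e.\ $i_k^t=\ell$. Here I would exploit the cascading structure: the feedback $\bmO_t$ (equivalently $\tilde k_t$) reveals $W_t(\ell)$ if and only if none of the preceding items $i_1^t,\ldots,i_{k-1}^t$ is clicked, an event of probability $\prod_{j=1}^{k-1}\barw(i_j^t)$ that is identical under both instances since those weights are unchanged. Writing the feedback law as the sequential revelation of click indicators until the first click and applying the chain rule for KL, every factor except the one governing $W_t(\ell)$ is the same across the two instances and contributes zero; the single differing factor contributes $\prod_{j=1}^{k-1}\barw(i_j^t)\cdot\mathrm{KL}(w(\ell),w^{(\ell)}(\ell))$, which is exactly $\Pr[\ell\text{ observed at }t\mid S_t=s_t]\cdot\mathrm{KL}(w(\ell),w^{(\ell)}(\ell))$. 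Summing over $s_t$ produces the unconditional probability of observing $\ell$ at step $t$, and summing over $t\le\calT$ and taking expectations identifies $\sum_t\Pr[\ell\text{ observed at }t]$ with $\bbE[T_\calT(\ell)]$, the expected number of observations of item $\ell$; this yields the equality.

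For the inequality, I would simply invoke Lemma~\ref{lemma:lb_kl_event_kau} (the contraction/data-processing bound of \citet{kaufmann2016complexity}), which states that for every $\calE\in\calF_\calT$ the trajectory KL dominates the binary KL $\mathrm{KL}(\bbP_0(\calE),\bbP_\ell(\calE))$; taking the supremum over all such events gives the claimed lower bound. The main obstacle is the middle step: correctly isolating the single feedback factor that differs between the two instances and certifying that its weight is exactly the observation probability $\prod_{j=1}^{k-1}\barw(i_j^t)$. This requires a careful factorization of the cascading feedback law and a clean application of the KL chain rule so that all unaffected coordinates cancel; once this is in place, the summation over positions, lists, and time steps is routine bookkeeping that collapses to $\bbE[T_\calT(\ell)]$.
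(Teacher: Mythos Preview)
You have mismatched the target. The statement you were given is Lemma~\ref{lemma:lb_kl_event_kau}, which asserts \emph{only} the single inequality
\[
\mathrm{KL}\big(\{S^{\pi,0}_t,\bmO^{\pi,0}_t\}_{t=1}^{\calT},\{S^{\pi,\ell}_t,\bmO^{\pi,\ell}_t\}_{t=1}^{\calT}\big)\ \ge\ \mathrm{KL}(\bbP_0(\calE),\bbP_\ell(\calE))
\]
for a fixed event $\calE\in\calF_\calT$. There is no ``displayed chain'' here, and in particular no equality with $\bbE[T_\calT(\ell)]\cdot\mathrm{KL}(w(\ell),w^{(\ell)}(\ell))$ to establish. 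What you have outlined is a proof of Lemma~\ref{lemma:lb_KLdecomp}, not of Lemma~\ref{lemma:lb_kl_event_kau}. Worse, for the inequality portion you write ``I would simply invoke Lemma~\ref{lemma:lb_kl_event_kau}'': that is the very statement you were asked to prove, so the argument is circular as a proof of the stated lemma.

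In the paper, Lemma~\ref{lemma:lb_kl_event_kau} is not proved at all; it is quoted verbatim from \citet[Lemma~19]{kaufmann2016complexity} as an external input (it is the standard data-processing/contraction bound: the indicator $\mathbf{1}_\calE$ is $\calF_\calT$-measurable, hence a deterministic function of the stopped trajectory, and KL contracts under such maps). If your goal really is this lemma, the relevant argument is that generic contraction step, not any cascading-specific computation. If instead you intended Lemma~\ref{lemma:lb_KLdecomp}, then your plan is correct and coincides with the paper's proof: use Lemma~\ref{lemma:KLdecompOR} to decompose, observe that only the coordinate for item $\ell$ differs so each conditional KL equals the observation probability of $\ell$ times $\mathrm{KL}(w(\ell),w^{(\ell)}(\ell))$, sum to get $\bbE[T_\calT(\ell)]\cdot\mathrm{KL}(w(\ell),w^{(\ell)}(\ell))$, and then apply Lemma~\ref{lemma:lb_kl_event_kau} for the final inequality.
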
 

{\bf Notations.} Before presenting the proof, we remind the reader of the definition of the KL divergence~\citep{cover2012elements}. For two discrete random variables $X$ and $Y$ with common support~${\cal A}$, 
$$\mathrm{KL}(X,Y) = \sum_{ x\in{\cal A}} P_X(x) \log \frac{ P_X (x) }{ P_Y(x) }$$
 denotes the KL divergence between probability mass functions of $X$ and $Y$. Next, we also use $\mathrm{KL}(P_X \| P_Y)$ to also signify this KL divergence. Lastly, when $a$ and $b$ are two real numbers between $0$ and $1$, $\mathrm{KL}(a,b)= \mathrm{KL} \left( \text{Bern}(a) \|\text{Bern}(b) \right)$, i.e., $\mathrm{KL}(a,b)$ denotes the KL divergence between $\text{Bern}(a)$ and $\text{Bern}(b)$.

\begin{proof}

For any certain $s_t$, we can observe that the KL divergence $\mathrm{KL}\left( P_{ \bmO_t^{\pi,0 }  \mid S_t^{\pi,0}}(\cdot \mid s_t)  \,\Big\|\,  P_{ \bmO_t^{\pi, \ell }  \mid S_t^{\pi,\ell}}(\cdot \mid s_t)  \right)$ should grow
with the KL divergence of observed items. Further, for each observed item $i$, there is a KL divergence of $\mathrm{KL}\left( w^{(0)}(i), w^{(\ell)}(i) \right)$. 
Whenever $S^{\pi, 0}_t = s_t$, we have
	\begin{align*}
		& \mathrm{KL}\left( P_{ \bmO_t^{\pi,0 }  \mid S_t^{\pi,0}}(\cdot \mid s_t)  \,\Big\|\,  P_{ \bmO_t^{\pi, \ell }  \mid S_t^{\pi,\ell}}(\cdot \mid s_t)  \right)
%			 = \bbE_0 \left[ \Upsilon_t^\ell(S_t^{\pi,0}) \right] \cdot \mathrm{KL}\left(\frac{1-\epsilon}{K}, \frac{1 + \epsilon}{K}\right). \\
	 = \sum_{i\in s_t} \bbE_0 \left[ \mathsf{1}\{ i \text{ is observed at time } t \} \right] \cdot \mathrm{KL}\left( w^{(0)}(i), w^{(\ell)}(i) \right).
	\end{align*}
Then according to Lemma~\ref{lemma:KLdecompOR},
	\begin{align*}
	& \mathrm{KL}\left(\{ S^{\pi, 0}_t, \bmO^{\pi, 0}_t \}^\calT_{t=1}, \{ S^{\pi, \ell}_t, \bmO^{\pi, \ell}_t \}^T_{t=1}\right) \\
	& = \sum_{t=1}^\calT \sum_{s_t\in [L]^{(K)}}\Pr[S^{\pi, 0}_t = s_t]\cdot \mathrm{KL}\left( P_{ \bmO_t^{\pi,0 }  \mid S_t^{\pi,0}}(\cdot \mid s_t)  \,\Big\|\,  P_{ \bmO_t^{\pi, \ell }  \mid S_t^{\pi,\ell}}(\cdot \mid s_t)  \right) \\ 
	& = \sum_{t=1}^\calT \sum_{s_t\in [L]^{(K)}}\Pr[S^{\pi, 0}_t = s_t]\cdot 
		\sum_{i\in s_t} \bbE_0 \left[ \mathsf{1}\{ i \text{ is observed at time } t \} \right] \cdot \mathrm{KL}\left( w^{(0)}(i), w^{(\ell)}(i) \right) \\ 
	& = \sum_{i=1}^L \sum_{t=1}^\calT \sum_{s_t\in [L]^{(K)}}
		\bbE_0 \left[ \mathsf{1}\{ S^{\pi, 0}_t = s_t, \ i \in s_t, \ i \text{ is observed at time } t \} \right] \cdot \mathrm{KL}\left( w^{(0)}(i), w^{(\ell)}(i) \right) \\ 
	& = \sum_{i=1}^L \bbE [T_\calT(i)] \cdot \mathrm{KL}\left( w^{(0)}(i), w^{(\ell)}(i) \right) \\
	& = \bbE [T_\calT(\ell)] \cdot \mathrm{KL}\left( w^{(0)}(\ell), w^{(\ell)}(\ell) \right).
\end{align*}

\end{proof}

\subsection{ Proof of Theorem~\ref{thm:fix_conf_up_bd} }
\label{appdix:conf_up_bd_last_step}

\textbf{Preliminary.} 
Recall that $\bar{\Delta}_{\sigma(1)} \ge \bar{\Delta}_{\sigma(2)} \ge \ldots \ge \bar{\Delta}_{\sigma(L)}$, and $T_t(i)$ counts the number of observations of item $i$ up to the $t$-th step. The worst case is that the algorithm eliminates $\sigma(1), \sigma(2), \ldots$ in order, and the algorithm eliminates at most $1$ item at one time step. Besides, the design of Algorithm~\ref{alg:bai_cas_conf} implies that  
\begin{align}
	T_t(j)-1 \le T_t(i) \le T_t(j)+1, ~\forall i \neq j\in D_t.
	\label{eq:design_of_alg_T_t}
\end{align}

In the following discussion,we assume $\bigcap_{i=1}^L \calE(i,\delta)$ holds and $K'<2K-1$~(discussion on $K'\ge 2K-1$ is in Appendix~\ref{appdix:influen_epsilon}). 
Note that Lemma~\ref{le:conc_event} implies that
$\bbP \left( \bigcap_{i=1}^L \calE(i,\delta) \right) \ge 1- \delta/2$.
Besides, we write $\mu(k,w )$ as $\mu_k$, $v(k,w )$ as $v_k$, $\bar{T}_{i,\delta}$ as $\bar{T}_{i}$, $\rho(\delta)$ as $\rho$ for simplicity.

\textbf{Bound the number of observations per phrase.}
Observe that there are less than $K$ surviving items remaining in the survival set $D_t$ at some steps before the algorithm terminates, we separate the steps into several phrases:

(i) {During the first phrase}, the agent eliminates $L-K+1$ items within $t_1$ steps. According to Lemma~\ref{le:suff_obs_to_identif} and Line~\eqref{eq:design_of_alg_T_t}, 
\begin{align*}
	& T_{t_1}(\sigma(j)) \le \bar{T}_{\sigma(j)}, \hspace{6em}
	\forall 1 \le j \le L-K+1; \\
	& T_{t_1}(\sigma(i)) \le \bar{T}_{\sigma(L-K+1)} + 1,\hspace{2em}
	\forall L-K+1< i \le L.
\end{align*}
Then the total number of observations of surviving items in $D_t$ within this phrase can be bounded as follows:
\begin{align*}
	\sum_{i=1}^{L-K+1} \bar{T}_{\sigma(i)} + \sum_{i=L-K+2}^L T_{t_1}(\sigma(i))
	\le \sum_{i=1}^{L-K} \bar{T}_{\sigma(i)} + K\bar{T}_{\sigma(L-K+1)} +(K-1) := \tilde{T}_1.
\end{align*}

(ii) {During the $k$-th phrase} for any $2\le k \le K-\max\{K'-K,1\} $, the agent eliminates the $L-K+k$-th item within $t_k$ steps.
Again apply Lemma~\ref{le:suff_obs_to_identif} and Line~\eqref{eq:design_of_alg_T_t}: 
\begin{align*}
	& T_{\sum_{j=1}^{k } t_j }(\sigma(L-K+k))
	  \le \bar{T}_{\sigma(L-K+k )} ; \\
	& T_{\sum_{j=1}^{k } t_j }(\sigma(i))
	  \le \bar{T}_{\sigma(L-K+k )} +1, \hspace{3.1em}
	  \forall L-K+k+1 \le i\le L;\\
	& T_{\sum_{j=1}^{k-1} t_j }(\sigma(i))
	  \ge \bar{T}_{\sigma(L-K+k-1)} -1,\hspace{2em}
	  \forall L-K+k \le i\le L.
\end{align*}
Then the total number of observations of surviving items in $D_t$ within this phrase can also be bounded:
\begin{align*}
	& 
	\bar{T}_{\sigma(L-K+k)} + \sum_{i=L-K+k+1}^L T_{\sum_{i=j}^k t_j}(\sigma(i)) - \sum_{i=L-K+k}^L T_{\sum_{j=1}^{k-1} t_j }(\sigma(i))
	\\
	& 
	\le (K-k+1)[\bar{T}_{\sigma(L-K+k)} - \bar{T}_{\sigma(L-K+k-1)}] + 2(K-k)+1 := \tilde{T}_k.
\end{align*}

\textbf{Bound the number of time steps per phrase.}
Recall that the $k$-th~($1\le k \le K-\max\{K'-K,1\} $) phrase consist of $t_k$ time steps. Let $Z_k$ be the total number of observations within the $t_k$ steps. 
Lemma~\ref{lemma:conc_one_sub_gauss_apply_obs}
indicates that 
\begin{align*}
	\bbP( Z_k \ge t_k\mu_{K+1-k}\omega_{K+1-k} ) 
	\ge 1 - \delta_k
	~~~~~\text{ with }~~~
	\omega_{K+1-k} = -  \sqrt{-2t_k v_{K+1-k}^2 \log \delta_k}.
\end{align*}
Then according to Lemma~\ref{lemma:conc_one_sub_gauss_apply_obs}, for any $k~(1\le k \le K-\max\{K'-K,1\} )$, with probability at least $1-\delta_k$,
\begin{align*}
	t_k \le  \frac{2\tilde{T}_k }{\mu_{K-k+1} } 
	- \frac{2v_{K+1-k}^2 }{\mu_{K-k+1}^2} \cdot \log \delta_k .
\end{align*}

\textbf{Bound the time complexity.}
Altogether, we would have $\sum_{k=1}^{ K-\max\{K'-K,1\} } t_k$ as the time complexity. 
Besides, we bound the total error incurred by partial observation by $\delta/2$.
In other words,
\begin{align}
	\calT
	\le 
	\sum_{k=1}^{ K-\max\{K'-K,1\} }  
	\left(
	         - \frac{2 v_{K-k+1}^2 }{\mu_{K-k+1}^2} \cdot \log \delta_k 
	    + 2\sum_{k=1}^{2K-K' } \frac{\tilde{T}_k}{\mu_{K-k+1} }
	\right)
~\text{ where } \sum_{k=1}^{ K-\max\{K'-K,1\} } \delta_k \le \delta/2. \nonumber
%\label{eq:opt_prob_conf_up_bd}
\end{align}
Depending on the value of $K'-K$, there are two cases:
\begin{enumerate}
	\item[] {\bf Case 1:} $K'-K\ge 1 $ , i.e., $K-\max\{ K'-K,1 \} =2K-K'$;
	\item[] {\bf Case 2:} $K'=K$ , i.e., $ K-\max\{ K'-K,1 \} =K-1$.
\end{enumerate}
For brevity, we only go through the remaining analysis for the first case, the analysis for the second one is similar.

Since the second term in the bound on $\calT$ merely depends on the problem, we turn to analyze the first term. Since the first term holds for any values of $\delta_k$'s
such that $\sum_{k=1}^{2K-K'} \delta_k \le \delta/2$, we  minimize the first term with the method of Lagrange multiplier. Set $c_k = \frac{2 v_{K-k+1}^2 }{\mu_{K-k+1}^2}$, the problem turns to
\begin{align*}
	(\blacktriangle) = \max_{\delta_k: 1\le k\le 2K-K' } \sum_{k=1}^{2K-K' } c_k \log \delta_k  
	\quad \text{ s.t. }\quad  
	\sum_{k=1}^{2K-K'} \delta_k \le \delta/2. 
\end{align*}
Let
\begin{align*}
	L\left( \{\delta_k\}_{k=1}^{2K-K' }, \{c_k\}_{k=1}^{2K-K'-1}, \lambda \right)
	 = \sum_{k=1}^{2K-K' } c_k \log \delta_k + \lambda \left(  \sum_{k=1}^{2K-K' } \delta_k - \delta/2 \right),
\end{align*}
then for all $1\le k \le 2K-K' $,
\begin{align*}
	& \frac{ \partial L }{\partial \delta_k} = \frac{c_k}{\delta_k} + \lambda=0 \
	\Rightarrow \ 
	\delta_k^* = \frac{ c_k \delta }{ 2\sum_{j=1}^{2K-K' } c_j }. 
\end{align*}
$(\blacktriangle)$ attains its maximum when $\delta_k = \delta_k^*$ for all $1\le k \le 2K-K' $. Hence
\begin{align*}
	\sum_{k=1}^{2K-K'} t_k 
	& \le 
	  - \sum_{k=1}^{2K-K' }  
         c_k \log \delta_k^*
	    + 2\sum_{k=1}^{2K-K' } \frac{\tilde{T}_k}{\mu_{K-k+1} } \\
	& = 
	  \sum_{k=1}^{2K-K' }  
         c_k \log \left( \frac{ 2\sum_{j=1}^{2K-K' } c_j }{ c_k \delta } \right)
	    + 2\sum_{k=1}^{2K-K' } \frac{\tilde{T}_k}{\mu_{K-k+1} } \\
	& = 
	\underbrace{
	  \sum_{k=1}^{2K-K' }  
         c_k \log \left( \frac{2 }{\delta} \sum_{j=1}^{2K-K' } c_j \right)
	 }_{(\spadesuit)}
	 +
	 \underbrace{
	    \sum_{k=1}^{2K-K' }  
         c_k \log \left( \frac{ 1 }{ c_k } \right)
	 }_{(\heartsuit)}
	+
	\underbrace{ 
	2\sum_{k=1}^{2K-K' } \frac{\tilde{T}_k}{\mu_{K-k+1} }
	}_{(\clubsuit)} .
\end{align*}

Now we bound $(\spadesuit)$, $(\heartsuit)$, $(\clubsuit)$ individually.

\textbf{Bounding $(\spadesuit)$ }: note that $\mu_{K+1-k} \ge 2$ for all $1\le k \le 2K-K' $, $K'\ge K$ and 
$c_k = \frac{2 v_{K-k+1}^2 }{\mu_{K-k+1}^2} $,
\begin{align*}
	(\spadesuit) 
	& = \sum_{k=1}^{2K-K' }  
         c_k \log \left( \frac{2 }{\delta} \sum_{j=1}^{2K-K' } c_j \right) 
    = \sum_{k=1}^{2K-K' }  
         \frac{2 v_{K-k+1}^2 }{\mu_{K-k+1}^2} 
         \log \left( \frac{2 }{\delta} \sum_{j=1}^{2K-K' } 
          \frac{2 v_{K-j+1}^2 }{\mu_{K-j+1}^2} 
          \right) 
          .
\end{align*}

\textbf{Bounding $(\heartsuit)$ }: Let $g(x) = \frac{\log x }{x}$ for $x>0$, then $g'(x) = \frac{ 1 - \log x }{x^2}$. Since $g'(x) > 0 $ when $x \in (0,e)$, $g'(e)=0$, $g'(x)<0$ when $x\in (e,+\infty)$, $g(x)$ is increasing on $(0,e)$, is decreasing on $(e,+\infty)$ and attains its global maximum $g(e) = \frac{1}{e}$ at $x=e$. Hence,
\begin{align*}
	(\heartsuit) = \sum_{k=1}^{2K-K' } g \left( \frac{1}{c_k} \right)
		\le \frac{2K-K' }{e} \le K .
\end{align*}

\textbf{Bounding $(\clubsuit)$ }: 
We first rewrite this term according to the definition of $\tilde{T}_k$'s: 
\begin{align*}
	& \tilde{T}_1 = \sum_{i=1}^{L-K} \bar{T}_{\sigma(i)} + K\bar{T}_{\sigma(L-K+1)} +(K-1), \\
	& \tilde{T}_k = (K-k+1)[\bar{T}_{\sigma(L-K+k)} - \bar{T}_{\sigma(L-K+k-1)}] + 2(K-k)+1 \ \quad  \forall 2\le k \le 2K-K'-1, \\
	\Rightarrow~ &
	 (\clubsuit)
    \le
         \frac{2}{\mu_K} \left[ \sum_{i=1}^{L-K} \bar{T}_{\sigma(i)} + K\bar{T}_{\sigma(L-K+1)} + K \right]
        +
          \sum_{k=2}^{2K-K' }\frac{ 2(K-k+1) }{\mu_{K-k+1}} \left[ \bar{T}_{\sigma(L-K+k)} - \bar{T}_{\sigma(L-K+k-1)} +3 \right] \\
    \Rightarrow~ &
	 (\clubsuit)/4
    \le
         \frac{1}{\mu_K} \left[ \sum_{i=1}^{L-K} \bar{T}_{\sigma(i)} + K\bar{T}_{\sigma(L-K+1)} \right]
        +
          \sum_{k=2}^{2K-K' }\frac{ K-k+1 }{\mu_{K-k+1}} \left[ \bar{T}_{\sigma(L-K+k)} - \bar{T}_{\sigma(L-K+k-1)}  \right]  .
\end{align*}

Next, since $ \mu_k \ge \min\{ k/2, 1/(2w^*) \}$ as shown in Lemma~\ref{lemma:bound_obs}, when $K-k+1 \le 1/w^*$,
\begin{align*}
    k \ge K+1- \lfloor 1/w^* \rfloor, \ 
    \mu_{K-k+1} \ge \frac{K-k+1}{2},\ 
    \frac{K-k+1}{ \mu_{K-k+1}} \le 2.
\end{align*}  
Hence with $K_0 = \max \{ \min \{ 2K-K', K - \lfloor 1/w^* \rfloor \} ,1 \}$,
\begin{align*}
    & \frac{K}{\mu_K} \bar{T}_{\sigma(L-K+1)} + 
    \sum_{k=2}^{K_1 }\frac{ K-k+1 }{\mu_{K-k+1}} \left[ \bar{T}_{\sigma(L-K+k)} - \bar{T}_{\sigma(L-K+k-1)} \right]  
    \\
    & 
    = 
         \frac{ K\bar{T}_{\sigma(L-K+1)} }{  \mu_K }
        +
          \sum_{k=2}^{K_0}\frac{  (K-k+1)\bar{T}_{\sigma(L-K+k)}  }{ \mu_{K-k+1} } 
          -
          \sum_{k=1}^{ K_0-1 }\frac{ (K-k) \bar{T}_{\sigma(L-K+k)}  }{ \mu_{K-k}}     \nonumber \\
    & = 
         \frac{ (K -K_0+1)\bar{T}_{\sigma(L-K+K_0 ) } }{ \mu_{K-K_0+1} }
        +
          \sum_{k=1}^{ K_0-1 }\bar{T}_{\sigma(L-K+k)}
          \left( \frac{  K-k+1  }{ \mu_{K-k+1} } - \frac{  K-k  }{ \mu_{K-k}}  \right),
\end{align*}
and
\begin{align*}
     \sum_{k=K_0+1}^{ 2K-K'  }\frac{ K-k+1 }{\mu_{K-k+1}} \left[ \bar{T}_{\sigma(L-K+k)} - \bar{T}_{\sigma(L-K+k-1)} \right] 
     & \le 
     2 \sum_{k=K_0+1}^{ 2K-K'  }  \bar{T}_{\sigma(L-K+k)} - \bar{T}_{\sigma(L-K+k-1)} \\
     &= 2 \bar{T}_{\sigma(L+K-K')} - 2 \bar{T}_{ \sigma(L-K+K_0) }.
\end{align*}
Further,
\begin{align*}
   & (\clubsuit)/4
    \le 
      \frac{1}{\mu_K} \sum_{i=1}^{L-K} \bar{T}_{\sigma(i)} 
        +
           \sum_{k=1}^{ K_0-1 }\bar{T}_{\sigma(L-K+k)}
          \left( \frac{  K-k+1  }{ \mu_{K-k+1} } - \frac{  K-k  }{ \mu_{K-k}}  \right)
         + 
         \left( \frac{  K -K_0+1 }{ \mu_{K-K_0+1} } -2 \right) \bar{T}_{\sigma(L-K+K_0 ) }
         +
         2 \bar{T}_{\sigma(L+K-K')}.
\end{align*}

\textbf{Summation of $(\spadesuit)$, $(\heartsuit)$, $(\clubsuit)$.}
Recall $\rho = {\delta}/{(12L) }$ and
\begin{align*}
    &    \barT_i = 1 + \left\lfloor \frac{216}{  \bar{\Delta}_i^2 }  \log \left( \frac{2}{ \rho} \log_2 \left( \frac{ 648 }{ \rho \bar{\Delta}_i^2 } \right) \right)  \right\rfloor.
\end{align*}
The time complexity is upper bounded by
\begin{align*}
	& c_1
		\sum_{k=1}^{2K-K' }  
         \frac{ v_{K-k+1}^2 }{\mu_{K-k+1}^2} 
         \log \left( \frac{ 1 }{\delta} \sum_{j=1}^{2K-K' } 
          \frac{ v_{K-j+1}^2 }{\mu_{K-j+1}^2} 
          \right) 
          +  
		c_2 \frac{1}{\mu_K} \sum_{i=1}^{L-K} \bar{T}_{\sigma(i)} 
        +
        c_3   \sum_{k=2}^{2K-K' }\frac{ K-k+1 }{\mu_{K-k+1}} \left[ \bar{T}_{\sigma(L-K+k)} - \bar{T}_{\sigma(L-K+k-1)} \right]
\end{align*}
where
\begin{align*}
	& \frac{1}{\mu_K} \sum_{i=1}^{L-K} \bar{T}_{\sigma(i)}  = 
	O\left(
	  \frac{ 1 }{ \mu_K }\sum_{i=1}^{L-K} \bar{\Delta}_{\sigma(i)}^{-2} \log \left[ \frac{ L}{\delta } \log \left( \frac{ L }{ \delta \bar{\Delta}_{ \sigma(i) }^{2} } \right)\right] 
	 \right), \\
%%%%%
	& \sum_{k=2}^{2K-K' }\frac{ K-k+1 }{\mu_{K-k+1}} \left[ \bar{T}_{\sigma(L-K+k)} - \bar{T}_{\sigma(L-K+k-1)} \right] \\
	& =
	\sum_{k=1}^{ K_0-1 }\bar{T}_{\sigma(L-K+k)}
          \left( \frac{  K-k+1  }{ \mu_{K-k+1} } - \frac{  K-k  }{ \mu_{K-k}}  \right)
        + 
         \left( \frac{  K -K_0+1 }{ \mu_{K-K_0+1} } -2 \right) \bar{T}_{\sigma(L-K+K_0 ) }
         +
         2 \bar{T}_{\sigma(L+K-K')}  \\
    & =
    c_4 \sum_{k=1}^{K_0-1 }
          \bar{\Delta}_{\sigma(L-K+k)}^{-2} \log \left[ \frac{ L}{\delta } \log \left( \frac{ L }{ \delta \bar{\Delta}_{ \sigma(L-K+k) }^{2} } \right)\right]
          \cdot
           \left( \frac{  K+1-k  }{\mu_{K+1-k}} - \frac{  K-k  }{\mu_{K-k}}  \right)
    \\ & \hspace{1em}
    + c_5 \left(  \frac{ K-K_0+1  }{ \mu_{K-K_0+1} } - 2\right)
          \bar{\Delta}_{\sigma( L-K+K_0 )}^{-2}
         \log \left[ \frac{ L}{\delta } \log \left( \frac{ L }{ \delta \bar{\Delta}_{ \sigma(L-K+K_0) }^{2} } \right)\right] 
    + c_6 \bar{\Delta}_{\sigma( L+K-K' )}^{-2} 
         \log \left[ \frac{ L}{\delta } \log \left( \frac{ L }{ \delta \bar{\Delta}_{ \sigma(L+K-K') }^{2} } \right)\right] .
\end{align*}

\subsection{Proof of Theorem~\ref{thm:lb_fix_conf_dep}}
\label{appdix:conf_low_bd_last_step}

Recall that $\bmO^\pi_t$ is a vector in $\{0, 1, \star\}^K$, where $0, 1, \star$ represents observing no click, observing a click and no observation respectively. For example, when $S^\pi_t = (2, 1, 5, 4)$ and $\bmO^\pi_t = (0, 0, 1, \star)$, items $2,1,5,4$ are listed in the displayed order; items $2,1$ are not clicked, item 5 is clicked, and the response to item 4 is not observed. By the definition of the cascading model, the outcome  $\bmO^\pi_t = (0, 0, 1, \star)$ is in general a (possibly emtpy) string of $0$s, followed by a $1$~(if the realized reward is $1$), and then followed by a possibly empty string of $\star$s.
Clearly, $S^{\pi, \ell}_t$, $\bmO^{\pi, \ell}_t$  are random variables with distribution depending on $w^{(\ell)}$ (hence these random variables distribute differently for different $\ell)$, albeit a possibly complicated dependence on $w^{(\ell)}$.

With the analysis in Section~\ref{sec:main_pf_lb}, according to Lemma~\ref{lemma:lb_KLdecomp}
and the definition of the instance $\ell$, one obtains for $i\in \{ 1,\ldots,K\}$ or $j \in \{K+1, \ldots, L\}$ respectively, 
\begin{align*}
	& \bbE[ T_\calT(i) ] \ge \frac{ \mathrm{KL}(1-\delta,\delta) }{ \mathrm{KL}\left( w (i), w (K+1)  \right) + \alpha },~~
	 \bbE[ T_\calT(j) ] \ge \frac{ \mathrm{KL}(1-\delta,\delta) }{ \mathrm{KL}\left( w (j), w (K)  \right) + \alpha }.
\end{align*}
Let $Y_t$ denote the number of observations of items at time step $t$. 
By revisiting the definition of $X_{k;t}$ in Section~\ref{sec:main_result_up}, we see that $X_{K;t}$ actually counts the observation of all pulled items at time step $t$. Hence, $Y_t \le X_{K;t}$.
Setting $\alpha\rightarrow 0$ and summing over the items yields a bound on the expected number of total observations 
$\bbE \left[\sum_{t=1}^\calT Y_t \right]  = \sum_{i=1}^L \bbE[ T_\calT(i) ] $. 
Meanwhile, an upper bound of $\bbE X_{K;t}$ as stated in Lemma~\ref{lemma:bound_obs} and tower property indicates  that
\begin{align*}
	& \bbE \left[\sum_{t=1}^\calT Y_t \right] 
	= \bbE\left[~ \bbE\left[ \sum_{t=1}^T Y_t ~|~ \calT=T \right]  ~\right]   
	\le \bbE\left[~ \bbE\left[ \sum_{t=1}^T \tilde{\mu}_K ~\Big|~ \calT=T \right]  ~\right] 
	= \bbE \left[~ \tilde{\mu}_K \cdot \calT ~\right]
	= \tilde{\mu}_K \cdot \bbE[\calT].
\end{align*}
Note that $\rmKL(x,1-x) \ge \log(1/2.4x)$ for any $x\in[0,1]$, we complete the proof of Theorem~\ref{thm:lb_fix_conf_dep}.

\subsection{Proof of Proposition~\ref{prop:fix_conf_up_bd_many_eps_opt} }
\label{pf:fix_conf_up_bd_many_eps_opt}

\propFixConfUpBdManyEpsOpt*

\begin{proof}
		Consider $K' \ge 2K-1$, i.e, $K'-K \ge K-1$.
According to Lemma~\ref{le:stop_eliminate_no}, there are at least $K'-K+1 \ge K$ items in the survival set $D_t$ before the algorithm terminates, so the algorithm pulls $K$ items from the surviving set $D_t$ at each time step. 
And for simplicity, we again write $\mu(k,w )$ as $\mu_k$, $v(k,w )$ as $v_k$, $\bar{T}_{i,\delta}$ as $\bar{T}_{i}$, $\rho(\delta)$ as $\rho$. 

Recall Lemma~\ref{lemma:conc_one_sub_gauss_apply_obs}, we
 set $\delta_0 = \delta/2$, $k=K$, $n=t_0'$, 
 $\rho'= - \sqrt{-2t_0' v_K^2 \log (\delta/2)}$.
  Then the total number of observations during $t_0'$ steps should be larger than $t_0' \mu_K + \rho'$ with probability at least $1-\delta/2$. And since the number of observations can be upper bounded, we consider
\begin{align*}
	t_0' \mu_K + \rho'	
	& \le \sum_{i=1}^{L-K'+K} \bar{T}_{\sigma(i)} + \sum_{i=L-K'+K+1}^L T_{t'_0}(\sigma(i)) 
	  \le \sum_{i=1}^{L-K'+K} \bar{T}_{\sigma(i)} + (K'-K) \left( \bar{T}_{\sigma(L-K'+K)} +1 \right) \\
	& = \sum_{i=1}^{L-K'+K-1} \bar{T}_{\sigma(i)} + (K'-K+1) \bar{T}_{\sigma(L-K'+K)} + (K'-K)
	  := \tilde{T}_0.
\end{align*} 

Lastly, with Lemma~\ref{lemma:conc_one_sub_gauss_apply_obs}, \ref{le:conc_event} and \ref{le:suff_obs_to_identif}, we obtain that with probability at least $1-\delta$, Algorithm~\ref{alg:bai_cas_conf} stops after at most
\begin{align*}
	\frac{2\tilde{T}_0}{\mu_K} 
	+  \frac{2v_K^2 }{\mu_K^2}\log \left(\frac{2}{\delta}\right)
	= N_2' + N_1'
\end{align*}
steps.

\end{proof}

\newpage
\section{Additional numerical results}
\label{appdix_exp}

\subsection{Order of pulled items}
\label{appdix_exp:order_pull_item}

    \begin{figure}[!h]
        \centering
        \begin{tabular}{c}
        \makebox[\textwidth]{$L=64, ~K=16, ~\delta=0.1, ~\epsilon=0$} \\
        \includegraphics[width = .9\textwidth]{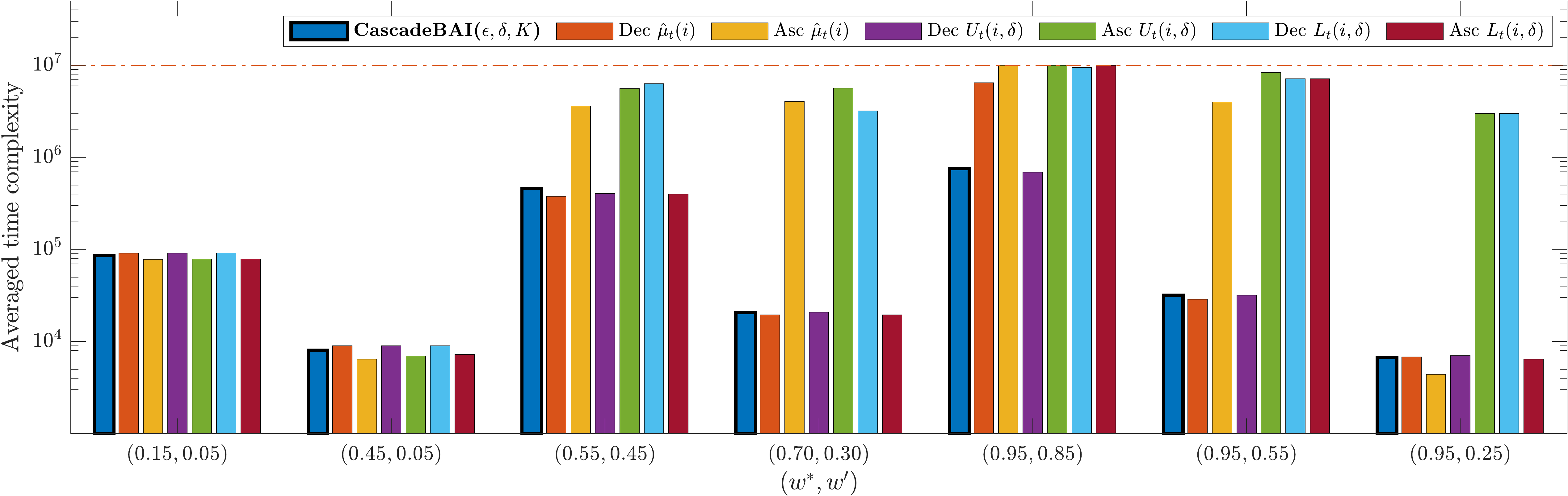} \\ 
        \makebox[\textwidth]{$L=64, ~K=8, ~\delta=0.1, ~\epsilon=0$} \\
        \includegraphics[width = .9\textwidth]{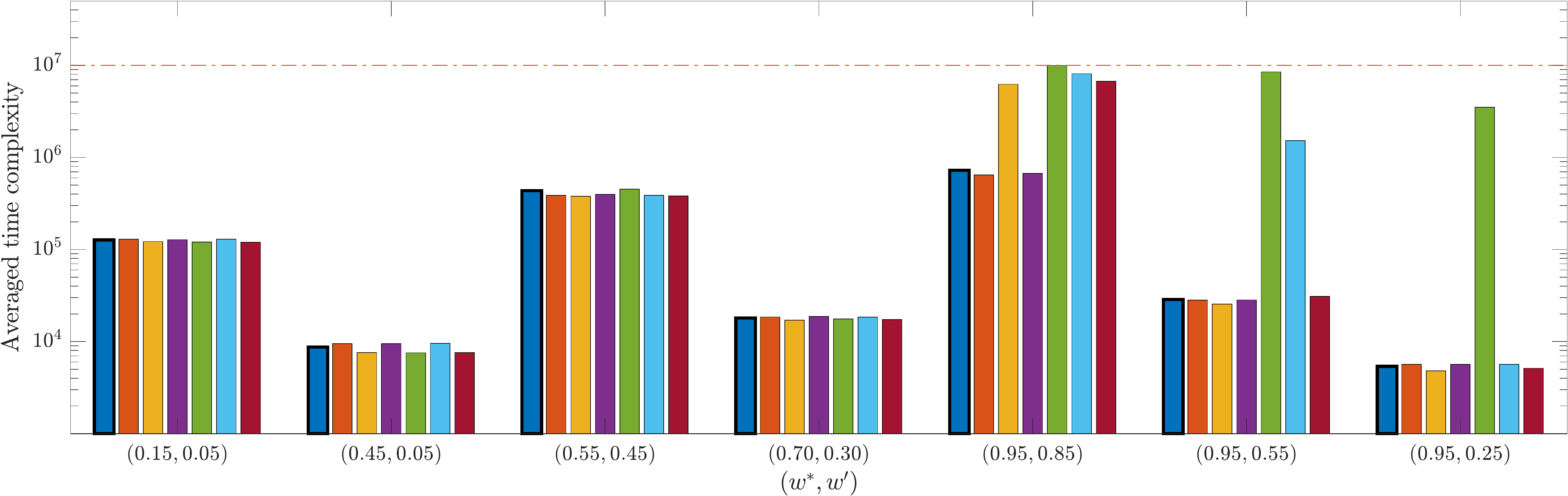} \\ 
        \makebox[\textwidth]{$L=128, ~K=16, ~\delta=0.1, ~\epsilon=0$} \\
        \includegraphics[width = .9\textwidth]{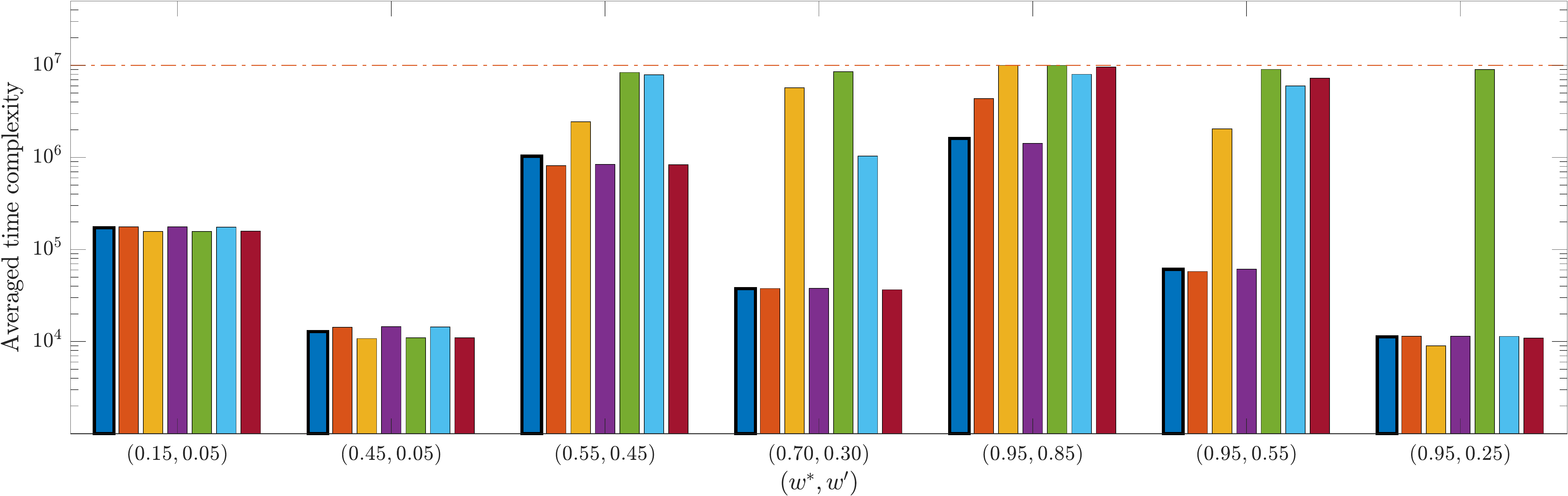} \\ 
        \makebox[\textwidth]{$L=128, ~K=8, ~\delta=0.1, ~\epsilon=0$} \\
        \includegraphics[width = .9\textwidth]{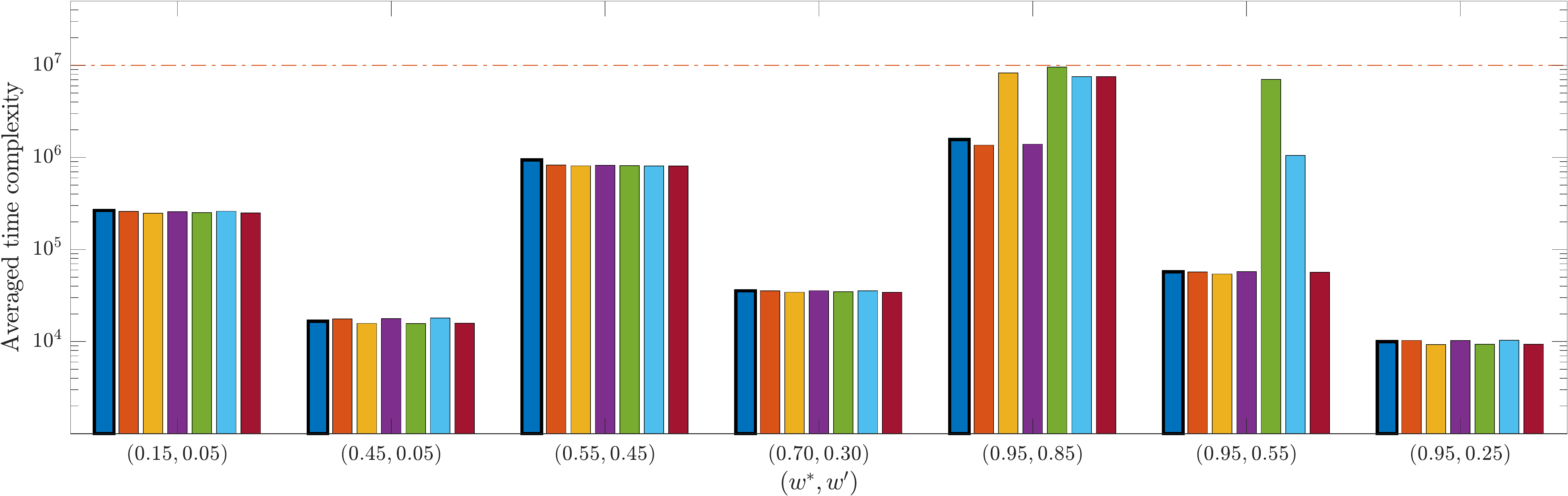} \\ 
       \end{tabular} \vspace{-1em}
    \end{figure}

    \begin{figure}[!h]
        \centering
        \begin{tabular}{c}        
        \makebox[\textwidth]{$L=64, ~K=16, ~\delta=0.1, ~\epsilon=0.05$} \\
        \includegraphics[width = .9\textwidth]{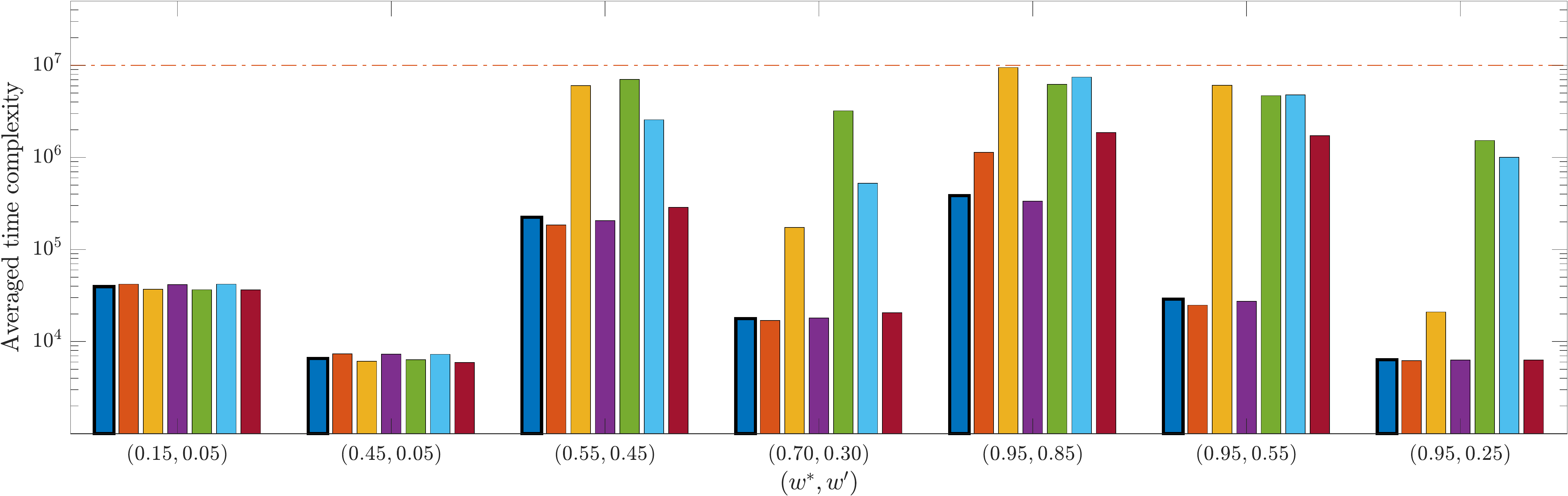} \\ 
        \makebox[\textwidth]{$L=64, ~K=8, ~\delta=0.1, ~\epsilon=0.05$} \\
        \includegraphics[width = .9\textwidth]{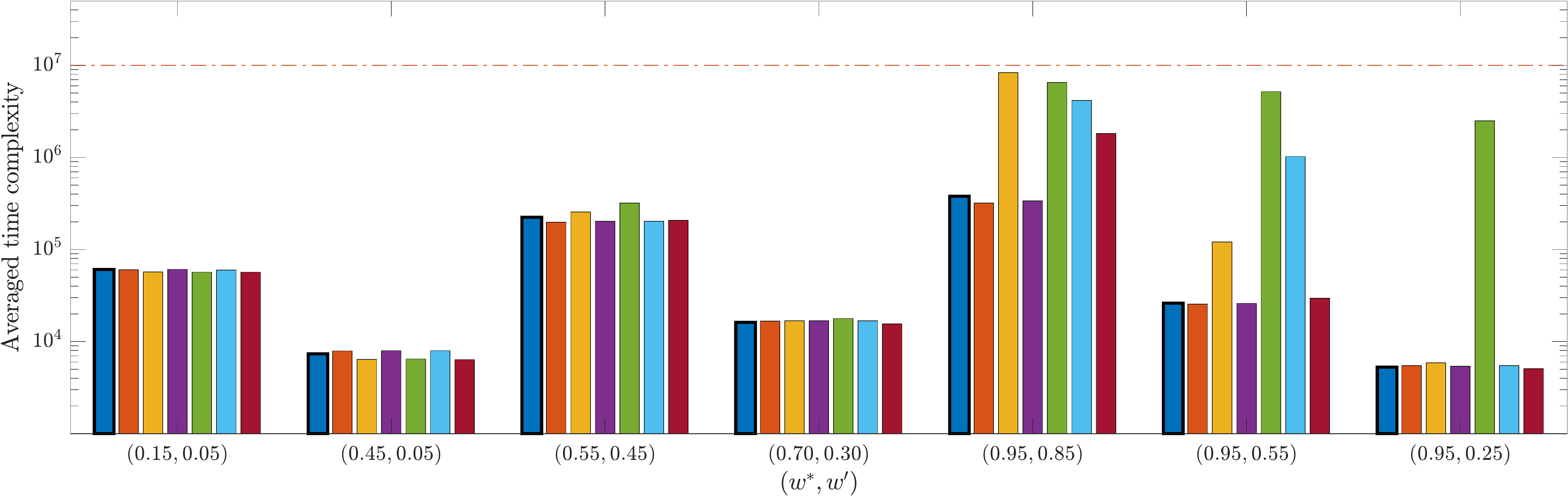} \\ 
        \makebox[\textwidth]{$L=128, ~K=16, ~\delta=0.1, ~\epsilon=0.05$} \\
        \includegraphics[width = .9\textwidth]{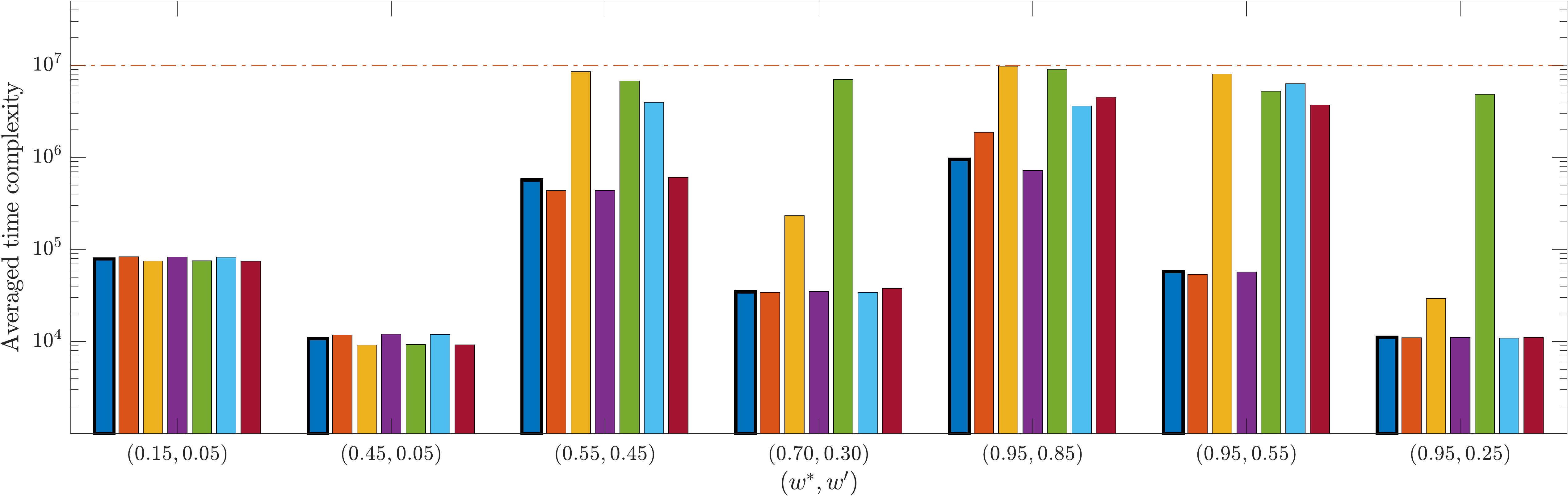} \\ 
        \makebox[\textwidth]{$L=128, ~K=8, ~\delta=0.1, ~\epsilon=0.05$} \\
        \includegraphics[width = .9\textwidth]{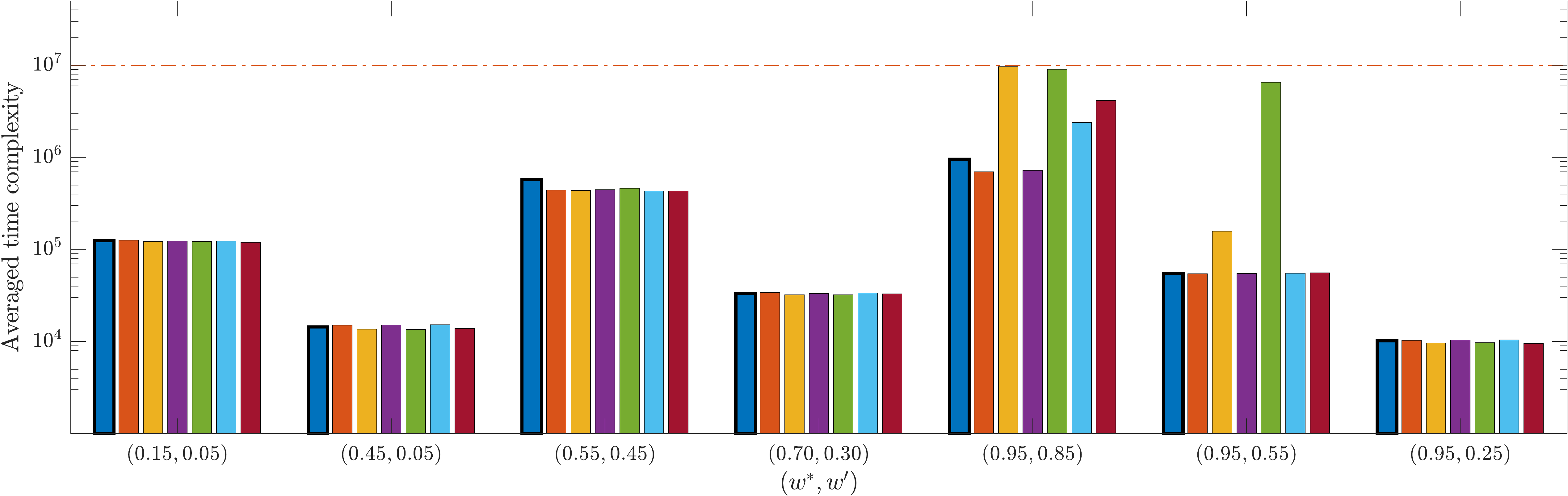} \\ 
       \end{tabular} 
    \end{figure}         

    \begin{figure}[!h]
        \centering
        \begin{tabular}{c}          
        
        \makebox[\textwidth]{$L=64, ~K=16, ~\delta=0.05, ~\epsilon=0$} \\
        \includegraphics[width = .9\textwidth]{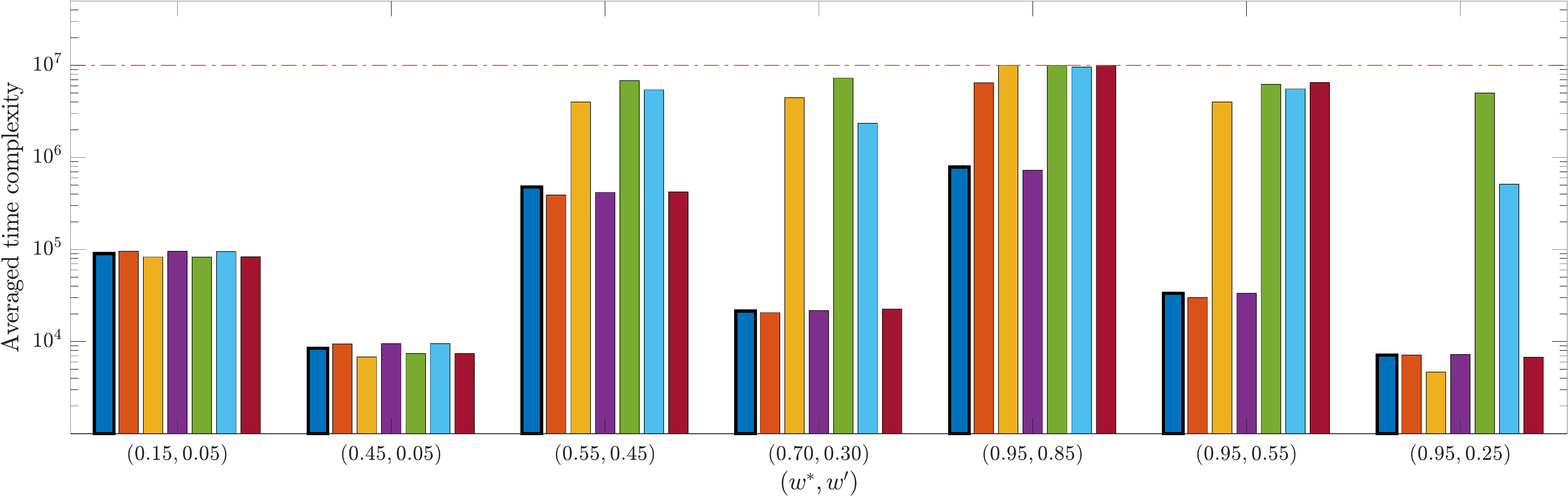} \\ 
        \makebox[\textwidth]{$L=64, ~K=8, ~\delta=0.05, ~\epsilon=0$} \\
        \includegraphics[width = .9\textwidth]{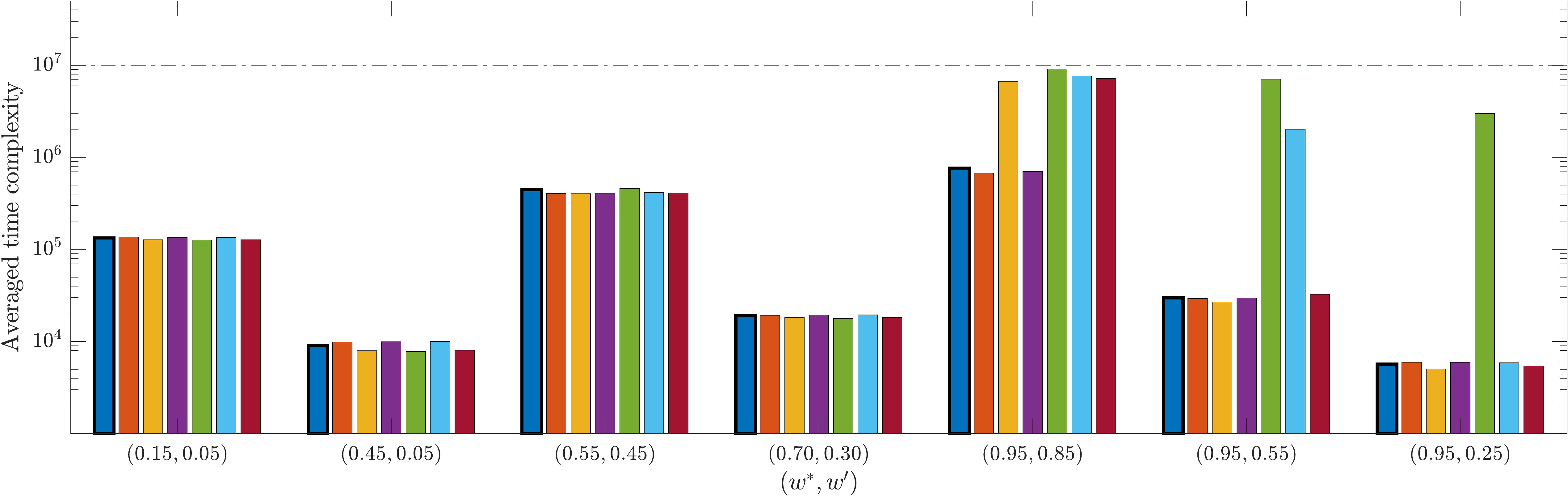} \\ 
        \makebox[\textwidth]{$L=128, ~K=16, ~\delta=0.05, ~\epsilon=0$} \\
        \includegraphics[width = .9\textwidth]{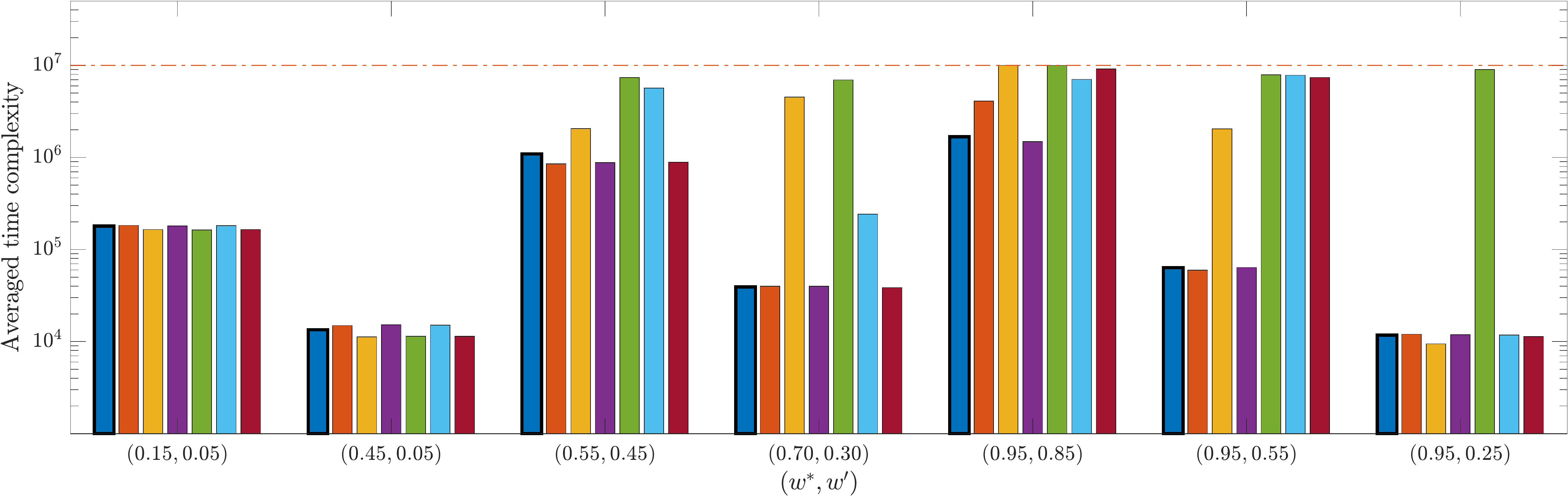} \\ 
        \makebox[\textwidth]{$L=128, ~K=8, ~\delta=0.05, ~\epsilon=0$} \\
        \includegraphics[width = .9\textwidth]{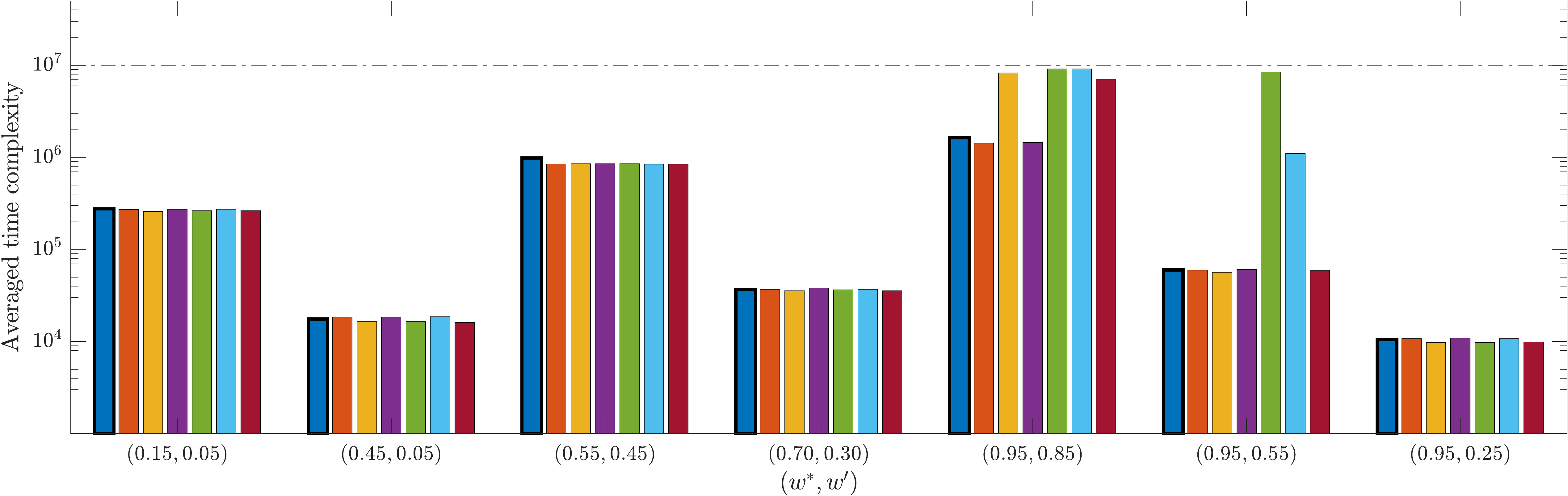} \\ 
       \end{tabular} 
    \end{figure}         

    \begin{figure}[!h]
        \centering
        \begin{tabular}{c}          
        \makebox[\textwidth]{$L=64, ~K=16, ~\delta=0.05, ~\epsilon=0.05$} \\
        \includegraphics[width = .9\textwidth]{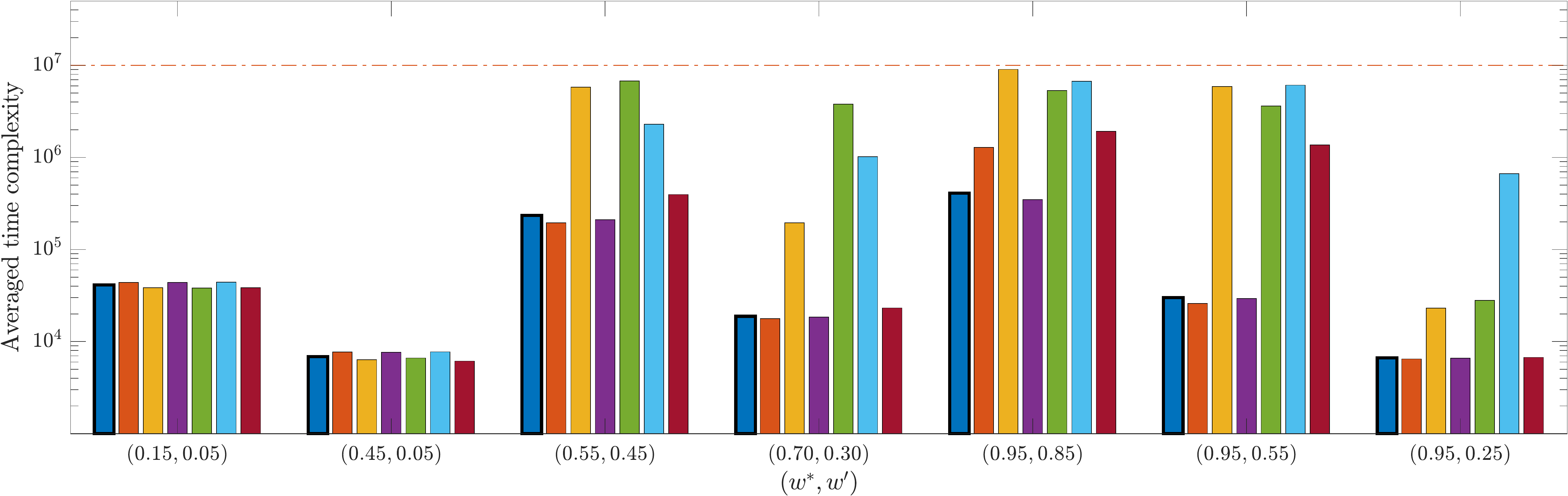} \\ 
        \makebox[\textwidth]{$L=64, ~K=8, ~\delta=0.05, ~\epsilon=0.05$} \\
        \includegraphics[width = .9\textwidth]{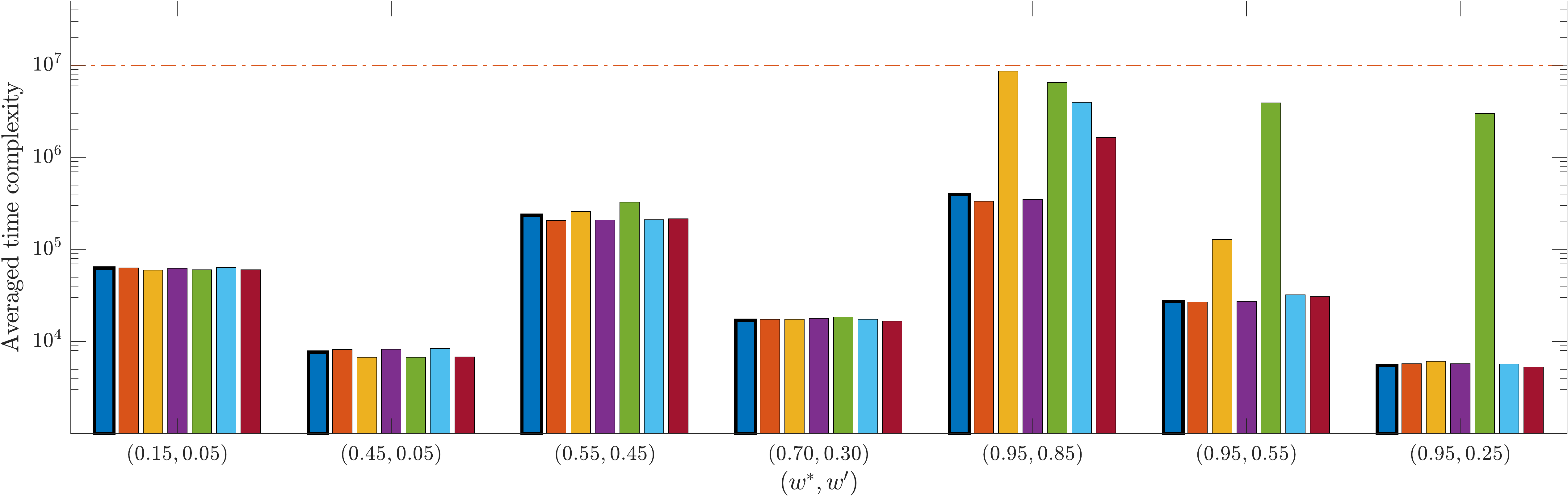} \\ 
        \makebox[\textwidth]{$L=128, ~K=16, ~\delta=0.05, ~\epsilon=0.05$} \\
        \includegraphics[width = .9\textwidth]{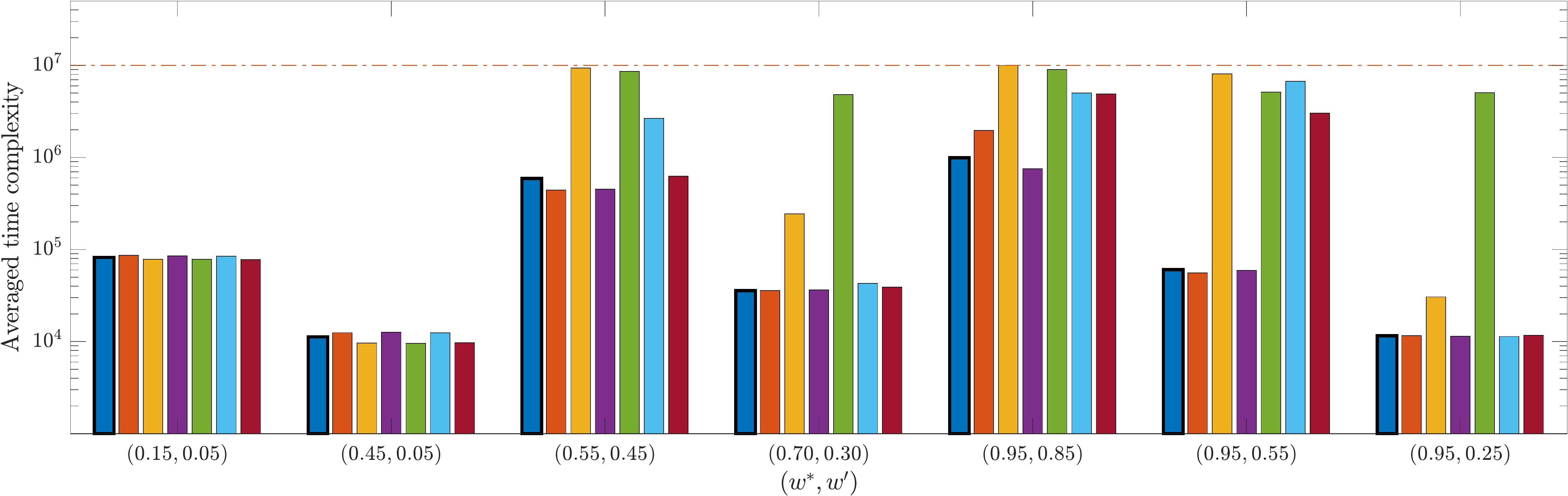} \\ 
        \makebox[\textwidth]{$L=128, ~K=8, ~\delta=0.05, ~\epsilon=0.05$} \\
        \includegraphics[width = .9\textwidth]{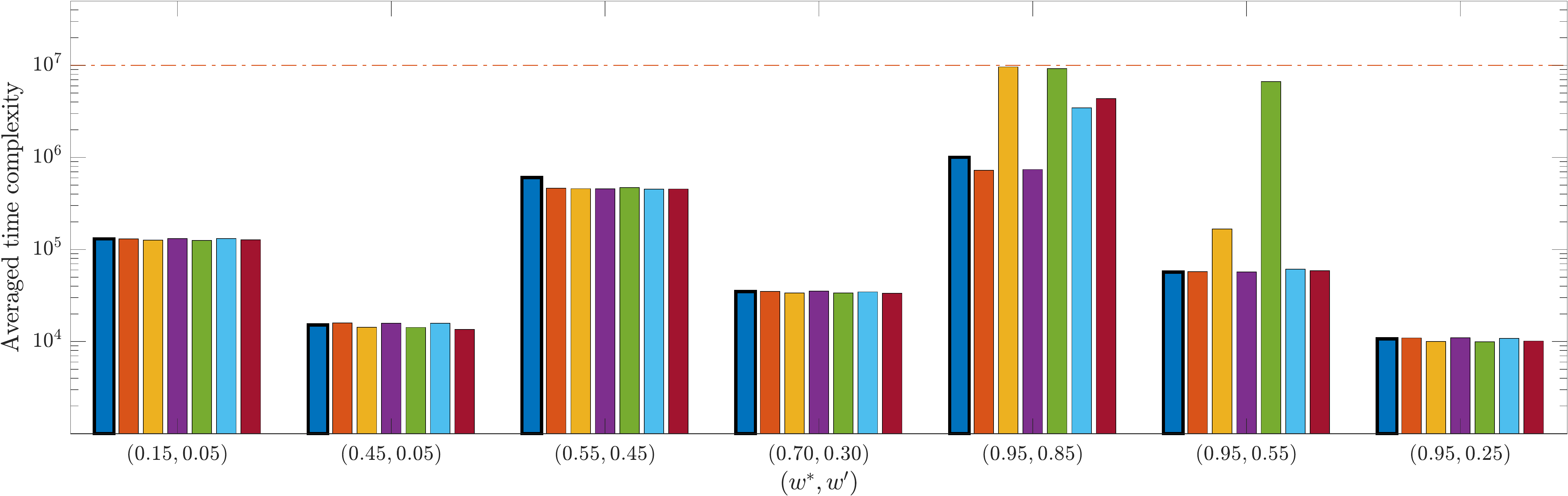}
       \end{tabular} 
		\caption{Average time complexity incurred by different sorting order of $S_t$: ascending order of $T_i(t)$~(Algorithm~\ref{alg:bai_cas_conf}), ascending/descending order of $\hat{\mu}_t(i)/ U_t(i) / L_t(i)$ 
		in the cascading bandits. 
       }\label{pic:compare_order_all} 
    \end{figure}

\clearpage    
After a large amount of observations, it is likely that the empirical mean $\hatw_t(i)$ approaches the true weight $w(i)$, and $w(i)$ lies between the confidence bounds $U_t(i,\delta)$ and $L_t(i,\delta)$ with high probability. 
Therefore, one may consider to sort $S_t$ in the descending or ascending order of $\hatw_t(i)$'s, $U_t(i,\delta)$'s or $L_t(i,\delta)$'s~(the difference to Algorithm~\ref{alg:bai_cas_conf} reveals in Line 5--9). 
Diving into the numerical results, we found an algorithm always manages to find an $\epsilon$-optimal arm provided that it is not terminated by the limit of $10^7$ steps.
Hence, we focus on the comparison of averaged stopping time.

In Figure~\ref{pic:compare_order_all}, we can see that sorting $S_t$ in the ascending order of $\hat{\mu}_t(i)$ or $U_t(i)$, especially the latter one, incurs an apparently larger averaged stopping time than other methods in most cases. Next, the descending order of $\hat{\mu}_t(i)$ does not work well in some cases. 
Thirdly, the ascending order of  $L_t(i)$ performs almost the same as our algorithm in most cases but there are several cases where it performs much worse and does not terminate even after $10^7$ iterations.
Lastly, the descending order of $U_t(i)$ works almost as well as Algorithm~\ref{alg:bai_cas_conf} empirically but is in lack of theoretical guarantee on time complexity. 
Meanwhile, the standard deviation of the stopping time of our algorithm is negligible comparing to the average value. For instance, in the left-most case of Figure~\ref{pic:compare_order}, the standard deviation is about $22318.54$ when the average is about $754140.65$.

\subsection{ Further empirical evidence }

\label{appdix_exp:plot_K_fit}

\begin{figure}[!h]
        \centering
        \begin{tabular}{c}
        \includegraphics[width = .45\textwidth]{K-L128-eps0-delta01---wSet1-Kfit-eps-converted-to.pdf} 
        \includegraphics[width = .45\textwidth]{K-L128-eps0-delta01---wSet3-Kfit-eps-converted-to.pdf}   \\
        \makebox[.45\textwidth]{(a) $w^* = \frac{1}{K}, w'=\frac{1}{K^2}$} 
        \makebox[.45\textwidth]{(b) $w^* = 1-\frac{1}{K^2}, w'=1-\frac{1}{K}$}
        \vspace{1.2em} \\
        
        \includegraphics[width = .45\textwidth]{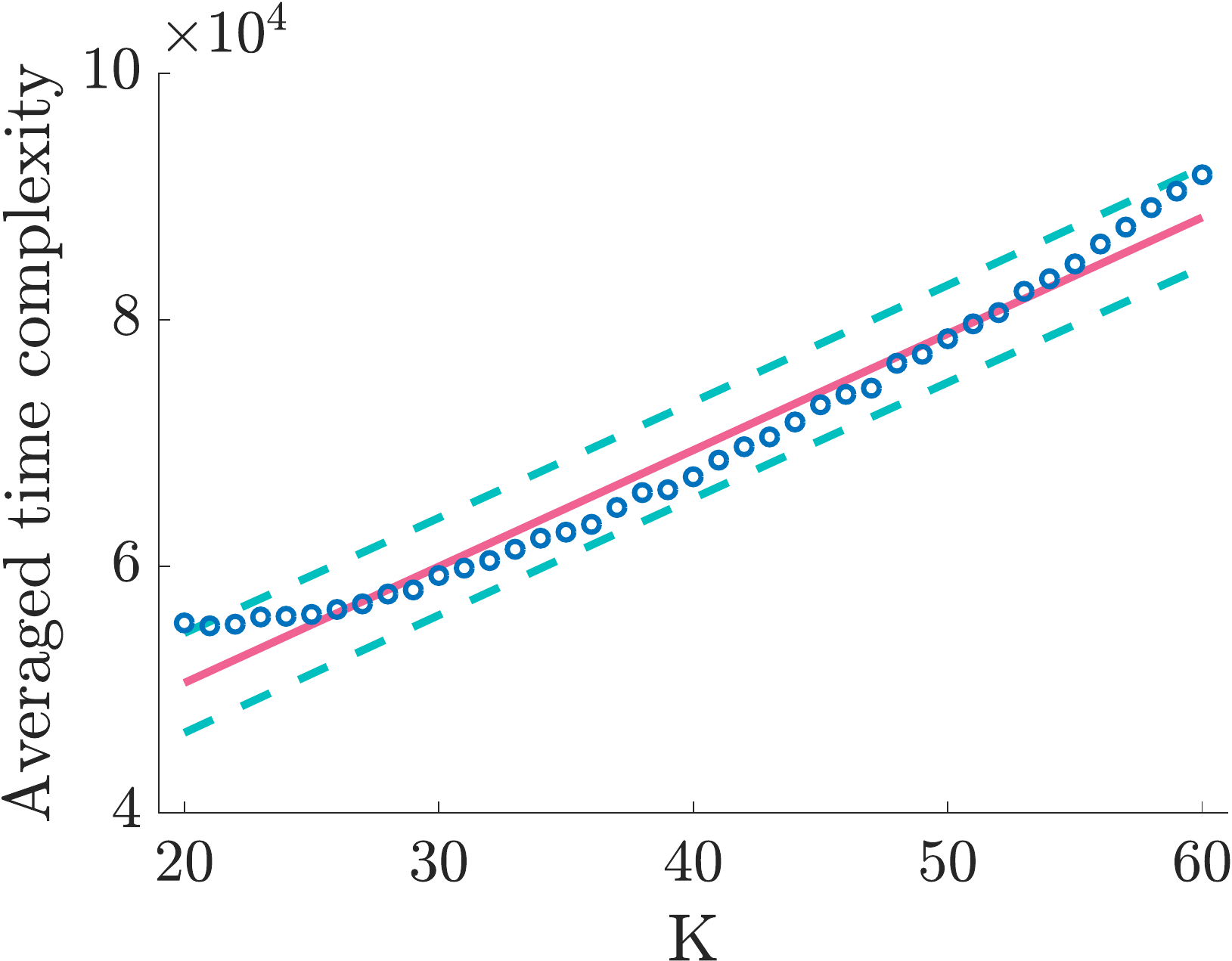} 
        \includegraphics[width = .45\textwidth]{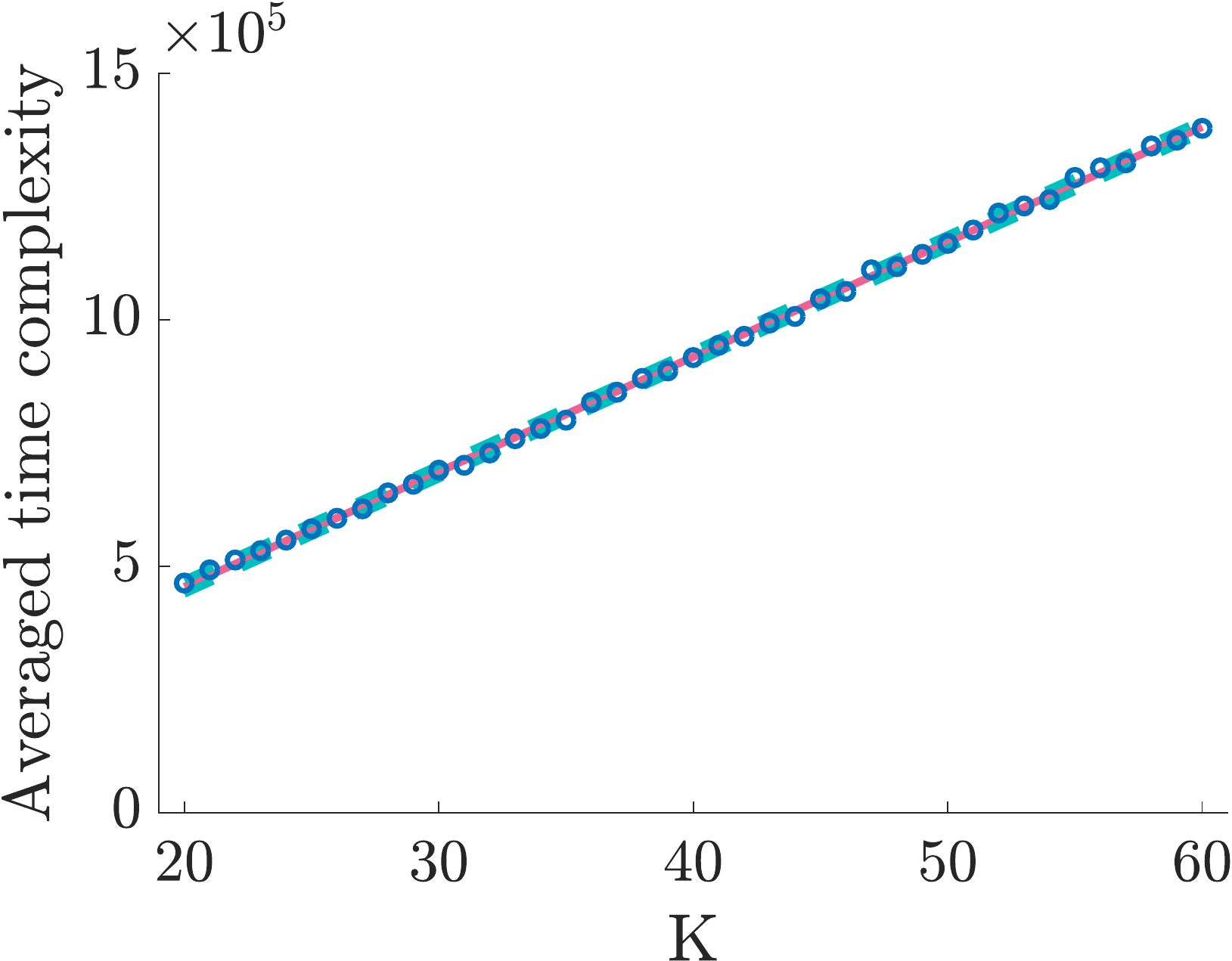}  \\
        \makebox[.45\textwidth]{(c) $w^* = \frac{1}{\sqrt{K}}, w'=\frac{1}{K }$} 
        \makebox[.45\textwidth]{(d) $w^* = 1-\frac{1}{K}, w'=1-\frac{1}{ \sqrt{K} }$}
        \vspace{1.2em} \\

        \includegraphics[width = .45\textwidth]{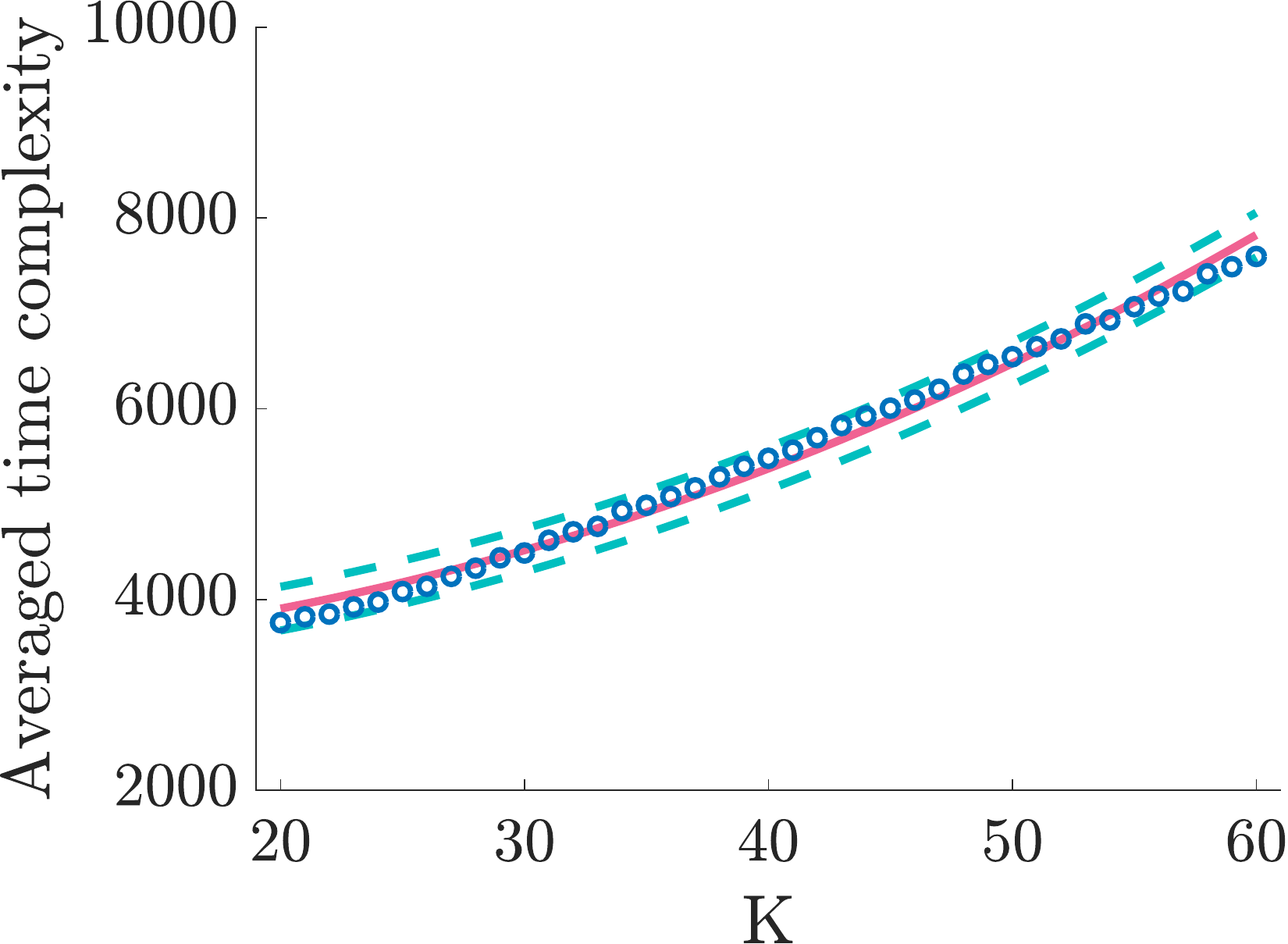}\\
        \makebox[.45\textwidth]{(e) $w^* = 1-\frac{1}{K}, w'= \frac{1}{ K }$}
        \vspace{1.2em} 
       \end{tabular} 
       \caption{Fit the averaged stopping time with functions of $K$ for each case in order. Fix $L=128$, $\delta=0.1$, $\epsilon=0$.  
       Blue dots are the averaged stopping time, red line is the fitted curve, and cyan dashed lines show the $95\%$ confidence interval.}  
\label{pic:K_fit}
\end{figure}

\begin{table*}[ht]
  \centering
  \caption{Fitted results of upper bounds on the stopping time $\calT$ of Algorithm~\ref{alg:bai_cas_conf} with 
  $\epsilon=0$~(Proposition~\ref{prop:conf_upbd_two_prob}).} 
	\begin{tabular}{cccccccc}
	  No. & $w^*$     & $w'$ & Fitting model & $c_1$ & $c_2$ &  $R^2$-statistic & $p$-value \\
	  \hline \\[-.8em]
	  1 & ${1}/{K}$ & ${1}/{K^2}$ &  $c_1 K + c_2$  
	  	& $23802.95$ 	& $67400.19$ 	& $0.9988$ 	& $1.39\times 10^{-58}$ 
        \\[.5em]
      2 & $1-{1}/{K^2 }$ & $ 1-{1}/{K }$ &  $c_1 K^2 + c_2$  
      	& $21615.50$ 	& $2007597.07$ 	& $0.9987$ 	& $1.29\times 10^{-57}$
        \\[.5em]
      3 & ${1}/{\sqrt{K} }$ & ${1}/{K }$ &  $c_1 K + c_2$ 
      	& $944.82$ 	& $31626.49$ 	& $0.9729$ 	& $3.58\times 10^{-32}$ 
        \\[.5em]
      4 & $1-{1}/{K }$ & $ 1-{1}/{\sqrt{K} }$ &  $c_1 K + c_2$   
      	& $23343.29$ 	& $8823.27$ 	& $0.9995$ 	& $3.00\times 10^{-65}$ 
        \\[.5em]
      5 & $1-{1}/{K }$ & $ {1}/{K }$ &  $c_1 K^2 + c_2$
      	& $1.22$ 	& $3414.56$ 	& $0.9917$ 	& $3.03\times 10^{-42}$ 
	    \end{tabular}%
	  \label{tab:K_set_approx_result}%
	\end{table*}

As shown in Table~\ref{tab:K_set_approx_result}, $p$-value is the probability that we reject the assumption of our fitting model versus a constant model~\citep{glantz1990primer}. Hence, the small $p$-values indicates that our fitting models are reasonable.
Next, all $c_1$'s are positive, implying all averaged stopping time grows with $K$, which corroborates our theoretical results.

\end{document}